\newcommand{\vect}[1]{\boldsymbol{\mathbf{#1}}}
\def\abovestrut#1{\rule[0in]{0in}{#1}\ignorespaces}
\def\belowstrut#1{\rule[-#1]{0in}{#1}\ignorespaces}
\def\abovespace{\abovestrut{0.20in}}
\def\belowspace{\belowstrut{0.10in}}
\begin{document}

\title{How to Center Binary Deep Boltzmann Machines}

\author{\name Jan Melchior \email Jan.Melchior@ini.rub.de \\ 
		\addr Theory of Neural Systems \\
		Institut f\"ur Neuroinformatik \\
		Ruhr Universit\"at Bochum\\
		44780 Bochum, Germany
\AND 
		\name Asja Fischer \email Asja.Fischer@rub.de \\ 
		\addr Theory of Machine Learning \\
		Institut f\"ur Neuroinformatik \\
		Ruhr Universit\"at Bochum\\
		44780 Bochum, Germany 
\AND 
		\name Laurenz Wiskott \email Laurenz.Wiskott@ini.rub.de \\ 
		\addr Theory of Neural Systems \\
		Institut f\"ur Neuroinformatik \\
		Ruhr Universit\"at Bochum\\
		44780 Bochum, Germany
}

\maketitle

\begin{abstract} 
This work analyzes centered binary Restricted Boltzmann Machines (RBMs) and binary Deep Boltzmann Machines (DBMs), where centering is done by subtracting offset values from visible and hidden variables. 
We show analytically that (i) centering results in a different but equivalent parameterization for artificial neural networks in general, (ii) the expected performance of centered binary RBMs/DBMs is invariant under simultaneous flip of data and offsets, for any offset value in the range of zero to one, (iii) centering can be reformulated as a different update rule for normal binary RBMs/DBMs, and (iv) using the enhanced gradient is equivalent to setting the offset values to the average over model and data mean. 
Furthermore, numerical simulations suggest that (i) optimal generative performance is achieved by subtracting mean values from visible as well as hidden variables, (ii) centered RBMs/DBMs reach significantly higher log-likelihood values than normal binary RBMs/DBMs, (iii) centering variants whose offsets depend on the model mean, like the enhanced gradient, suffer from severe divergence problems, (iv) learning is stabilized if an exponentially moving average over the batch means is used for the offset values instead of the current batch mean, which also prevents the enhanced gradient from diverging, (v) centered RBMs/DBMs reach higher LL values than normal RBMs/DBMs while having a smaller norm of the weight matrix,
(vi) centering leads to an update direction that is closer to the natural gradient and that the natural gradient is extremly efficient for training RBMs, (vii) centering dispense the need for greedy layer-wise pre-training of DBMs, (viii) furthermore we show that pre-training often even worsen the results independently whether centering is used or not, and (ix) centering is also beneficial for auto encoders.
\end{abstract} 

\begin{keywords}
  centering, 
  Boltzmann machines, 
  artificial neural networks,
  generative models, 
  auto encoders, 
  contrastive divergence,
  enhanced gradient,
  natural gradient
\end{keywords}

\section{Introduction}

In the last decade Restricted Boltzmann Machines (RBMs) got into the focus of attention because they can be considered as building blocks of deep neural networks~\citep{HintonOsinderoEtAl-2006a, Bengio-2009a}. 
RBM training methods are usually based on gradient ascent on the log-Likelihood (LL) of the model parameters given the training data. 
Since the gradient is intractable, it is often approximated using Gibbs sampling only for a few steps~\citep{HintonOsinderoEtAl-2006a, Tieleman-2008a, TielemanHinton-2009a}

Two major problems have been reported when training RBMs.
Firstly, the bias of the gradient approximation introduced by using only a few steps of Gibbs sampling may lead to a divergence of the LL during training \citep{FischerIgel-2010a, Schulz2010}. 
To overcome the divergence problem, \citet{DesjardinsCourvilleEtAl-2010a} and \citet{cho:10} have proposed to use parallel tempering, which is an advanced sampling method that leads to a faster mixing Markov chain and thus to a better approximation of the LL gradient. 

Secondly, the learning process is not invariant to the data representation. For example training an RBM on the \emph{MNIST} dataset leads to a better model than training it on \emph{1-MNIST} (the dataset generated by flipping each bit in \emph{MNIST}).  
This is due to missing invariance properties of the gradient with respect to these flip transformations and not due to the model's capacity, since an RBM trained on \emph{MNIST} can be transformed in such a way  that it models \emph{1-MNIST} with the same LL. 
Recently, two approaches have been introduced that address the invariance problem. 
The enhanced gradient \citep{ChoRaikoEtAl-2011a,ChoRaikoEtAl-2013} has been designed as an invariant alternative to the true LL gradient of binary RBMs and has been derived by calculating a weighted average over the gradients one gets by applying any possible bit flip combination on the dataset. 
Empirical results suggest that the enhanced gradient leads to more distinct features and thus to better classification results based on the learned hidden representation of the data. 
Furthermore, in combination with an adaptive learning rate the enhanced gradient leads to more stable training in the sense that good LL values are reached independently of the initial learning rate. 
 \citet{TangSutskever-2011}, on the other hand have shown empirically that subtracting the data mean from the visible variables leads to a model that can reach similar LL values on the \emph{MNIST} and the \emph{1-MNIST} dataset and comparable results to those of the enhanced gradient.\footnote{Note, that changing the model such that the mean of the visible variables is removed is not equivalent to just removing the mean of the data.}
Removing the mean from all variables is known as the `centering trick' which was originally proposed for feed forward neural networks~\citep{LeCun1998,Schraudolph98}. 
It has recently also been applied to the visible and hidden variables of Deep Boltzmann Machines~(DBM)~\citep{MontavonMueller-2012} where it has been shown to lead to an initially better conditioned optimization problem. 
Furthermore, the learned features have shown better discriminative properties and centering has improved the generative properties of locally connected DBMs.
A related approach applicable to multi-layer perceptrons where the activation functions of the neurons are transformed to have zero mean and zero slope on average was proposed by \citet{RaikoValpolaEtAl-2012}. 
The authors could show that the gradient under this transformation became closer to the natural gradient, which is desirable since the natural gradient follows the direction of steepest ascent in the manifold of probability distributions. 
Furthermore, the natural gradient is independent of the concrete parameterization of the distributions and is thus clearly the update direction of choice~\citep{Amari-1998}. However, it is intractable already for rather small RBMs. 
\citet{Schwehn-2010} and \citet{OllivierArnoldEtAl-2013} trained binary RBMs and \citet{DesjardinsPascanuEtAl-2013} binary DBMs using approximations of the natural gradient obtained by Markov chain Monte Carlo methods. 
Despite the theoretical arguments for using the natural gradient, the authors concluded that the computational overhead is extreme and it is rather questionable that the natural gradient is efficient for training RBMs or DBMs. In this work we show that the natural gradient, if tractable is extremely efficient for training RBMs and that centering leads to an update direction that is closer to the natural gradient. This was already part of the previous version in \citep{Fischer2014} and was recently confirmed by \citet{Grosse2015}.
Another related contribution lately proposed by \citet{Ioffe2015} for feed forward neural networks (batch normalization) aims to remove first and second order statistics in the network.

In this work\footnote{Previous versions of this work have been published as eprint~\citep{Melchior2013} and as part of the thesis~\citep{Fischer2014}.}
 we give a unified view on centering where we analysis in particular the properties and performance of centered binary RBMs and DBMs.
We begin with a brief overview over binary RBMs, the standard learning algorithms, the natural gradient of the LL of RBMs, and the basic ideas used to construct the enhanced gradient in Section~\ref{sec:RBMintro}. 
In Section~\ref{sec:CenteredRBMs}, we discuss the theoretical properties of centered RBMs, show that centering can be reformulated as a different update rule for normal binary RBMs, that the enhanced gradient is a particular form of centering and finally that centering in RBMs and its properties naturally extend to DBMs. Furthermore, in Section~\ref{sec:generalization}, we show that centering is an alternative parameterization for arbitrary Artificial Neural Networks (ANNs) in general and we discusses how the parameters of centered and normal binary ANNs should be initialized. 
Our experimental setups are described in Section~\ref{sec:Experiments} before we empirically analyze the performance of centered RBMs with different initializations, offset parameters, sampling methods, and learning rates in Section~\ref{sec:Results}. The empirical analysis includes experiments on 10 real world problems, a comparison of the centered gradient with the natural gradient, and experiments on Deep Boltzmann machines and Auto encoders (AEs).
Finally our work is concluded in Section~\ref{sec:conclusion}.

\section{Restricted Boltzmann Machines}\label{sec:RBMintro}
An RBM \citep{Smolensky-1986a} is a bipartite undirected graphical 
model with a set of \emph{N} visible and \emph{M} hidden variables taking values 
$\vect{x} = (x_1, ... ,x_N)$ 
and 
$\vect{h} = (h_1, ... ,h_M)$,
respectively.
Since an RBM is a Markov random field, its joint probability distribution is given by a Gibbs distribution
\begin{eqnarray}
p\left( \vect{x},\vect{h} \right) &=& \frac{1}{Z}\mathrm{e}^{-E(\vect{x},\vect{h})}\enspace,\nonumber
\end{eqnarray}
with partition function $Z$ and energy $E(\vect{x},\vect{h})$. 
For binary RBMs,  
$\vect{x} \in \lbrace 0,1 \rbrace^N$ 
,  
$\vect{h} \in \lbrace 0,1 \rbrace^M$, and
the energy, which defines the bipartite structure, is given by
\begin{eqnarray}
  E\left( \vect{x}, \vect{h} \right)
  &=&
  	- \vect{x}^T\vect{b} - \vect{c}^T\vect{h} - \vect{x}^T\vect{W}\vect{h}\enspace,\nonumber
\end{eqnarray}
where the weight matrix $\vect{W}$, the visible bias vector $\vect{b}$ and the hidden bias vector $\vect{c}$ are the parameters of the model, jointly denoted by $\vect{\theta}$. 
The partition function which sums over all possible visible and hidden states is given by 
\begin{eqnarray}
Z &=& {\sum_{\vect{{x}}} \sum_{\vect{{h}}} {\mathrm{e}^{-E\left(\vect{{x}},\vect{{h}} \right)}}}\enspace.\nonumber
\end{eqnarray}

RBM training is usually based on gradient ascent using approximations of the LL gradient
\begin{eqnarray}
	\nabla\vect{\theta}
	  &=&
	\frac{
	\partial \left\langle \log \left( p(\vect{x} | \vect{\theta}) \right) \right\rangle_d
	}{
		  \partial \vect{\theta}
	  } 	   
	  =
	  -\left\langle
 			\frac{\partial E( \vect{x},\vect{h} )} 
 				{ \partial \vect{\theta}}\right\rangle_d
	  + \left\langle
 			\frac{\partial E( \vect{x},\vect{h} )} 
 				{ \partial \vect{\theta}}\right\rangle_m \enspace,\nonumber
\end{eqnarray}
where $\left\langle \cdot \right\rangle_m$ is the expectation under $p(\vect{h},\vect{x})$ and 
$\left\langle \cdot \right\rangle_d$ is the expectation under $p(\vect{h}|\vect{x})p_e(\vect{x})$ with empirical distribution $p_e$. 
We use the notation $\nabla\vect{\theta}$ for the derivative of the LL with respect to $\vect{\theta}$ in order to be consistent with the notation used by \citet{ChoRaikoEtAl-2011a}.
For binary RBMs the gradient becomes  
\begin{eqnarray}
	\nabla\vect{W}
	  &=&  \langle \vect{x}\vect{h}^T\rangle_d - \langle \vect{x}\vect{h}^T\rangle_m
	\enspace ,\nonumber\\
	\nabla\vect{b}
	  &=&  \langle \vect{x}\rangle_d - \langle \vect{x}\rangle_m
	\enspace ,\nonumber\\
	\nabla\vect{c}
	  &=&  \langle \vect{h}\rangle_d - \langle \vect{h}\rangle_m
	\enspace\nonumber.
\end{eqnarray}
Common RBM training methods approximate $\left\langle \cdot \right\rangle_m$ by samples gained by different Markov chain Monte Carlo methods. 
Sampling $k$ (usually $k=1$) steps from a Gibbs chain initialized with a data sample yields the Contrastive Divergence (CD)~\citep{HintonOsinderoEtAl-2006a} algorithm. 
In stochastic maximum likelihood~\citep{Younes-1991}, in the context of RBMs also known as Persistent Contrastive Divergence (PCD)~\citep{Tieleman-2008a}, the chain is not reinitialized with a data sample after parameter updates. 
This has been reported to lead to  better gradient approximations if the learning rate is chosen sufficiently small. 
Fast Persistent Contrastive Divergence (FPCD) ~\citep{TielemanHinton-2009a} tries to further speed up learning by introducing an additional set of parameters, which is only used for Gibbs sampling during learning.
The advanced sampling method Parallel Tempering (PT) introduces additional `tempered'
Gibbs chains corresponding to smoothed versions of $p(\vect x,\vect h)$. The energy of these distributions is multiplied by $\frac{1}{T}$, where $T$ is referred to as temperature. 
The higher the temperature of a chain is, the `smoother'  the corresponding distribution and the faster the chain mixes. 
Samples may swap between chains with a probability given by the Metropolis Hastings ratio, which leads to better mixing of the original chain (where $T=1$). 
We use PT$_{c}$ to denote the RBM training algorithm that uses Parallel Tempering with $c$ tempered chains as a sampling method.
Usually only one step of Gibbs sampling is performed in each tempered chain before allowing samples to swap, and a deterministic even odd algorithm~\citep{MartinDenschlagEtAl-2009} is used as a swapping schedule.
PT$_{c}$ increases the mixing rate and has been reported to achieve better gradient approximations than CD and (F)PCD ~\citep{DesjardinsCourvilleEtAl-2010a,cho:10} with the drawback of having a higher computational cost.

See the introductory paper of \citet{fischer:14} for a recent review of RBMs and their training algorithms.

\subsection{Enhanced Gradient}\label{sec:Enhanced_Gradient}

\citet{ChoRaikoEtAl-2011a} proposed a different way to update  parameters during training of binary RBMs, which is invariant to the data representation. 

When transforming the state $(\vect{x},\vect{h})$ of a binary RBM by flipping some of its variables (that is $\tilde{x_i}=1-x_i$ and $\tilde{h_j}=1-h_j$ for some $i,j$), yielding a new state $(\vect{\tilde{x}},\vect{\tilde{h}})$, one can transform the parameters $\vect{\theta}$ of the RBM to $\vect{\tilde{\theta}}$ such that $E(\vect{x},\vect{h}|\vect{\theta}) = E(\vect{\tilde{x}},\vect{\tilde{h}}|\vect{\tilde{\theta}}) + const $ and thus $p(\vect{x},\vect{h}|\vect{\theta}) = p(\vect{\tilde{x}},\vect{\tilde{h}}|\vect{\tilde{\theta}})$ holds. 
However, if we update the parameters of the  transformed model based on the corresponding LL gradient to $\vect{\tilde{\theta}}' = \vect{\tilde{\theta}} + \eta \nabla \vect{\tilde{\theta}}$ and apply the inverse parameter transformation to $\vect{\tilde{\theta}}'$, the result will differ from $\vect{\theta}' = \vect{\theta} + \eta \nabla \vect{\theta}$. 
The described procedure of transforming, updating, and transforming back can be regarded as a different way to update $\vect \theta$.

Following this line of thought there exist $2^{N+M}$ different parameter updates corresponding to the $2^{N+M}$ possible binary flips of $(\vect{x},\vect{h})$.  
\citet{ChoRaikoEtAl-2011a} proposed the enhanced gradient as a weighted sum of these  $2^{N+M}$  parameter updates, which for their choice of weighting is given by
\begin{eqnarray}
\nabla_e \vect{W}
	 &=&
	  \langle ( \vect{x} - \langle  \vect{x} \rangle_{d} ) (\vect{h} - \langle  \vect{h} \rangle_{d})^T \rangle_{d} - \langle ( \vect{x} - \langle  \vect{x} \rangle_{m} ) (\vect{h} - \langle  \vect{h} \rangle_{m})^T \rangle_{m}
	\enspace,  \label{enhanced_W}  \\
\nabla_e \vect{b}
	  &=&
	  \langle  \vect{x} \rangle_{d} - \langle  \vect{x} \rangle_{m} - \nabla_e \vect{W} \frac{1}{2} \left(\langle  \vect{h} \rangle_{d} + \langle  \vect{h} \rangle_{m} \right)
	\enspace, \label{enhanced_b} \\ 
\nabla_e \vect{c}
	  &=&
	  \langle \vect{h}  \rangle_{d} - \langle \vect{h} \rangle_{m} - {\nabla_e \vect{W}}^T \frac{1}{2} \left(\langle  \vect{x} \rangle_{d} + \langle  \vect{x} \rangle_{m} \right)\enspace. \label{enhanced_c}  
\end{eqnarray}
It has been shown that the enhanced gradient is invariant to arbitrary bit flips of the variables and therefore invariant under the data representation, which has been demonstrated on the \emph{MNIST} and \emph{1-MNIST} dataset. 
Furthermore, the authors reported more stable training under various settings in terms of the LL estimate and classification accuracy.

\subsection{Natural Gradient}\label{sec:theoNatGrad}

Following the direction of steepest ascent in the Euclidean parameter space 
(as given by the standard gradient) does not necessarily correspond to the direction of steepest ascent in the manifold of probability distributions $\{p(\vect{x} | \vect{\theta}), \vect{\theta} \in \Theta\}$, which we are actually interested in. 
To account for the local geometry of the manifold, the Euclidean metric should be replaced by the Fisher information metric defined by 
$|| \vect{\theta}  ||_{\mathcal{I}(\vect\theta)} = \sqrt{\sum \theta_k \mathcal{I}_{kl}\left( \vect \theta \right) \theta_l}$, where $\mathcal{I}(\vect\theta)$ is the 
Fisher information matrix \citep{Amari-1998}.
The $kl$-th entry of the Fisher information matrix for a parameterized distribution $p(\vect{x} | \vect{\theta})$ is given by
\begin{eqnarray}
	\vect{ \mathcal{I}}_{kl}\left( \theta \right)
	  &=&
	  \left\langle
	  \left( 
	\frac{
	\partial  \log \left( p(\vect{x} | \vect{\theta}) \right)
	}{
		  \partial \theta_k
	  } 	
	  \right)
	  \left(  
\frac{
	\partial  \log \left( p(\vect{x} | \vect{\theta}) \right)
	}{
		  \partial \theta_l
	  } 	  
	  \right)
	  \right\rangle_m \enspace \nonumber,
\end{eqnarray}
where $\langle \cdot \rangle_m$ denotes the expectation under $p(\vect{x} | \vect{\theta})$.
The gradient associated with the Fisher metric is called the natural gradient and is given by
\begin{eqnarray}
	\nabla_n\vect{\theta}
	  &=& \vect{ \mathcal{I}}\left( \theta \right)^{-1} \nabla\vect{\theta} \enspace \nonumber.
\end{eqnarray}
The natural gradient points in the direction $\delta \vect \theta$ achieving the largest change of the objective function (here the LL) for an infinitesimal small distance $\delta \vect \theta$ between $p(\vect{x} | \vect{\theta})$ and $p(\vect{x} | \vect{\theta +\delta \vect \theta })$ in terms of the Kullback-Leibler divergence \citep{Amari-1998}. 
This makes the natural gradient independent of the parameterization including the invariance to flips of the data as a special case. 
Thus, the natural gradient is clearly the update direction of choice.

For binary RBMs the entries of the Fisher information matrix~\citep{AmariKurataEtAl-1992,DesjardinsPascanuEtAl-2013,OllivierArnoldEtAl-2013} are given by
\begin{eqnarray}
	\vect{ \mathcal{I}}_{w_{ij},w_{uv}}\left( \theta \right) = \vect{ \mathcal{I}}_{,w_{uv},w_{ij}}\left( \theta \right)
	  &=&
	  \langle x_i h_j x_u h_v \rangle_m - \langle x_u h_v \rangle_m \langle x_u h_v \rangle_m \nonumber\\
	  &=&
	  Cov_m \left( x_i h_j , x_u h_v \right)  \enspace,\nonumber\\
	\vect{ \mathcal{I}}_{w_{ij},b_{u}}\left( \theta \right) = \vect{ \mathcal{I}}_{b_{u}, w_{ij}}\left( \theta \right)
	  &=&
	  Cov_m \left( x_i h_j , x_u \right) \enspace,\nonumber\\
\vect{ \mathcal{I}}_{w_{ij},c_{v}}\left( \theta \right) = \vect{ \mathcal{I}}_{c_{v}, w_{ij}}\left( \theta \right)
	  &=&
	  Cov_m \left( x_i h_j , h_v \right)\enspace, \nonumber\\
	\vect{ \mathcal{I}}_{b_{i},b_{u}}\left( \theta \right) = \vect{ \mathcal{I}}_{b_{u},b_{i}}\left( \theta \right)
	&=&
	  Cov_m \left( x_i  , x_u \right)\enspace, \nonumber\\
	\vect{ \mathcal{I}}_{c_{j},c_{v}}\left( \theta \right) = \vect{ \mathcal{I}}_{c_{v},c_{j}}\left( \theta \right)
	  &=&
	  Cov_m \left( h_j  , h_v \right) \enspace\nonumber.
\end{eqnarray}
Since these expressions involve expectations under the model distribution they are not tractable in general, but can be approximated using MCMC methods \citep{OllivierArnoldEtAl-2013,DesjardinsPascanuEtAl-2013}. 
Furthermore, a diagonal approximation of the Fisher information matrix could be used. 
However, the approximation of the natural gradient is still computationally very expensive so that the practical usability remains questionable \citep{DesjardinsPascanuEtAl-2013}.

\section{Centered Restricted Boltzmann Machines}\label{sec:CenteredRBMs}

Inspired by the centering trick proposed by \citet{LeCun1998}, \citet{TangSutskever-2011} have addressed the flip-invariance problem by changing the energy of the RBM in a way that the mean of the input data is removed. 
\citet{MontavonMueller-2012} have extended the idea of centering to the visible and hidden variables of DBMs and have shown that centering improves the conditioning of the underlying optimization problem, leading to models with better discriminative properties for DBMs in general and better generative properties in the case of locally connected DBMs.

Following their line of thought, the energy for a centered binary RBM where the visible and hidden variables are shifted by the offset parameters $\vect{\mu} = (\mu_1,\dots ,\mu_N)$ and $\vect{\lambda} = (\lambda_1,\dots ,\lambda_M)$, respectively, can be formulated as 
\begin{eqnarray}
  E\left( \vect{x}, \vect{h} \right)
  &=&
  	- \left(\vect{x} - \vect{\mu} \right)^T\vect{b}
  	- \vect{c}^T\left(\vect{h} - \vect{\lambda} \right)
  	- \left(\vect{x} - \vect{\mu} \right)^T\vect{W}\left(\vect{h} - \vect{\lambda} \right)
  \label{eqn:energy}\enspace.
\end{eqnarray}
By setting both offsets to zero one retains the normal binary RBM. 
Setting $\vect{\mu} = \langle \vect{x}  \rangle_{d}$ and $\vect{\lambda} = \vect{0}$ leads to the model introduced by \citet{TangSutskever-2011}, and by setting  $\vect{\mu} = \langle \vect{x}  \rangle_{d}$ and $\vect{\lambda} = \langle \vect{h}  \rangle_{d}$ we get a shallow variant of the centered DBM analyzed by \citet{MontavonMueller-2012}. 

The conditional probabilities for a variable taking the value one are given by
\begin{eqnarray}
p\left(x_i = 1 | \vect{h}\right)
	 & = &
	   \sigma(\vect{w_{i*}}\left(\vect{h}-\vect{\lambda}\right) + b_i)\enspace ,
	\label{eqn:cond_x} \\
p\left(h_j = 1 | \vect{x}\right)
	 & = &
	  \sigma(\left(\vect{x}-\vect{\mu}\right)^T \vect{w_{*j}} + c_j)\enspace,\label{eqn:cond_h} 
\end{eqnarray}
where $ \sigma\left(\cdot\right)$ is the sigmoid function, $\vect{w_{i*}}$ represents the $i$th row, and $\vect{w_{*j}}$ the $j$th column of the weight matrix $\vect{W}$.

The LL gradient now takes the form
\begin{eqnarray}
\nabla \vect{W}
	 & =  &
	  \langle ( \vect{x} - \vect{\mu} ) (\vect{h} - \vect{\lambda})^T \rangle_{d}  
	  - \langle ( \vect{x} - \vect{\mu} ) (\vect{h} - \vect{\lambda})^T \rangle_{m} \enspace,
	\label{eqn:grad_W} \\
\nabla \vect{b}
	  & =& 
	  \langle  \vect{x} - \vect{\mu}  \rangle_{d} - \langle  \vect{x} - \vect{\mu}  \rangle_{m} = \langle  \vect{x} \rangle_{d} - \langle  \vect{x} \rangle_{m}\enspace,
	\label{eqn:grad_b} \\
\nabla \vect{c}
	  &  =  &
	  \langle \vect{h} - \vect{\lambda}  \rangle_{d} - \langle \vect{h} - \vect{\lambda} \rangle_{m}
	  =
	  \langle \vect{h}  \rangle_{d} - \langle \vect{h} \rangle_{m}\enspace. 
	\label{eqn:grad_c} 
\end{eqnarray}
$\nabla \vect{b}$ and $\nabla \vect{c}$ are independent of the choice of $ \vect{\mu} $ and $\vect{\lambda}$ and thus centering only affects $\nabla \vect{W}$.  
It can be shown (see  Appendix~\ref{sec:ProofInv}) that the gradient of a centered RBM is invariant to flip transformations if a flip of $x_i$ to $1-x_i$ implies a change of $\mu_i$ to $1-\mu_i$, and a flip $h_j$ to $1-h_j$ implies a change of $\lambda_j$ to $1-\lambda_j$. 
This obviously holds for $\mu_i = 0.5$, $\lambda_j=0.5$ but also for any expectation value over $x_i$ and $h_j$ under any distribution. 
Moreover, if the offsets are set to an expectation centered RBMs get also invariant to shifts of variables (see Section~\ref{sec:generalization}). Note that the properties of centered RBMs naturally extend to centered DBMs (see Section~\ref{sec:dbm}).

If we set $\vect{\mu} $ and $\vect{\lambda}$ to the expectation values of the variables, these values may depend on the RBM parameters (think for example about $ \langle \vect{h}  \rangle_{d}$) and thus they might change during training. 
Consequently, a learning algorithm for centered RBM needs to update the offset values 
to match the expectations under the distribution that has changed with a parameter update.
When updating the offsets one needs to transform the RBM parameters such that the modeled probability distribution stays the same.
An RBM with offsets $\vect{\mu}$ and $\vect{\lambda}$ can be transformed to an RBM with offsets $\vect{\mu}'$ and $\vect{\lambda}'$ by
\begin{eqnarray}
  \vect{W}' &=& \vect{W}\enspace,
  \label{eqn:transform_W}\\
  \vect{b}' &=& \vect{b} + \vect{W} \left(\vect{\lambda}' - \vect{\lambda}\right)\enspace,
  \label{eqn:transform_b}\\
  \vect{c}' &=& \vect{c} + \vect{W}^T \left(\vect{\mu}' - \vect{\mu}\right)\enspace,
  \label{eqn:transform_c}
\end{eqnarray}
such that $E(\vect{x},\vect{h}|\vect{\theta},\vect{\mu},\vect{\lambda}) = E(\vect{x},\vect{h}|\vect{\theta}',\vect{\mu}',\vect{\lambda}') + const $, is guaranteed. 
Obviously, this can be used to transform a centered RBM to a normal RBM and vice versa, highlighting that centered and normal RBMs are just different parameterizations of the same model class.

If the intractable model mean is used for the offsets, they have to be approximated by samples.
Furthermore, when  $\vect \lambda$ is chosen to be $\langle \vect h \rangle_d$ or $\langle\vect  h \rangle_m$ or 
when  $\vect \mu$ is chosen to be $\langle \vect x \rangle_m$ 
one could either approximate the mean values using the sampled states or the corresponding conditional probabilities. 
But due to the Rao–-Blackwell theorem an estimation based on the probabilities has lower variance and therefore is the approximation of choice\footnote{This can be proven analogously to the proof of proposition 1 in the work of \citet{SwerskyChenEtAl-2010a}.}.

Algorithm~\ref{alg:algorithm2} shows pseudo code for training a centered binary RBM, where we use
 $\langle \cdot \rangle$ to denote the average over samples from the current batch. 
Thus, for example, we write $\langle \vect x_d \rangle$ for  the average value of data samples $\vect x_d$ in the current batch, which is used as an approximation for the expectation of $\vect x$ under the data distribution that is $\langle x \rangle_d$. Similarly, $\langle \vect h_d \rangle$ approximates $\langle \vect h \rangle_d$ using the hidden samples $\vect h_d$ in the current batch.

\begin{algorithm}
\caption{Training centered RBMs \label{alg:algorithm2}}
$Initialize~\vect{W}$ \tcc*{$i.e. ~ \vect{W} \gets \mathcal{N}(0,0.01)^{N \times M}$}
$Initialize~\vect{\mu},\vect{\lambda}$ \tcc*{$ i.e. ~\vect{\mu} \gets \langle $data$ \rangle, \vect \lambda \gets \vect {0.5} $} 
$Initialize~\vect{b},\vect{c}$ \tcc*{$ i.e. ~\vect{b} \gets \sigma^{-1}(\vect{\mu}), \vect{c} \gets \sigma^{-1}(\vect{\lambda})$} 
$Initialize~\eta,\nu_{\mu}, \nu_{\lambda}$ \tcc*{$ i.e. ~\eta,\nu_{\mu}, \nu_{\lambda} \in \left\lbrace {0.001,...,0.1} \right\rbrace$} 
\Repeat{stopping criteria is met}
{
\ForEach{\emph{batch \textbf{in} data}}
{
	\ForEach{sample $\vect x_d$ \emph{\textbf{in}} \emph{batch}}
		{
		 $Calculate ~ \vect{h}_d =  p ( h_j = 1 | \vect x_d)$ \tcc*{$\triangleright$ Eq.~\eqref{eqn:cond_h}}
		$Sample ~ \vect{x}_m ~ from\,RBM$ \tcc*{$\triangleright$ Eqs.~\eqref{eqn:cond_x}, \eqref{eqn:cond_h}}
		$Calculate ~ \vect{h}_m =  p (h_j = 1 | \vect x_m)$ \tcc*{$\triangleright$ Eq.~\eqref{eqn:cond_h}}
		$Store ~ \vect{x}_m,\vect{h}_d,\vect{h}_m$
		}
	$Estimate ~ \vect{\mu}_{new}$ \tcc*{$ i.e. ~\vect{\mu}_{new} \gets \langle \vect{x}_d\rangle $}
	$Estimate ~ \vect{\lambda}_{new}$ \tcc*{$ i.e. ~\vect{\lambda}_{new} \gets \langle \vect{h}_d\rangle $} 
	\tcc{Transform parameters with respect to the new offsets}
	$\vect{b} \gets \vect{b} + \nu_{\lambda} \vect{W} \left(\vect{\lambda}_{new} - \vect{\lambda}\right)$ \tcc*{$\triangleright$ Eq.~\eqref{eqn:transform_b}} 
	$\vect{c} \gets \vect{c} + \nu_{\mu} \vect{W}^T \left(\vect{\mu}_{new} - \vect{\mu}\right) $ \tcc*{$\triangleright$ Eq.~\eqref{eqn:transform_c}}
	 \tcc{Update offsets using exp.~moving averages with sliding factors $\nu_{\mu}$ and $\nu_{\lambda}$} 
	$\vect{\mu} \gets (1-\nu_{\mu})\vect{\mu} + \nu_{\mu} \vect{\mu}_{new}$ \\
	$\vect{\lambda} \gets (1-\nu_{\lambda})\vect{\lambda} + \nu_{\lambda} \vect{\lambda}_{new}$ \\
	\tcc{Update parameters using gradient ascent with learning rate $\eta$} 
    $\nabla \vect{W} \gets \langle ( \vect{x}_d - \vect{\mu} ) (\vect{h}_d - \vect{\lambda})^T \rangle - \langle ( \vect{x}_m - 	\vect{\mu} ) (\vect{h}_m - \vect{\lambda})^T \rangle$  \tcc*{ $\triangleright$ Eq.~\eqref{eqn:grad_W}} 
	$\nabla \vect{b} \gets \langle \vect{x}_{d} \rangle - \langle \vect{x}_{m} \rangle$ \tcc*{$\triangleright$ Eq.~\eqref{eqn:grad_b}}
	$\nabla \vect{c} \gets \langle \vect{h}_{d} \rangle - \langle \vect{h}_{m} \rangle$ \tcc*{$\triangleright$ Eq.~\eqref{eqn:grad_c}}  
    $\vect{W} \gets \vect{W} + \eta \nabla \vect{W}$ \\
	$\vect{b} \gets \vect{b} + \eta \nabla \vect{b}$ \\
	$\vect{c} \gets \vect{c} + \eta \nabla \vect{c}$ 
	 }
	 }
\tcc{Transform network to a normal binary RBM if desired} 
$\vect{b} \gets \vect{b} - \vect{W} \vect{\lambda}$ \tcc*{$\triangleright$ Eq.~\eqref{eqn:transform_b}} 
$\vect{c} \gets \vect{c} - \vect{W}^T \vect{\mu} $  \tcc*{$\triangleright$ Eq.~\eqref{eqn:transform_c}} 
$\vect{\mu} \gets \vect{0}$  \\	   
$\vect{\lambda} \gets \vect{0}$
\end{algorithm}

Note that in Algorithm~\ref{alg:algorithm2} the update of the offsets is performed before the gradient is calculated, such that gradient and reparameterization both use the current samples.
This is in contrast to the algorithm for centered DBMs proposed by \citet{MontavonMueller-2012}, where the update of the offsets and the reparameterization follows after the gradient update. Thus, while the gradient still uses the current samples the reparameterization is based on samples gained from the model of the previous iteration.
However, the proposed DBM algorithm smooths the offset estimations by an exponentially moving average over the sample means from many iterations, so that the choice of the sample set used for the offset estimation should be less relevant. 
In Algorithm~\ref{alg:algorithm2} an exponentially moving average is obtained if the sliding factor $\nu$ is set to $0<\nu<1$ and prevented if $\nu=1$. 
The effects of using an exponentially moving average are empirically analyzed in Section~\ref{sec:initExp}.

\subsection{Centered Gradient}\label{sec:centeredGradient}

We now use the centering trick to derive a centered parameter update, which can replace the gradient during the training of normal binary RBMs.
Similar to the derivation of the enhanced gradient we can transform a normal binary to a centered RBM, perform a gradient update, and transform the RBM back (see Appendix~\ref{sec:DerivationCentered} for the derivation). 
This yields the following parameter updates, which we refer to as centered gradient
\begin{eqnarray}
	\nabla_c \vect{W} 
& = & \langle ( \vect{x} - \vect{\mu} ) (\vect{h} - \vect{\lambda})^T \rangle_{d} - \langle ( \vect{x} - \vect{\mu} ) (\vect{h} - \vect{\lambda})^T \rangle_{m} \enspace, \label{eqn:cgradW}\\
	\nabla_c \vect{b}  &=& \langle \vect{x} \rangle_{d} - \langle  \vect{x} \rangle_{m} - \nabla_c \vect{W}\vect{\lambda} \enspace,\label{eqn:cgradb}\\
	\nabla_c \vect{c}  &=& \langle \vect{h} \rangle_{d} - \langle  \vect{h} \rangle_{m} - \nabla_c \vect{W}^T\vect{\mu}\enspace.\label{eqn:cgradc}
\end{eqnarray}
Notice that by setting $\vect{\mu} = \frac{1}{2} \left(\langle  \vect{x} \rangle_{d} + \langle  \vect{x} \rangle_{m} \right)$ and $\vect{\lambda} = \frac{1}{2} \left(\langle  \vect{h} \rangle_{d} + \langle  \vect{h} \rangle_{m} \right)$ the centered gradient becomes equal to the enhanced gradient
(see Appendix~\ref{sec:CenteredToEnhanced}). 
Thus, it becomes clear that the enhanced gradient is a special case of centering. 
This can also be concluded from the derivation of the enhanced gradient for Gaussian visible variables in \citep{ChoIlin-2013}. 

\begin{algorithm}
\caption{Training RBMs using the centered gradient \label{alg:algorithm1}}
$Initialize~\vect{W}$ \tcc*{$i.e. ~ \vect{W} \gets \mathcal{N}(0,0.01)^{N \times M} $}
$Initialize~\vect{\mu},\vect{\lambda}$ \tcc*{$ i.e. ~\vect{\mu} \gets \langle $data$ \rangle, \vect \lambda \gets \vect {0.5} $}
$Initialize~\vect{b},\vect{c}$ \tcc*{$ i.e. ~\vect{b} \gets \sigma^{-1}(\vect{\mu}), \vect{c} \gets \sigma^{-1}(\vect{\lambda})$}
$Initialize~\eta,\nu_{\mu}, \nu_{\lambda}$ \tcc*{$ i.e. ~\eta,\nu_{\mu}, \nu_{\lambda} \in \left\lbrace {0.001,...,0.1} \right\rbrace$}
\Repeat{stopping criteria is met}
{ 
\ForEach{\emph{batch \textbf{in} data}}
{
	\ForEach{\emph{ample $\vect v_d$ \textbf{in} batch}}
	{
		$Calculate ~ \vect{h}_d = p ( h_j = 1 | \vect x_d)$ \tcc*{$\triangleright$ Eq.~\eqref{eqn:cond_h}}
		$Sample ~ \vect{x}_m\,\,from\,RBM$ \tcc*{$\triangleright$ Eqs.~\eqref{eqn:cond_x}, \eqref{eqn:cond_h}}
		$Calculate ~ \vect{h}_m = p (h_j = 1 | \vect v_m)$ \tcc*{$\triangleright$ Eq.~\eqref{eqn:cond_h}}
		$Store ~ \vect{x}_m,\vect{h}_d,\vect{h}_m$
	}
	$Estimate ~ \vect{\mu}_{new}$ \tcc*{$ i.e. ~\vect{\mu}_{new} \gets \langle \vect{x}_d\rangle $} 
	$Estimate ~ \vect{\lambda}_{new}$ \tcc*{$ i.e. ~\vect{\lambda}_{new} \gets \langle \vect{h}_d\rangle $}
	\tcc{Update offsets using exp. moving averages with sliding factors $\nu_{\mu}$ and $\nu_{\lambda}$}
	$\vect{\mu} \gets (1-\nu_{\mu})\vect{\mu} + \nu_{\mu} \vect{\mu}_{new}$ \\
	$\vect{\lambda} \gets (1-\nu_{\lambda})\vect{\lambda} + \nu_{\lambda} \vect{\lambda}_{new}$ \\
	\tcc{Update parameters using the centered gradient with learning rate $\eta$}
	$\nabla_c \vect{W} \gets \langle ( \vect{x}_d - \vect{\mu} ) (\vect{h}_d - \vect{\lambda})^T \rangle - \langle ( \vect{x}_m - 	\vect{\mu} ) (\vect{h}_m - \vect{\lambda})^T \rangle$  \tcc*{$\triangleright$ Eq.~\eqref{eqn:cgradW}}
	$\nabla_c \vect{b} \gets \langle \vect{x}_{d} \rangle - \langle  \vect{x}_{m} \rangle - \nabla_c \vect{W}\vect{\lambda}$ \tcc*{$\triangleright$ Eq.~\eqref{eqn:cgradb}}
	 $\nabla_c \vect{c} \gets \langle \vect{h}_{d} \rangle - \langle  \vect{h}_{m} \rangle - \nabla_c \vect{W}^T\vect{\mu}$ \tcc*{$\triangleright$ Eq.~\eqref{eqn:cgradc}}
	$\vect{W} \gets \vect{W} + \eta \nabla_c \vect{W}$ \\
	$\vect{b} \gets \vect{b} + \eta \nabla_c \vect{b}$\\
	$\vect{c} \gets \vect{c} + \eta \nabla_c \vect{c}$
}
}
\end{algorithm}
The enhanced gradient has been designed such that the weight updates become the difference of the covariances between one visible and one hidden variable under the data and the model distribution. 
Interestingly, one gets the same weight update for two other choices of offset parameters: either $\vect{\mu} = \langle  \vect{x} \rangle_{d}$ and $\vect{\lambda} = \langle  \vect{h} \rangle_{m}$ or
$\vect{\mu} = \langle  \vect{x} \rangle_{m}$ and $\vect{\lambda} = \langle  \vect{h} \rangle_{d}$. 
However, these offsets result in different update rules for the bias parameters.

Algorithm~\ref{alg:algorithm1} shows pseudo code for training a normal binary RBM using the centered gradient, which is equivalent to training a centered binary RBM using Algorithm~\ref{alg:algorithm2}.
Both algorithms can easily be extended to RBMs with other types of units and DBMs.

\subsection{Centered Deep Boltzmann Machines}\label{sec:dbm}

A DBM~\citep{SalakhutdinovHinton-2009a} is a deep undirected graphical model with several hidden layers where successive layers have a bipartite connection structure. 
Therefore, a DBM can be seen as a stack of several RBMs and thus as natural extension of RBMs.
A centered binary DBM with $L$ layers $\vect{h}_{(0)}, \cdots ,\vect{h}_{(L)}$ (where $\vect{h}_{(0)}$ corresponds to the visible layer) represents a Gibbs distribution with energy
\begin{eqnarray}
  E\left(\vect{h}_{\left(0\right)}, \cdots ,\vect{h}_{\left(L\right)} \right)
  &=&
  	- \sum_{l=0}^L \left(\vect{h}_{\left(l\right)} - \vect{\lambda}_{\left(l\right)}\right)^T\vect{b}_{\left(l\right)}
  	- \sum_{l=0}^{L-1} \left(\vect{h}_{\left(l\right)} - \vect{\lambda}_{\left(l\right)}\right)^T\vect{W}_{\left(l\right)}\left(\vect{h}_{\left(l+1\right)} - \vect{\lambda}_{\left(l+1\right)}\right)\enspace\nonumber
\,,
\end{eqnarray}
where each layer $l$ has a bias $\vect b_{(l)}$, an offset $\vect \lambda_{(l)}$ and is connected to layer $l+1$ by weight matrix $\vect W_l$.

The derivations, proofs and algorithms given in this work for RBMs automatically extend to DBMs since each DBM can be transformed to an RBM with restricted connections and partially unknown input data. This is illustrated for a DBM with four layers in Figure~\ref{fig:deep_flat}.
As a consequence of this relation DBMs can essentially be trained in the same way as RBMs but also suffer from the same problems as described before. The only difference when training DBMs is that the expectation under the data distribution in the LL gradient cannot be calculated exactly as it is the case for RBMs. Instead the term is approximated by running a mean field estimation until convergence~\citep{SalakhutdinovHinton-2009a}, which corresponds to approximating the gradient of a lower variational bound of the LL.
Furthermore, it is common to pre-train DBMs in a greedy layer wise fashion using RBMs~\citep{SalakhutdinovHinton-2009a, HintonSalakhutdinov-2012}.
\begin{figure}\label{fig:deep_flat}
\begin{center}
\subfigure[Deep network version.]{\label{subfig:deep_net}
\includegraphics[scale=1.2, clip=true, trim=0 20 0 0]{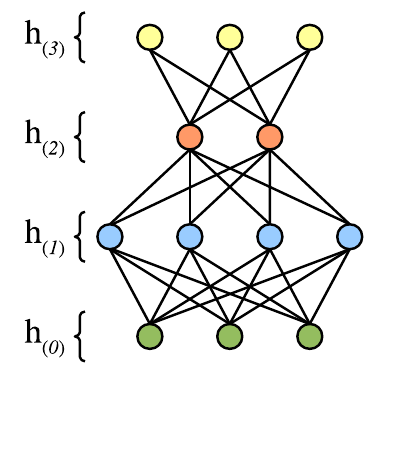}}
\hspace{10mm}
\subfigure[Shallow network version.]{\label{subfig:flat_net}
\includegraphics[scale=1.2, clip=true, trim=0 -8 0 0] {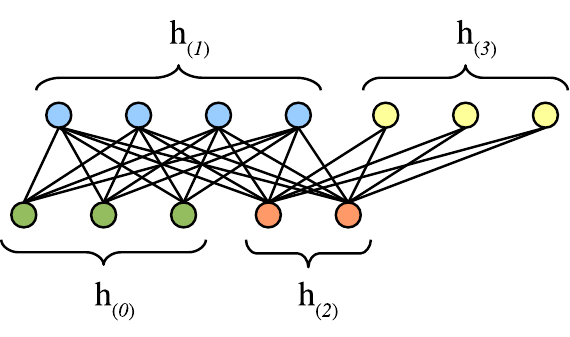}}
\caption{Example for (a) a deep neural network with four layers $h_{(0)}, \cdots ,h_{(3)}$ and (b) the equivalent two layer shallow version of the same network with restricted connections and unknown input $h_{(2)}$.}
\end{center}
\end{figure}  

\section{Centering in Artificial Neural Networks in General }\label{sec:generalization}

Removing the mean from visible and hidden units has originally been proposed for feed forward neural networks~\citep{LeCun1998,Schraudolph98}. When this idea was applied to RBMs ~\citep{TangSutskever-2011} the model was reparameterized such that the  probability distribution defined by the normal and centered RBM stayed the same. 
In this section we generalize this concept to show that centering is an alternative parameterization for arbitrary ANN architectures in general, if the network is reparameterized accordingly
This holds independently of the chosen activation functions and connection types including directed, undirected and recurrent connections.
To show the correctness of this statement, let us consider the centered artificial neuron model
\begin{eqnarray}
o_j &=& \phi_j\left( \sum_i w_{ij} \left(a_i - \mu_i \right) + c_j\right),\label{eqn:general_neuron}
\end{eqnarray}
where the output $o_j$ of the $j$th neuron depends on its activation function $\phi_j$, bias term $c_j$ and weights $w_{ij}$ with associated inputs $a_i$ and their corresponding offsets $\mu_i$. Such neurons can be used to construct arbitrary network architectures using undirected, directed and recurrent connections, which can then be optimized with respect to a chosen loss. 

Two ANNs that represent exactly the same functional input-output mapping can be considered as different parameterizations of the same model. Thus, a centered ANN is just a different parameterization of an uncentered ANN if we can show that their functional input-output mappings are the same. This can be guaranteed in general if all corresponding units in a centered and an uncentered ANN have the same mapping from inputs to outputs. If the offset $\mu_i$ is changed to $\mu_i' = \mu_i + \nabla \mu_i$ then the output of the centered artificial neuron~\eqref{eqn:general_neuron} becomes
\begin{eqnarray}
\phi_j\left( \sum_i w_{ij} \left(a_i - \mu_i' \right) + c_j\right) 
&=& \phi_j\left( \sum_i w_{ij} \left(a_i - \left(\mu_i + \nabla \mu_i \right) \right)+c_j\right)\nonumber\,\\
&=& \phi_j\left( \sum_i w_{ij} \left(a_i - \mu_i \right)+c_j - \sum_i w_{ij} \nabla \mu_i \right)\nonumber\,,
\end{eqnarray}
showing that the units output does not change when changing the offset $\mu_i$ to $\mu_i'$ if the units bias parameter $c_j$ is reparameterized to $c_j' = c_j + \sum_i w_{ij} \nabla \mu_i$.

This generalizes the proposed reparameterization for RBMs given by Equation~\eqref{eqn:transform_c} to ANNs. Note that the originally centering algorithm~\citep{LeCun1998, Schraudolph98} did not reparameterize the network, which can cause instabilities especially if the learning rate is large.
By setting $\mu_i$ or $\mu_i'$ to zero it now follows that for each normal ANN there exists a centered ANN  and vice verse such that the output of each neuron and thus the functional mapping from input to output of the whole network stays the same. If the input to output mapping does not change this also holds for an arbitrary loss depending on this output. 

Moreover, if we guarantee that a shift of $a_i$ implies a shift of $\mu_i$ by the same value (that is a shift of $a_i$ to $a_i + \delta_i$ implies a shift of $\mu_i$ to $ \mu_i + \delta_i$) the neuron's output $o_j$ gets  invariant to shifts of $a_i$. 
This is easy to see since $\delta_i$ cancels out in Equation~\eqref{eqn:general_neuron} if the same shift  is applied to both $a_i$ and $\mu_i$, which holds for example if we set the offsets to the mean values of the corresponding variables since  $\langle a_i+\delta_i\rangle = \delta_i + \langle a_i\rangle$. 

\subsection{Auto Encoders}\label{sec:autoencoders}

An AE or auto-associator~\citep{RumelhartMcClellandEtAl-1986} is a type of neural network that has originally been proposed for unsupervised dimensionality reduction. Like RBMs, AEs have also been used for unsupervised feature extraction and greedy layer-wise pre-training of deep neural networks~\citep{BengioLamblinEtAl-2007}. In general, an AE consists of a deterministic encoder $encode(\vect{x})$, which maps the input $\vect{x} = (x_1, ... ,x_N)$ to a hidden representation $\vect{h} = (h_1, ... ,h_M)$ and a deterministic decoder $decode(\vect{h})$, which maps the hidden representation to the reconstructed input representation $\vect{\tilde{x}}$. The network is optimized such that the reconstructed input $\vect{\tilde{x}}$ gets as close as possible to the original input $\vect{x}$ measured by a chosen loss $\mathcal{L}(\vect{x},\vect{\tilde{x}})$. Common choices for the loss are the mean squared error $ \langle \sum^N_{i=1} \left(x_i-\tilde{x}_i\right)^2 \rangle $ for arbitrary input and the average cross entropy $ \langle - \sum^N_{i=1}x_i \log \tilde{x}_i + (1 - x_i)\log(1 - \tilde{x}_i) \rangle $ for binary data. AEs are usually trained via back-propagation \citep{Kelley-1960, RumelhartWilliams-1986} and they can be seen as feed-forward neural networks where the input patterns are also the labels.
 We can therefore define a centered AE by centering the encoder and decoder, which for a centered three layer AE corresponds to
\begin{eqnarray}
	 encode(\vect{x}) &=&\phi^{enc}\left( \vect{W}' \left(\vect{x} - \vect{\mu} \right) + \vect{c}\right)\,\,\, = \,\,\,\,  \vect{h},\nonumber\\
	decode(\vect{h}) &=& \phi^{dec}\left( \vect{W} \left(\vect{h} - \vect{\lambda} \right) + \vect{b}\right) \,\,\,\,\, = \,\,\,\, \vect{\tilde{x}} \nonumber,
\end{eqnarray}
with encoder matrix $\vect{W}'$, decoder matrix $\vect{W}$, encoder bias $\vect{c}$, decoder bias $\vect{b}$, encoder offset $\vect{\mu}$, decoder offset $\vect{\lambda}$, encoder activation function $\phi^{enc}$ and decoder activation function $\phi^{enc}$. It is common to assume tied weights, which means that the encoder is just the transpose of the decoder matrix ($\vect{W}'$ = $\vect{W}^T$). When choosing the activation functions for the encoder and decoder (i.e. sigmoid, tangens-hyperbolicus, radial-basis, linear, linear-rectifier,  ... ), we have to ensure that the encoder activation function is appropriate for the input data (e.g. a sigmoid cannot represent negative values). Worth mentioning that when using the sigmoid function for $\phi^{enc}$ and $\phi^{dec}$ the encoder becomes equivalent to Equation~\ref{eqn:cond_x} and the decoder becomes equivalent to Equation~\ref{eqn:cond_h}. The networks structure therefore becomes equivalent to an RBM such that the only difference is the training objective.

\subsection{Initialization of the Model Parameters}\label{sec:initialization}

It is a common way to initialize the weight matrix of ANNs to small random values to break the symmetry. 
The bias parameters are often initialized to zero. 
However, we argue that there exists a more reasonable initialization for the bias parameters.

\citet{Hinton-2010b} proposed to initialize the RBM's visible bias parameter $b_i$ to $\ln( p_i/(1- p_i))$, where $p_i$ is the proportion of the data points in which unit $i$ is on
(that is $p_i= \langle x_i \rangle_d$). 
He states that if this is not done, the hidden units are used to activate the $i$th visible unit with a probability of approximately $p_i$ in the early stage of training. 

We argue that this initialization is in fact reasonable since it corresponds to the Maximum Likelihood Estimate (MLE) of the visible bias given the data for an RBM with zero weight matrix, given by

\begin{equation}\label{eqn:MLE}
	\vect b^* = 
	\ln\left(\frac{\langle  \vect{x} \rangle_d}{ \vect 1 - \langle  \vect{x} \rangle_d} \right)
 =  -\ln\left(\frac{ \vect 1}{\langle  \vect{x} \rangle_d} - \vect 1 \right)
 =  \sigma^{-1}(\langle  \vect{x} \rangle_d)\enspace ,
\end{equation}
where $\sigma^{-1}$ is the inverse sigmoid function. 
Notice that the MLE of the visible bias for an RBM with zero weights is the same whether the RBM is centered or not. 
The conditional probability of the visible variables~\eqref{eqn:cond_x} of an RBM with this initialization is then given by $p\left(\vect{x=1}|\vect{h}\right) = \sigma(\sigma^{-1}(\langle  \vect{x} \rangle_d)) = \langle \vect{x} \rangle_d $, where $p\left(\vect{x=1}|\vect{h}\right)$ denotes the vector containing the elements $p\left(x_i=1|\vect{h}\right)$. 
Thus the mean of the  data is initially modeled only by the bias values and the weights are free to model higher order statistics in the beginning of training.
For the unknown hidden variables it is reasonable to assume an initial mean of $0.5$ so that the 
MLE of the hidden bias for an RBM with zero weights is given by $\vect c^*= \sigma^{-1}(0.5) = 0.0$. 
These considerations still hold approximately if the weights are not zero but initialized to small random values.
 
\citet{MontavonMueller-2012} suggested to initialize the bias parameters to the inverse sigmoid of the initial offset parameters. 
They argue that this initialization leads to a good starting point, because it guarantees that the Boltzmann machine is initially centered.
Actually,  if the initial offsets are set to $\mu_i = \langle x_i \rangle_d$ and $\lambda_j = 0.5$ the initialization suggested by~\citet{MontavonMueller-2012} is equal  to the initialization to the MLEs as follows from Equation~\eqref{eqn:MLE}. 

Note that this initialization is not restricted to RBMs or sigmoid activation function. Independent of the initial weight matrix we can always set the bias in ANNs to the inverse activation function of the corresponding mean value .

\section{Methods}\label{sec:Experiments}

As shown in the previous section the algorithms described by
\citet{ChoRaikoEtAl-2011a}, \citet{TangSutskever-2011} and \citet{MontavonMueller-2012}
can all be viewed as different ways of applying the centering trick. 
They differ in the choice of the offset parameters and in the  way of approximating them, either based on the samples gained from the model in the previous learning step or from the current one, using an exponentially moving average or not. 
The question arises, how RBMs should be centered to achieve the best performance in terms of the LL.
In the following we analyze the different ways of centering empirically 
and try to derive a deeper understanding of why centering is beneficial.

For simplicity we introduce the following shorthand notation. 
We use $d$ to denote the data mean $\langle \cdot \rangle_d$, $m$ for the model mean $\langle \cdot \rangle_m$, $a$ for the average of the means $\frac{1}{2}\langle \cdot \rangle_d + \frac{1}{2}\langle \cdot \rangle_m$, and \emph{0} if the offsets is set to zero. 
We indicate the choice of $\vect{\mu}$ in the first and the choice of $\vect \lambda$ in the second place, for example $dm$ translates to $\vect \mu=\langle \vect x \rangle_d $ and  $\vect \lambda=\langle \vect h \rangle_m$.
We add a superscribed $b$ (before) or $l$ (later) to denote whether the reparameterization is performed before or after the gradient update. 
If the sliding factor in Algorithm \ref{alg:algorithm2}  or \ref{alg:algorithm1} is set to a value smaller than one and thus an exponentially moving average is used, a subscript $s$ is added.
 Thus, we indicate the variant of \citet{ChoRaikoEtAl-2011a} by $aa^b$, the one of \citet{MontavonMueller-2012} by $dd^l_s$, the data normalization of \citet{TangSutskever-2011} by \emph{d0}, and the normal binary RBM simply by \emph{00}.
Table~\ref{tab:algo_overview} summarizes the abbreviations most frequently used in this paper.

We begin our analysis with RBMs, where one layer is small enough to guarantee that the exact LL is still tractable. 
In a first set of experiments we analyze the four algorithms described above in terms of the evolution of the LL during training. 
In a second set of experiments we analyze the effect of the initialization described in Section~\ref{sec:initialization}. 
We proceed with a comparison of the effects of estimating offset values and reparameterizing the parameters before or after the gradient update. 
Afterwards we analyze the effects of using an exponentially moving average to approximate the offset values in the different algorithms and of choosing other offset values. We continue with comparing the normal and the centered gradient with the natural gradient. To verify whether the results scale to more realistic problem sizes we compare the RBMs, DBMs and AE on ten large datasets.

\begin{table}[t]
\begin{center}
\begin{small}
\begin{sc}
\begin{tabular}{l l l l}
\hline
\abovespace\belowspace
Abbr. & $\mu$ & $\lambda$ & Description\\
\hline
\abovespace
$00$ & $\vect 0$ & $\vect 0$ & Normal binary RBM \\ & & & \citep{HintonOsinderoEtAl-2006a}\\
\hline
\abovespace\belowspace
$d0$ & $\langle \vect x \rangle_d$ & $\vect 0$ & Data Normalization RBM \\ & & & \citep{TangSutskever-2011} \\
\hline
\abovespace\belowspace
$dd^l_s$ & $\langle \vect x \rangle_d$ & $\langle \vect h \rangle_d$ & Original Centered RBM \\ & & & \citep{MontavonMueller-2012} \\ & & & reparam. after gradient update, \\ & & & use of an exp. moving average \\
\hline
\abovespace\belowspace
$aa^b$ & $0.5 \left( \langle \vect x \rangle_d + \langle \vect x \rangle_m \right)$ & $0.5 \left( \langle \vect h \rangle_d + \langle \vect h \rangle_m \right)$  & Enhanced gradient RBM \\ & & & \citep{ChoRaikoEtAl-2011a} \\ & & & reparam. before gradient update, \\ & & & no exp. moving average \\
\hline
\abovespace\belowspace
$dd^b_s$ & $\langle \vect x \rangle_d$ & $\langle \vect h \rangle_d$  & Centering using the data mean, \\ & & & reparam. before gradient update, \\ & & & use of an exp. moving average \\
\hline
\abovespace\belowspace
$mm^b_s$ & $\langle \vect x \rangle_m$ & $\langle \vect h \rangle_m$  & Centering using the model mean, \\ & & & reparam. before gradient update, \\ & & & use of an exp. moving average \\
\hline
\abovespace\belowspace
$dm^b_s$ & $\langle \vect x \rangle_d$ & $\langle \vect h \rangle_m$  & Centering using the data mean \\ & & & for the visible and the model mean \\ & & &for hidden units, \\ & & & reparam. before gradient update, \\ & & & use of an exp. moving average \\
\end{tabular}
\end{sc}
\end{small}
\end{center}
\caption {Look-up table: Abbreviations for the most frequently used algorithms.}
\label{tab:algo_overview}
\end{table}

\subsection{Benchmark Problems}\label{sec:Benchmark_problems}

We consider four different benchmark problems in our detailed analysis.\\
The {\bf{\emph{Bars \& Stripes}}}~\citep{MacKay2003} 
problem consists of quadratic patterns of size $D \times D$ that can be generated as follows. 
First, a vertical or horizontal orientation is chosen randomly with equal probability. 
Then the state of all pixels of every row or column is chosen uniformly at random.
This leads to $N = 2^{D+1}$ patterns 
(see Figure~\ref{subfig:BAS} for some example patterns)
where the completely uniform patterns occur twice as often as the others. 
The dataset is symmetric in terms of the amount of zeros and ones and thus the flipped and unflipped problems are equivalent. 
An upper bound of the LL is given by $\left(N-4\right)\ln \left(\frac{1}{N}\right) + 4 \ln \left(\frac{2}{N}\right)$. 
For our experiments we used $D=3$ or $D=2$ (only in Section~\ref{sec:natural_grad_exp}) leading to an upper bound of $-41.59$ and $-13.86$, respectively.

\begin{figure}[t]\label{fig:BASexample}
\begin{center}
\subfigure[8 out of 16 patterns from the \emph{Bars \& Stripes} dataset.]{\label{subfig:BAS}
\centerline{\includegraphics[scale=0.71]{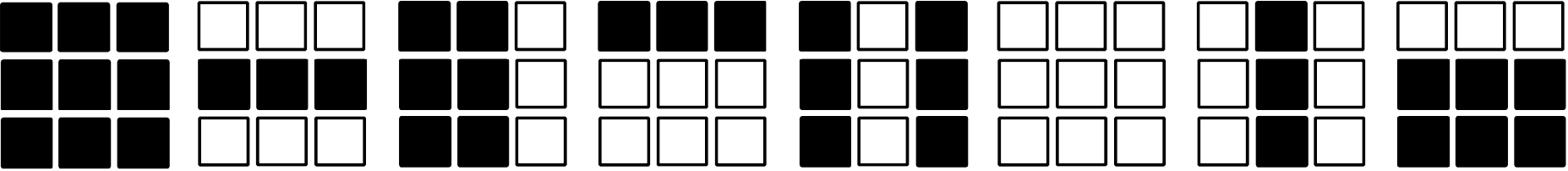}}}
\vspace{5mm}
\subfigure[All patterns of the \emph{Shifting Bar} dataset with $N=9$ and $B=1$.]{\label{subfig:Shifter}
\centerline{\includegraphics[scale=0.67]{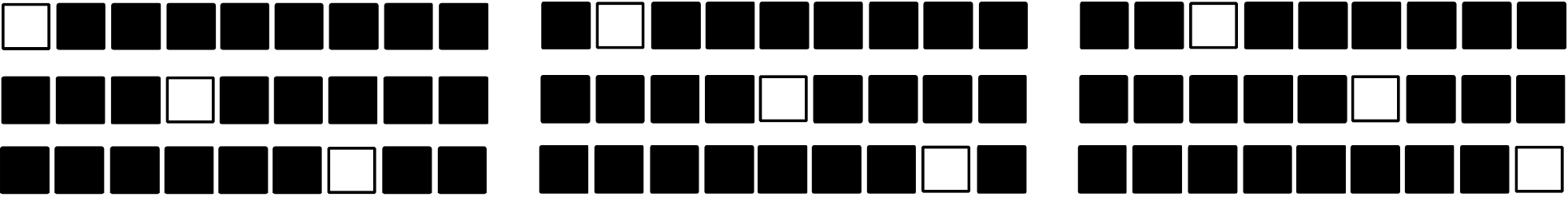}}}
\vspace{2mm}
\subfigure[Example patterns from the \emph{MNIST} dataset after binarization.]
{\label{subfig:MNIST}
\centerline{\includegraphics[scale=1.3]{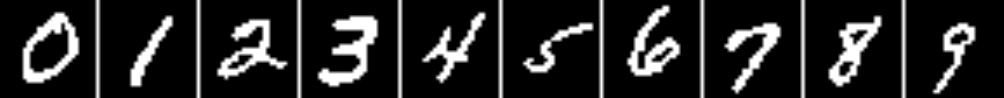}}}
\vspace{5mm}
\subfigure[Example patterns from the \emph{Caltech 101 Silhouette} dataset.]{
\label{subfig:Caltech}
\centerline{\includegraphics[scale=1.3]{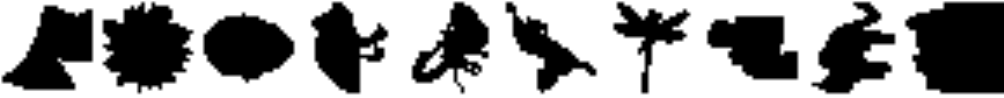}}}
\vspace{-5mm}
\caption{Some patterns from the different benchmark problems.}
\end{center}
\end{figure}  

The {\bf{\emph{Shifting Bar}}} dataset is an artificial benchmark problem we have designed to be asymmetric in terms of the amount of zeros and ones in the data.  
For an input dimensionality $N$, a bar of length $0<B<N$ has to be chosen, where $\frac{B}{N}$ expresses the percentage of ones in the dataset. 
A position $0 \le p<N$ is chosen uniformly at random and the states of the following $B$ pixels are set to one, where a wrap around is used if $p+B \ge N$. The states of the remaining pixels are set to zero. This leads to $N$ different patterns
(see Figure~\ref{subfig:Shifter}) with equal probability and an upper bound of the LL of $N\ln\left(\frac{1}{N}\right)$.
For our experiments we used $N=9$, $B=1$ and its flipped version {\bf{\emph{Flipped~Shifting~Bar}}}, which we get for $N=9$, $B=8$, both having an upper LL bound  of $-19.78$.

The {\bf{\emph{MNIST}}}~\citep{LeCun1998} dataset of handwritten digits has become a standard benchmark problem for RBMs. It consists of $60,000$ training and $10,000$ testing examples of gray value handwritten digits of size $28 \times 28$. See Figure~\ref{subfig:MNIST} for some example patterns. 
After binarization (with a threshold of 0.5) the dataset contains $13.3\%$ ones, similar to the \emph{Shifting Bar} problem, which for our choice of $N$ and $B$ contains $11.1\%$ ones. We refer to the dataset where each bit of MNIST is flipped (that is each one is replaced by a zero and \emph{vice versa}) as \mbox{\bf{\emph{1-MNIST}}}.
To our knowledge, the best reported performance in terms of the average LL per sample of an RBM with 500 hidden units on \emph{MNIST} test data is \mbox{-84}~\citep{Salakhutdinov-2008, SalakhutdinovMurray-2008, TangSutskever-2011, ChoRaikoEtAl-2013}.

The {\bf{\emph{CalTech 101 Silhouettes}}}  \citep{MarlinSwerskyEtAl-2010a} dataset consists of $4.100$ training, $2.307$ validation,  and $2264$ testing examples of binary object silhouettes of size $28 \times 28$. See Figure~\ref{subfig:Caltech} for some example patterns. The dataset contains $55.1\%$ ones, and thus (like in the \emph{Bars \& Stripes problem}) the amount of  zeros and ones is almost  the same. The background pixels take the value one which is in contrast to \emph{MNIST} where the background pixels are set to zero.
To our knowledge, the best reported performance in terms of the average LL per sample of an RBM with 500 hidden units on \emph{CalTech 101 Silhouettes} test data is \mbox{-114.75}~\citep{ChoRaikoEtAl-2013}. 
 
In some experiments we considered eight {\bf{\emph{additional binary datasets}}} from different domains compromising  biological, image, text and game-related data~\citep{LarochelleBengioEtAl-2010}. The datasets differ in dimensionality (112 to 500) and size (a few hundred to several thousand examples) and    
have been separated into training, validation and test sets.
The average test LL for binary RBMs with 23 hidden units and related models can be found in~\citep{LarochelleMurray-2011}. All datasets contain less ones than zeros,  where the percentage of ones lies between $3.9\%$ and $36.8\%$.

\section{Results}\label{sec:Results}
 
For all models in this work the weight matrices were initialized with random values sampled from a
Gaussian with zero mean and a standard deviation of $0.01$. If not stated otherwise the visible and hidden biases, and offsets were initialized as described in Section~\ref{sec:initialization}. 
 
We begin our analysis with experiments on small RBMs where the LL can be calculated exactly.
We used 4 hidden units when modeling \emph{Bars \& Stripes} and \emph{Shifting Bar} and 16 hidden units when modeling \emph{MNIST}. For training we used CD and PCD with $k$ steps of Gibbs sampling (CD-$k$, PCD-$k$) and PT$_{c}$ where the $c$ temperatures were distributed uniformly form $0$ to $1$. For \emph{Bars \& Stripes} and \emph{Shifting Bar} full-batch training was performed for $50,000$ gradient updates, where the LL was evaluated every 50th gradient update.
For modeling \emph{MNIST} mini-batch training with a batch size of 100 was performed for 100 epochs, each consisting of 600 gradient updates and the exact LL was evaluated after each epoch.

The following tables containing the results for RBMs show the maximum average LL and the corresponding standard deviation reached during training with different learning algorithms over the 25 trials.
In some cases the final average LL reached at the end of training is given in parenthesis to indicate a potential divergence of the LL. 
For reasons of readability, the average LL was divided by the number of training samples in the case of \emph{MNIST}.
In order to check if the result of the best method within one row differs significantly from the others we performed pairwise signed Wilcoxon rank-sum tests (with $p=0.05$).
The best results are highlighted in bold. 
This can be more than one value if the significance test between these values was negative.

\subsection{Comparison of the Standard Methods}

The comparison of the learning performance of the previously described algorithms $dd^l_s$,  $aa^b$,  \emph{d0}, and \emph{00} (using their originally proposed initializations) shows that training a centered
 RBM leads to significantly higher LL values than training a normal binary RBM (see Table~\ref{tab:originalMethods1} for the results for \emph{Bars \& Stripes} and  \emph{MNIST} and Table~\ref{tab:originalMethods2} for the results for  \emph{Shifting Bar} and \emph{Flipped Shifting Bar}).
Figure~\ref{subfig:OrignalMethods_B} illustrates on the \emph{Bars \& Stripes} dataset that centering both the visible and the hidden variables ($dd^l_s$ and $aa^b$) compared to centering only the visible variables (\emph{d0}) accelerates the learning and leads to a higher LL when using PT. The same holds for PCD as can be seen from Table~\ref{tab:originalMethods1}. 
Thus centered RBMs can form more accurate models of the data distribution than normal RBMs.
This is different to the observations made for DBMs by \citet{MontavonMueller-2012}, which found a better generative performance of centering only in the case of locally connected DBMs. 

It can also be seen in Figure~\ref{fig:OrignalMethods} that all methods show divergence in combination with CD and PCD \citep[as described before by][for normal RBMs]{FischerIgel-2010a}, which can be prevented for $dd^l_s$, \emph{d0}, and \emph{00} when using PT as shown in Figure~\ref{subfig:OrignalMethods_D}.
This can be explained by the fact that PT leads to faster mixing Markov chains and thus less biased gradient approximations. The $aa$ algorithm however suffers from severe divergence of the LL when PCD or PT is used, which is even worse than with CD. 
This divergence problem arises independently of the choice of the learning rate as indicated by the LL values reached at the end of training (given in parentheses) in Table~\ref{tab:originalMethods1} and Table~\ref{tab:originalMethods2} and which can also be seen by comparing Figure~\ref{subfig:OrignalMethods_C} and Figure~\ref{subfig:OrignalMethods_D}.
The divergence occurs the earlier and faster the bigger the learning rate, while for the other algorithms we never observed divergence in combination with PT even for very big learning rates and long training time. The reasons for this divergence will be discussed in detail in Section~\ref{sec:divergence}.
The results in Table~\ref{tab:originalMethods2} also demonstrate the flip invariance of the centered RBMs on the \emph{Shifting Bar} dataset empirically. 
While \emph{00} fails to model the flipped version of the dataset correctly $dd^l_s$,  $aa^b$, \emph{d0} have approximately the same performance on the flipped and unflipped dataset.

\begin{figure}[t]
\centering
\subfigure[CD-1 - learning rate 0.05]{\label{subfig:OrignalMethods_A}
\includegraphics[trim=0.6cm 0cm 1.55cm 0cm, clip=true,scale=0.412]{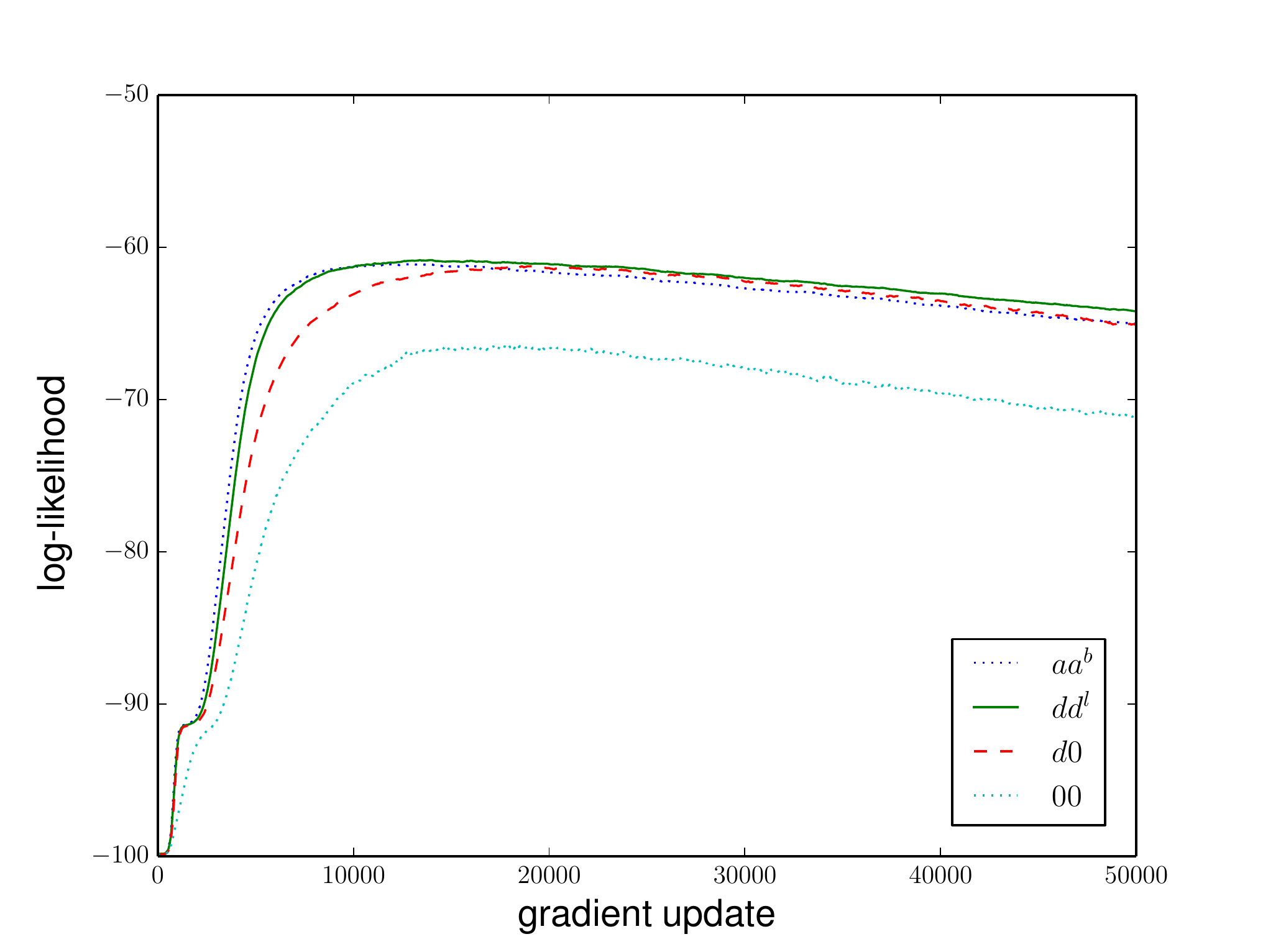}}
\subfigure[PT$_{10}$ - learning rate 0.05]{\label{subfig:OrignalMethods_B}
\includegraphics[trim=1.56cm 0cm 1.55cm 0cm, clip=true,scale=0.412]{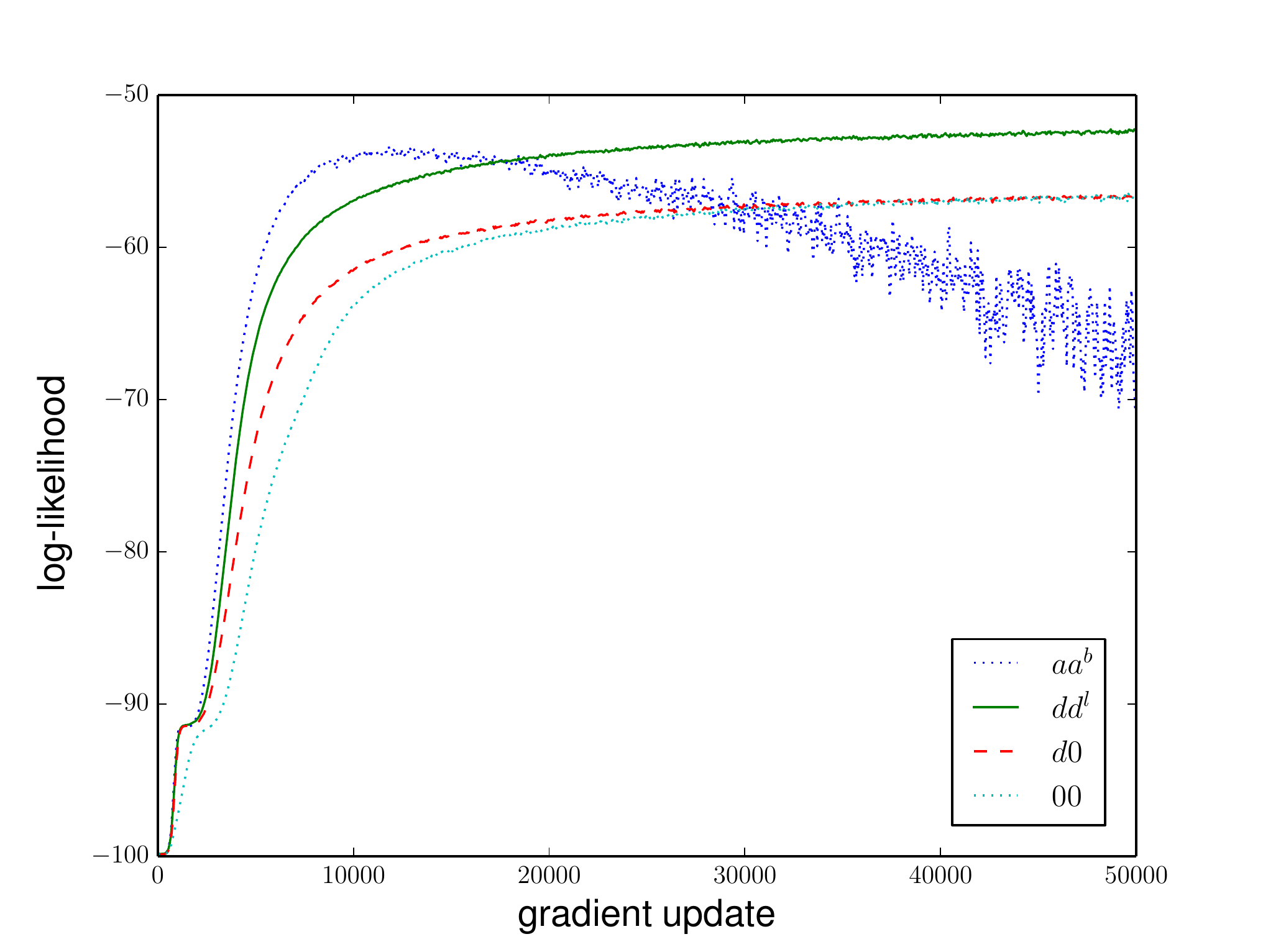}}
\subfigure[PCD-1 - learning rate 0.05]{\label{subfig:OrignalMethods_C}
\includegraphics[trim=0.6cm 0cm 1.55cm 0cm, clip=true,scale=0.412]{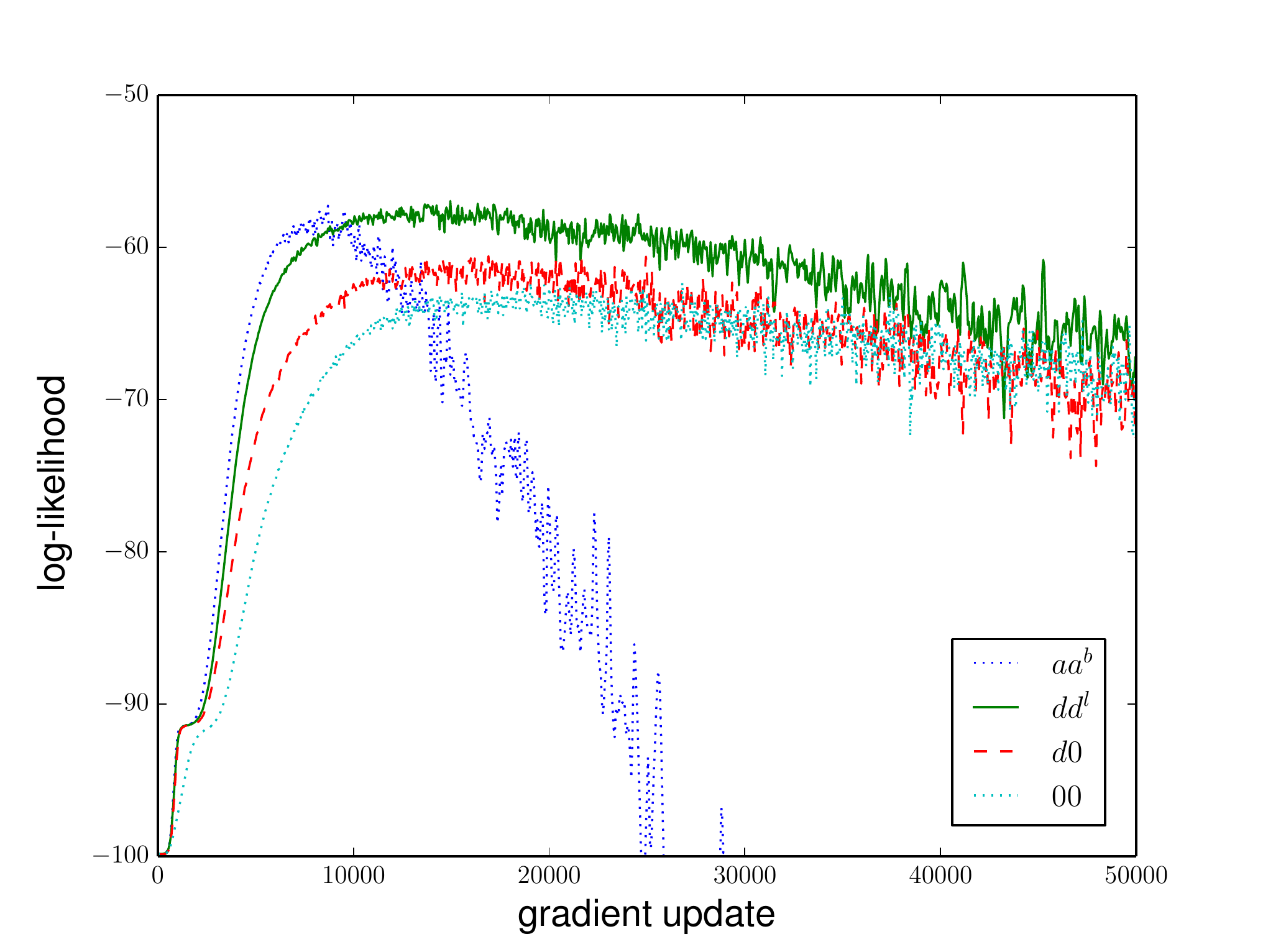}}
\subfigure[PCD-1 - learning rate 0.01]{\label{subfig:OrignalMethods_D}
\includegraphics[trim=1.56cm 0cm 1.55cm 0cm, clip=true,scale=0.412]{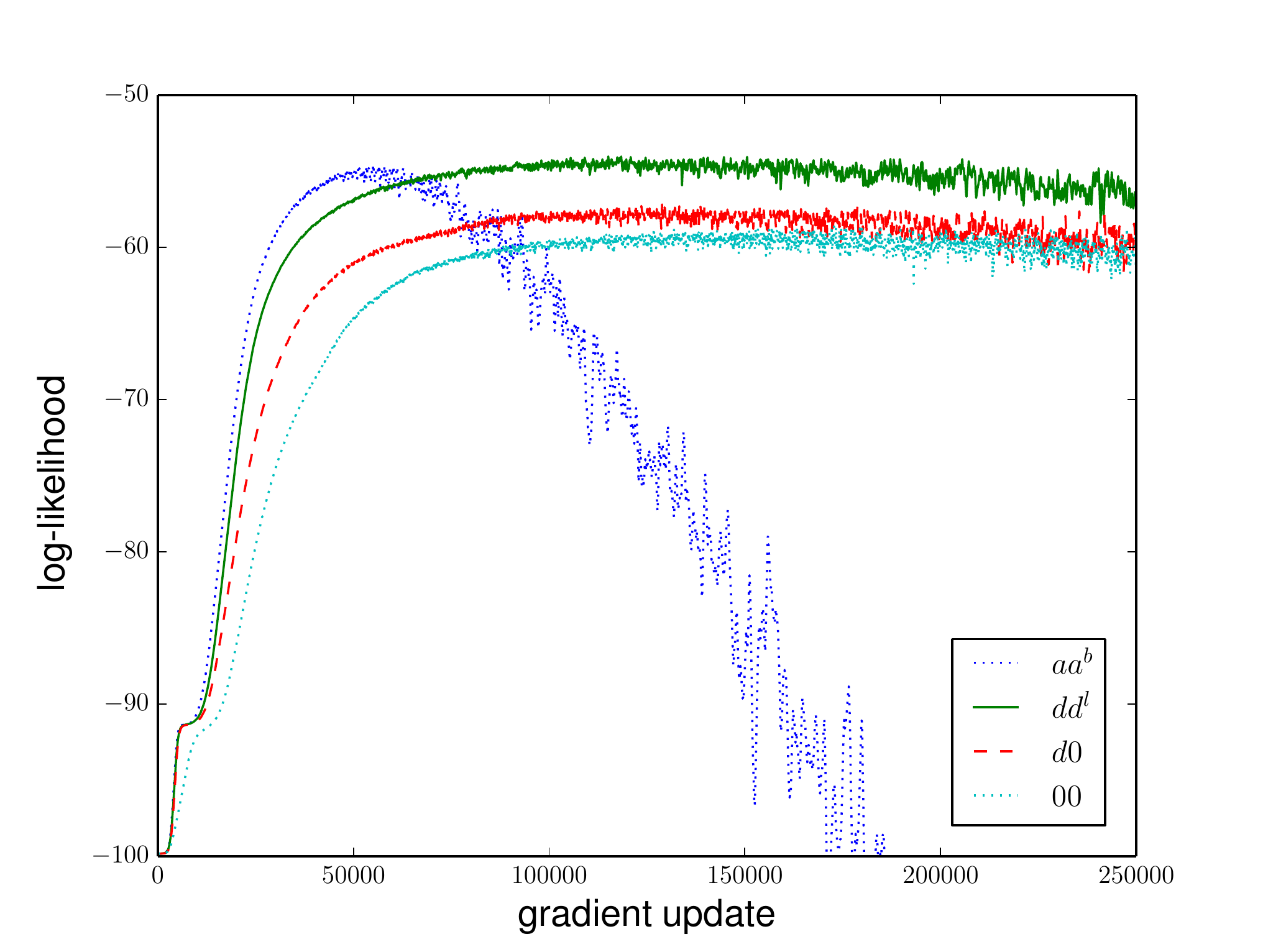}}
\caption{Evolution of the average LL during training on the \emph{Bars \& Stripes} dataset for the standard centering methods. 
(a) When CD-1 is used for sampling and a learning rate of $\eta=0.05$,
(b) when PT$_{10}$ is used for sampling and a learning rate of $\eta=0.05$,
(c) when PCD-1 is used for sampling and a learning rate of $\eta=0.05$, and
(d) when PCD-1 is used for sampling and a learning rate of $\eta=0.01$. }
\label{fig:OrignalMethods}
\end{figure} 

\begin{landscape}
\begin{table*}
\setlength{\tabcolsep}{3pt}
\begin{center}
\begin{small}
\begin{sc}
\hspace*{-2.1cm}
\begin{tabular}{l@{\hskip -0.07in} l l l l}
\hline
\abovespace\belowspace
Algorithm-$\eta$ & \multicolumn{1}{c}{$aa^b$} & \multicolumn{1}{c}{$dd_s^l$} & \multicolumn{1}{c}{\emph{d0}} & \multicolumn{1}{c}{\emph{00}} \\
\hline
\abovespace
Bars \& Stripes  &  &  &  &  \\
\abovespace
CD-1-0.1& \textbf{-60.85} $\pm$1.91 \,(-69.1) & \textbf{-60.41} $\pm$2.08 \,(-68.8) & \textbf{-60.88} $\pm$3.95 \,(-70.9) & -65.05 $\pm$3.60 \,(-78.1) \\
CD-1-0.05& \textbf{-60.37} $\pm$1.87 \,(-65.0) & \textbf{-60.25} $\pm$2.13 \,(-64.2) & \textbf{-60.74} $\pm$3.57 \,(-65.1) & -64.99 $\pm$3.63 \,(-71.2) \\
CD-1-0.01& \textbf{-61.00} $\pm$1.54 \,(-61.1) & -61.22 $\pm$1.49 \,(-61.3) & -63.28 $\pm$3.01 \,(-63.3) & -68.41 $\pm$2.91 \,(-68.6) \\
PCD-1-0.1& -55.65 $\pm$0.86 \,(-360.6) & \textbf{-54.75} $\pm$1.46 \,(-91.2) & -56.65 $\pm$3.88 \,(-97.3) & -57.27 $\pm$4.69 \,(-84.3) \\
PCD-1-0.05& -54.29 $\pm$1.25 \,(-167.4) & \textbf{-53.60} $\pm$1.48 \,(-67.2) & -56.50 $\pm$5.26 \,(-72.5) & -58.16 $\pm$5.50 \,(-70.6) \\
PCD-1-0.01& \textbf{-54.26} $\pm$0.79 \,(-55.3) & -56.68 $\pm$0.73 \,(-56.8) & -60.83 $\pm$3.76 \,(-61.0) & -64.52 $\pm$2.94 \,(-64.6) \\
PT$_{10}$-0.1& -52.55 $\pm$3.43 \,(-202.5) & \textbf{-51.13} $\pm$0.85 \,(-52.1) & -55.37 $\pm$5.44 \,(-56.7) & -53.99 $\pm$3.73 \,(-55.3) \\
PT$_{10}$-0.05& \textbf{-51.84} $\pm$0.98 \,(-70.7) & \textbf{-51.87} $\pm$1.05 \,(-52.3) & -56.11 $\pm$5.79 \,(-56.6) & -56.06 $\pm$4.50 \,(-56.8) \\
PT$_{10}$-0.01& \textbf{-53.36} $\pm$1.26 \,(-53.8) & -56.73 $\pm$0.77 \,(-56.8) & -61.24 $\pm$4.58 \,(-61.3) & -64.70 $\pm$3.53 \,(-64.7) \\
\hline
\abovespace
\emph{MNIST} &  &    &    &  \\
\abovespace
CD-1-0.1& -152.6 $\pm$0.89 \,(-158.5) & \textbf{-150.9} $\pm$1.53 \,(-154.6) & \textbf{-151.3} $\pm$1.77 \,(-154.8) & -165.9 $\pm$1.90 \,(-168.4) \\
CD-1-0.05& -152.5 $\pm$1.14 \,(-156.1) & \textbf{-151.2} $\pm$1.89 \,(-154.3) & \textbf{-151.6} $\pm$1.90 \,(-154.6) & -167.7 $\pm$1.66 \,(-169.0) \\
CD-1-0.01& \textbf{-153.0} $\pm$1.10 \,(-153.2) & \textbf{-152.4} $\pm$1.81 \,(-152.8) & \textbf{-153.5} $\pm$2.30 \,(-154.0) & -171.3 $\pm$1.49 \,(-172.4) \\
PCD-1-0.1& -147.5 $\pm$1.09 \,(-177.6) & \textbf{-140.9} $\pm$0.61 \,(-145.2) & -142.9 $\pm$0.74 \,(-147.2) & -160.7 $\pm$4.87 \,(-169.4) \\
PCD-1-0.05& -145.3 $\pm$0.61 \,(-162.4) & \textbf{-140.0} $\pm$0.45 \,(-142.8) & -141.1 $\pm$0.65 \,(-143.6) & -173.4 $\pm$4.42 \,(-178.1) \\
PCD-1-0.01& -143.0 $\pm$0.29 \,(-144.7) & \textbf{-140.7} $\pm$0.42 \,(-141.4) & -141.7 $\pm$0.49 \,(-142.5) & -198.0 $\pm$4.78 \,(-198.4) \\
PT$_{10}$-0.01& \textbf{-247.1} $\pm$12.52 \,(-643.4) & \textbf{-141.5} $\pm$0.54 \,(-143.6) & -144.0 $\pm$0.61 \,(-147.6) & -148.8 $\pm$1.15 \,(-153.6) \\
\hline
\end{tabular}
\end{sc}
\end{small}
\end{center}
\caption {Maximum average LL during training on (top) the \emph{Bars \& Stripes} dataset and (bottom) the \emph{MNIST} dataset using different sampling methods and learning rates $\eta$.}
\label{tab:originalMethods1}
\end{table*}
\end{landscape}

\begin{landscape}
\begin{table*}
\setlength{\tabcolsep}{3pt}
\begin{center}
\begin{small}
\begin{sc}
\hspace*{-2.1cm}
\begin{tabular}{l@{\hskip -0.07in} l l l l}
\hline
\abovespace\belowspace
Algorithm-$\eta$ & \multicolumn{1}{c}{$aa^b$} & \multicolumn{1}{c}{$dd_s^l$} & \multicolumn{1}{c}{\emph{d0}} & \multicolumn{1}{c}{\emph{00}} \\
\hline
\abovespace
Shifting Bar  &  &  &  &  \\
\abovespace
CD-1-0.2& \textbf{-20.52} $\pm$1.09 \,(-21.9) & \textbf{-20.32} $\pm$0.74 \,(-20.6) & -21.72 $\pm$1.21 \,(-22.5) & -21.89 $\pm$1.42 \,(-22.6) \\
CD-1-0.1& \textbf{-20.97} $\pm$1.14 \,(-21.5) & \textbf{-20.79} $\pm$0.86 \,(-20.9) & \textbf{-21.19} $\pm$0.82 \,(-21.4) & -21.40 $\pm$0.88 \,(-21.6) \\
CD-1-0.05& \textbf{-21.11} $\pm$0.78 \,(-21.2) & -22.72 $\pm$0.67 \,(-22.7) & -26.89 $\pm$0.29 \,(-26.9) & -26.11 $\pm$0.40 \,(-26.1) \\
PCD-1-0.2& -21.71 $\pm$0.81 \,(-237.2) & \textbf{-21.02} $\pm$0.52 \,(-32.4) & -21.62 $\pm$0.66 \,(-31.9) & -21.86 $\pm$0.75 \,(-31.7) \\
PCD-1-0.1& -21.10 $\pm$0.59 \,(-87.4) & \textbf{-20.92} $\pm$0.73 \,(-23.3) & -21.74 $\pm$0.76 \,(-23.7) & -21.52 $\pm$0.89 \,(-23.3) \\
PCD-1-0.05& \textbf{-20.96} $\pm$0.70 \,(-26.0) & -22.48 $\pm$0.60 \,(-22.6) & -26.83 $\pm$0.36 \,(-26.8) & -26.04 $\pm$0.48 \,(-26.1) \\
PT$_{10}$-0.2& -20.87 $\pm$0.86 \,(-31.9) & \textbf{-20.38} $\pm$0.77 \,(-20.9) & -21.14 $\pm$1.15 \,(-21.6) & -21.82 $\pm$1.22 \,(-22.3) \\
PT$_{10}$-0.1& \textbf{-20.57} $\pm$0.60 \,(-21.5) & \textbf{-20.51} $\pm$0.58 \,(-20.7) & -21.22 $\pm$0.91 \,(-21.4) & -21.06 $\pm$0.92 \,(-21.2) \\
PT$_{10}$-0.05& \textbf{-20.69} $\pm$0.89 \,(-20.8) & -22.39 $\pm$0.68 \,(-22.4) & -26.94 $\pm$0.30 \,(-27.0) & -26.17 $\pm$0.38 \,(-26.2) \\
\hline
\abovespace
Flipped Shifting Bar &  &    &    &  \\
\abovespace
CD-1-0.2& \textbf{-20.39} $\pm$0.86 \,(-21.3) & \textbf{-20.42} $\pm$0.80 \,(-20.8) & -21.55 $\pm$1.33 \,(-22.3) & -27.98 $\pm$0.26 \,(-28.2) \\
CD-1-0.1& \textbf{-20.57} $\pm$0.83 \,(-20.9) & -20.85 $\pm$0.82 \,(-21.0) & -21.04 $\pm$0.75 \,(-21.2) & -28.28 $\pm$0.00 \,(-28.4) \\
CD-1-0.05& \textbf{-21.11} $\pm$0.77 \,(-21.2) & -22.63 $\pm$0.66 \,(-22.6) & -26.85 $\pm$0.34 \,(-26.9) & -28.28 $\pm$0.00 \,(-28.3) \\
PCD-1-0.2& -21.56 $\pm$0.57 \,(-310.8) & \textbf{-20.97} $\pm$0.65 \,(-32.3) & -21.89 $\pm$0.86 \,(-32.6) & -28.01 $\pm$0.26 \,(-28.3) \\
PCD-1-0.1& -21.17 $\pm$0.60 \,(-88.3) & \textbf{-20.72} $\pm$0.50 \,(-23.1) & -21.28 $\pm$0.71 \,(-23.2) & -28.28 $\pm$0.00 \,(-28.4) \\
PCD-1-0.05& \textbf{-21.01} $\pm$0.77 \,(-25.6) & -22.30 $\pm$0.64 \,(-22.4) & -26.90 $\pm$0.34 \,(-26.9) & -28.28 $\pm$0.00 \,(-28.3) \\
PT$_{10}$-0.2& -20.60 $\pm$0.66 \,(-33.2) & \textbf{-20.25} $\pm$0.55 \,(-20.7) & -20.79 $\pm$0.87 \,(-21.4) & -28.01 $\pm$0.27 \,(-28.2) \\
PT$_{10}$-0.1& \textbf{-20.78} $\pm$0.82 \,(-21.6) & \textbf{-20.68} $\pm$0.69 \,(-20.8) & -21.11 $\pm$0.67 \,(-21.3) & -28.28 $\pm$0.00 \,(-28.4) \\
PT$_{10}$-0.05& \textbf{-20.90} $\pm$0.85 \,(-21.1) & -22.39 $\pm$0.65 \,(-22.4) & -26.87 $\pm$0.35 \,(-26.9) & -28.28 $\pm$0.00 \,(-28.3) \\
\hline
\end{tabular}
\end{sc}
\end{small}
\end{center}
\caption {Maximum average LL during training on (top) the \emph{Shifting Bar} dataset and (bottom) the \emph{Flipped Shifting Bar} dataset using different sampling methods and learning rates.}
\label{tab:originalMethods2}
\end{table*}
\end{landscape}

\subsection{Initialization}\label{sec:initExp}

\begin{table}
\begin{center}
\begin{small}
\begin{sc}
\begin{tabular}{l l l}
\hline
\abovespace\belowspace
Algorithm-$\eta$ & \multicolumn{1}{c}{$00~init ~zero$} &   \multicolumn{1}{c}{$00~init ~ \sigma^{-1}$} \\
\hline
\abovespace
CD-1-0.2& -27.98 $\pm$0.26 \,(-28.2) & \textbf{-21.49} $\pm$1.34 \,(-22.5) \\
CD-1-0.1& -28.28 $\pm$0.00 \,(-28.4) & \textbf{-21.09} $\pm$0.97 \,(-21.6) \\
CD-1-0.05& -28.28 $\pm$0.00 \,(-28.3) & \textbf{-24.87} $\pm$0.47 \,(-24.9) \\
PCD-1-0.2& -28.01 $\pm$0.26 \,(-28.3) & \textbf{-22.45} $\pm$1.00 \,(-42.3) \\
PCD-1-0.1& -28.28 $\pm$0.00 \,(-28.4) & \textbf{-21.76} $\pm$0.74 \,(-26.7) \\
PCD-1-0.05& -28.28 $\pm$0.00 \,(-28.3) & \textbf{-24.83} $\pm$0.55 \,(-25.0) \\
PT$_{10}$-0.2& -28.01 $\pm$0.27 \,(-28.2) & \textbf{-21.72} $\pm$1.24 \,(-23.5) \\
PT$_{10}$-0.1& -28.28 $\pm$0.00 \,(-28.4) & \textbf{-21.14} $\pm$0.85 \,(-21.8) \\
PT$_{10}$-0.05& -28.28 $\pm$0.00 \,(-28.3) & \textbf{-24.80} $\pm$0.52 \,(-24.9)\\
\hline
\end{tabular}
\end{sc}
\end{small}
\end{center}
\caption {Maximum average LL during training for \emph{00} on the \emph{Flipped Shifting Bar} dataset, where 
 the visible bias is initialized to zero or to the inverse sigmoid of the data mean.}
\label{tab:init00_SB_FLIP}
\end{table}

\begin{table}
\begin{center}
\begin{small}
\begin{sc}
\begin{tabular}{l l l }
\hline
\abovespace\belowspace
Algorithm-$\eta$ & \multicolumn{1}{c}{$00~init ~zero$} &   \multicolumn{1}{c}{$00~init ~ \sigma^{-1}$} \\
\hline
\abovespace
CD-1-0.1& \textbf{-165.91} $\pm$1.90 \,(-168.4) & -167.61 $\pm$1.44 \,(-168.9) \\
CD-1-0.05& \textbf{-167.68} $\pm$1.66 \,(-169.0) & -168.72 $\pm$1.36 \,(-170.8) \\
CD-1-0.01& -171.29 $\pm$1.49 \,(-172.4) & \textbf{-168.29} $\pm$1.54 \,(-171.1) \\
PCD-1-0.1& -160.74 $\pm$4.87 \,(-169.4) & \textbf{-147.56} $\pm$1.17 \,(-156.3) \\
PCD-1-0.05& -173.42 $\pm$4.42 \,(-178.1) & \textbf{-144.20} $\pm$0.97 \,(-149.7) \\
PCD-1-0.01& -198.00 $\pm$4.78 \,(-198.4) & \textbf{-144.06} $\pm$0.47 \,(-145.0) \\
PT$_{10}$-0.01& -148.76 $\pm$1.15 \,(-153.6) & \textbf{-145.63} $\pm$0.66 \,(-149.4) \\
\hline
\end{tabular}
\end{sc}
\end{small}
\end{center}
\caption {Maximum average LL during training for \emph{00} on the \emph{MNIST} dataset, where the visible bias is initialized to zero or to the inverse sigmoid of the data mean.}
\label{tab:InitMNIST00}
\end{table}

\begin{table}
\begin{center}
\begin{small}
\begin{sc}
\begin{tabular}{l l l }
\hline
\abovespace\belowspace
Algorithm-$\eta$ & \multicolumn{1}{c}{$dd_s^l~init ~zero$} &   \multicolumn{1}{c}{$dd_s^l~init ~ \sigma^{-1}$} \\
\hline
\abovespace
CD-1-0.2& \textbf{-20.34} $\pm$0.74 \,(-20.6) & \textbf{-20.42} $\pm$0.80 \,(-20.8) \\
CD-1-0.1& \textbf{-20.75} $\pm$0.79 \,(-20.9) & \textbf{-20.85} $\pm$0.82 \,(-21.0) \\
CD-1-0.05& -23.00 $\pm$0.72 \,(-23.0) & \textbf{-22.63} $\pm$0.66 \,(-22.6) \\
PCD-1-0.2& \textbf{-21.03} $\pm$0.51 \,(-30.6) & \textbf{-20.97} $\pm$0.65 \,(-32.3) \\
PCD-1-0.1& \textbf{-20.86} $\pm$0.75 \,(-23.0) & \textbf{-20.72} $\pm$0.50 \,(-23.1) \\
PCD-1-0.05& -22.75 $\pm$0.66 \,(-22.8) & \textbf{-22.30} $\pm$0.64 \,(-22.4) \\
PT$_{10}$-0.2& \textbf{-20.08} $\pm$0.38 \,(-20.5) & \textbf{-20.25} $\pm$0.55 \,(-20.7) \\
PT$_{10}$-0.1& \textbf{-20.56} $\pm$0.69 \,(-20.7) & \textbf{-20.68} $\pm$0.69 \,(-20.8) \\
PT$_{10}$-0.05& -22.93 $\pm$0.72 \,(-22.9) & \textbf{-22.39} $\pm$0.65 \,(-22.4) \\
\hline
\end{tabular}
\end{sc}
\end{small}
\end{center}
\caption {Maximum average LL during training for $dd_s^l$ on the \emph{Flipped Shifting Bar} dataset, where 
 the visible bias is initialized to zero or to the inverse sigmoid of the data mean.}
\label{tab:initdd_SB_FLIP}
\end{table}

The following set of experiments was done to analyze the effects of different initializations of the parameters as discussed in Section~\ref{sec:initialization}. First, we trained normal binary RBMs (that is \emph{00}) where the visible bias was initialized to zero or to the inverse sigmoid of the data mean. 
In both cases the hidden bias was initialized to zero.
Table~\ref{tab:init00_SB_FLIP} shows the results for normal binary RBMs trained on the \emph{Flipped Shifting Bar} dataset, where RBMs with zero initialization failed to learn the distribution accurately. 
The RBMs using the inverse sigmoid initialization achieved good performance and therefore seem to be less sensitive to the `difficult' representation of the data. 
However, the results are not as good as the results of the centered RBMs shown in Table~\ref{tab:originalMethods2}.
The same observations can be made when training RBMs on the \emph{MNIST} dataset (see Table~\ref{tab:InitMNIST00}).
The RBMs with inverse sigmoid initialization achieved significantly better results than RBMs initialized to zero in the case of PCD and PT, but still worse compared to the centered RBMs.
Furthermore, using the inverse sigmoid initialization allows us to achieve similar performance on the flipped and normal version of the \emph{MNIST} dataset, while the RBM with zero initialization failed to learn \emph{1-MNIST} at all. 

\begin{table}[t]
\begin{center}
\begin{small}
\begin{sc}
\begin{tabular}{l l l}
\hline
\abovespace\belowspace
Algorithm-$\eta$ &   \multicolumn{1}{c}{$dd_s^b$} & \multicolumn{1}{c}{$dd_s^l$} \\
\hline
\abovespace
Bars \& Stripes &  &      \\
CD-1-0.1 & \textbf{-60.34} $\pm$2.18 & \textbf{-60.41} $\pm$2.08  \\
CD-1-0.05 & \textbf{-60.19} $\pm$1.98 & \textbf{-60.25} $\pm$2.13  \\
CD-1-0.01 & \textbf{-61.23} $\pm$1.49 & \textbf{-61.22} $\pm$1.49  \\
PCD-1-0.1 & \textbf{-54.86} $\pm$1.52 & \textbf{-54.75} $\pm$1.46  \\
PCD-1-0.05 & \textbf{-53.71} $\pm$1.45 & \textbf{-53.60} $\pm$1.48  \\
PCD-1-0.01 & \textbf{-56.68} $\pm$0.74 & \textbf{-56.68} $\pm$0.73  \\
PT$_{10}$-0.1 & \textbf{-51.25} $\pm$1.09 & \textbf{-51.13} $\pm$0.85 \\
PT$_{10}$-0.05 & \textbf{-52.06} $\pm$1.38 & \textbf{-51.87} $\pm$1.05  \\
PT$_{10}$-0.01 & \textbf{-56.72} $\pm$0.77 & \textbf{-56.73} $\pm$0.77  \\
\hline
\abovespace
\emph{MNIST} &  &    \\
\abovespace
CD-1-0.1 & \textbf{-150.60} $\pm$1.55 & -150.87 $\pm$1.53\\
CD-1-0.05 & \textbf{-150.98} $\pm$1.90 & -151.21 $\pm$1.89  \\
CD-1-0.01 & \textbf{-152.23} $\pm$1.75 & -152.39 $\pm$1.81  \\
PCD-1-0.1 & \textbf{-141.11} $\pm$0.53 & \textbf{-140.89} $\pm$0.61  \\
PCD-1-0.05 & \textbf{-139.95} $\pm$0.47 & \textbf{-140.02} $\pm$0.45  \\
PCD-1-0.01 & \textbf{-140.67} $\pm$0.46 & \textbf{-140.68} $\pm$0.42  \\
PT$_{10}$-0.01 & \textbf{-141.56} $\pm$0.52 & \textbf{-141.46} $\pm$0.54  \\
\hline
\end{tabular}
\end{sc}
\end{small}
\end{center}
\caption {Maximum average LL during training on (top) the \emph{Bars \& Stripes} dataset and (bottom) the \emph{MNIST} dataset, using the reparameterization before ($dd_s^b$) and after ($dd_s^l$) the gradient update.}
\label{tab:algo_BAS}
\end{table}

Second, we trained models using the centering versions $dd$, $aa$, and \emph{d0}  comparing the initialization suggested in Section~\ref{sec:initialization} against the initialization to zero, where we observed that the different ways to initialize had little effect on the performance. 
In most cases the results show no significant difference in terms of the maximum LL reached during trials with different initializations or slightly better results were found when using the inverse sigmoid, which can be explained by the better starting point 
yielded by this initialization.
See Table~\ref{tab:initdd_SB_FLIP} for the results for $dd_s^l$ on the \emph{Bars \& Stripes} dataset as an example. We used the inverse sigmoid initialization in the following experiments since it was beneficial for normal RBMs.

\subsection{Reparameterization}

To investigate the effects of performing the reparameterization 
before or after the gradient update in the training of centered RBMs
(that is, the difference of the algorithm suggested here and 
the algorithm suggested by \citet{MontavonMueller-2012}),
we analyzed the learning behavior of $dd_s^b$ and $dd_s^l$ 
on all datasets.
The results for RBMs trained on the \emph{Bar \& Stripes} dataset are given in  Table~\ref{tab:algo_BAS}~(top). 
No significant difference between the two versions can be observed. 
The same result was obtained for the \emph{Shifting Bar} and \emph{Flipped Shifting Bar} dataset. 
The results for the \emph{MNIST} dataset are shown in Table~\ref{tab:algo_BAS}~(bottom). 
Here $dd_s^b$ performs slightly better than $dd_s^l$ in the case of CD and no difference could be observed for PCD and PT. 
Therefore, we reparameterize the RBMs before the gradient update in the remainder of this work.

\subsection{Analyzing the Model Mean Related Divergence Effect}\label{sec:divergence}

\begin{figure}[t]
\centering
\subfigure[Exact gradient with approximated offsets]{\label{subfig:TrueGrad_A}
\includegraphics[trim=0.6cm 0cm 1.55cm 0cm, clip=true,scale=0.412]{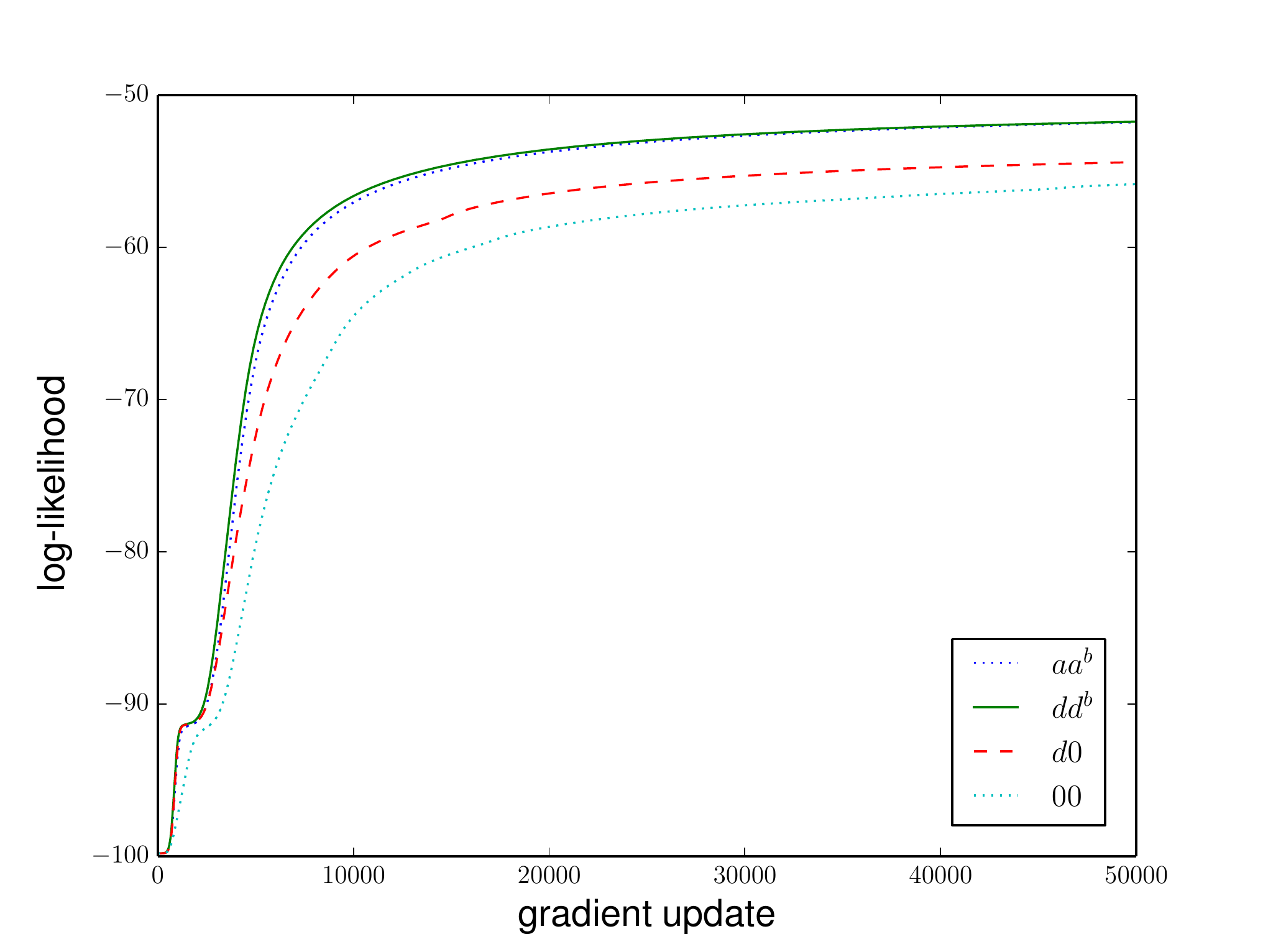}}
\subfigure[PT$_{10}$ with exact means]{\label{subfig:TrueGrad_B}
\includegraphics[trim=1.56cm 0cm 1.55cm 0cm, clip=true,scale=0.418]{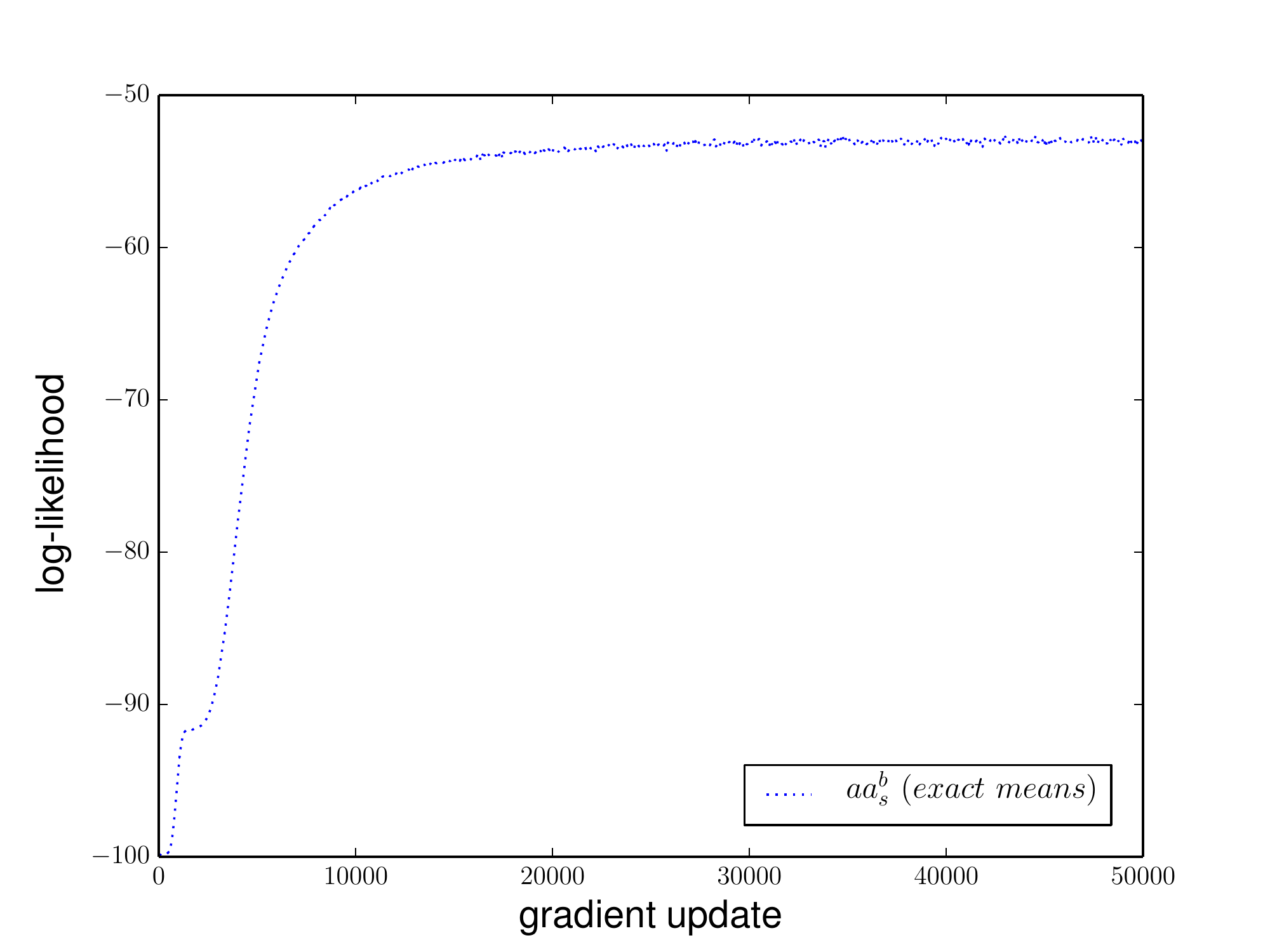}}
\caption{Evolution of the average LL during training on the \emph{Bars \& Stripes} dataset for the standard centering methods. 
(a) When the exact gradient is used, with approximated offsets and (b) when PT$_{10}$ is used for estimating the gradient while the mean values for the offsets are calculated exactly. In both cases a learning rate of $\eta=0.05$ was used.}
\label{fig:TrueGrad}
\end{figure} 
\begin{figure}[t]
\centering
\subfigure[Close up of the offset evolution exemplary for $\lambda_1$]{\label{subfig:para_div_A}
\includegraphics[trim=3.6cm 0cm 1.55cm 1cm, clip=true,scale=0.4]{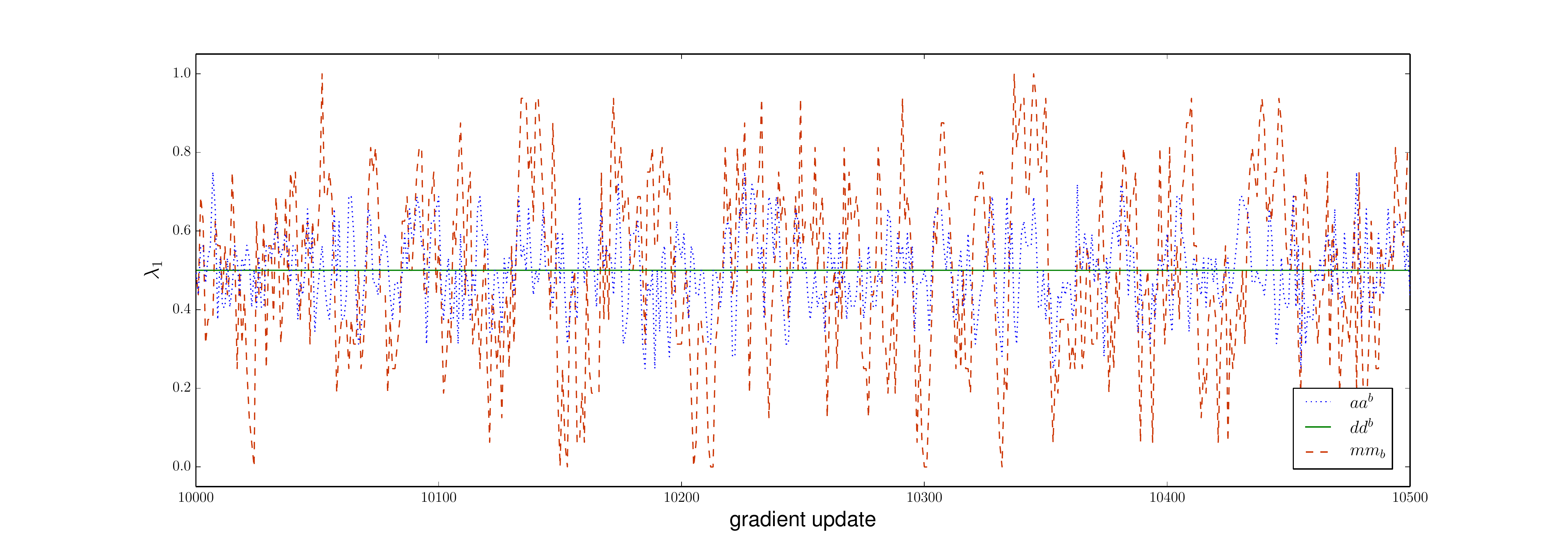}}
\subfigure[Evolution of the offset approximation error]{\label{subfig:para_div_B}
\includegraphics[trim=0.8cm 0cm 1.5cm 1cm, clip=true,scale=0.4]{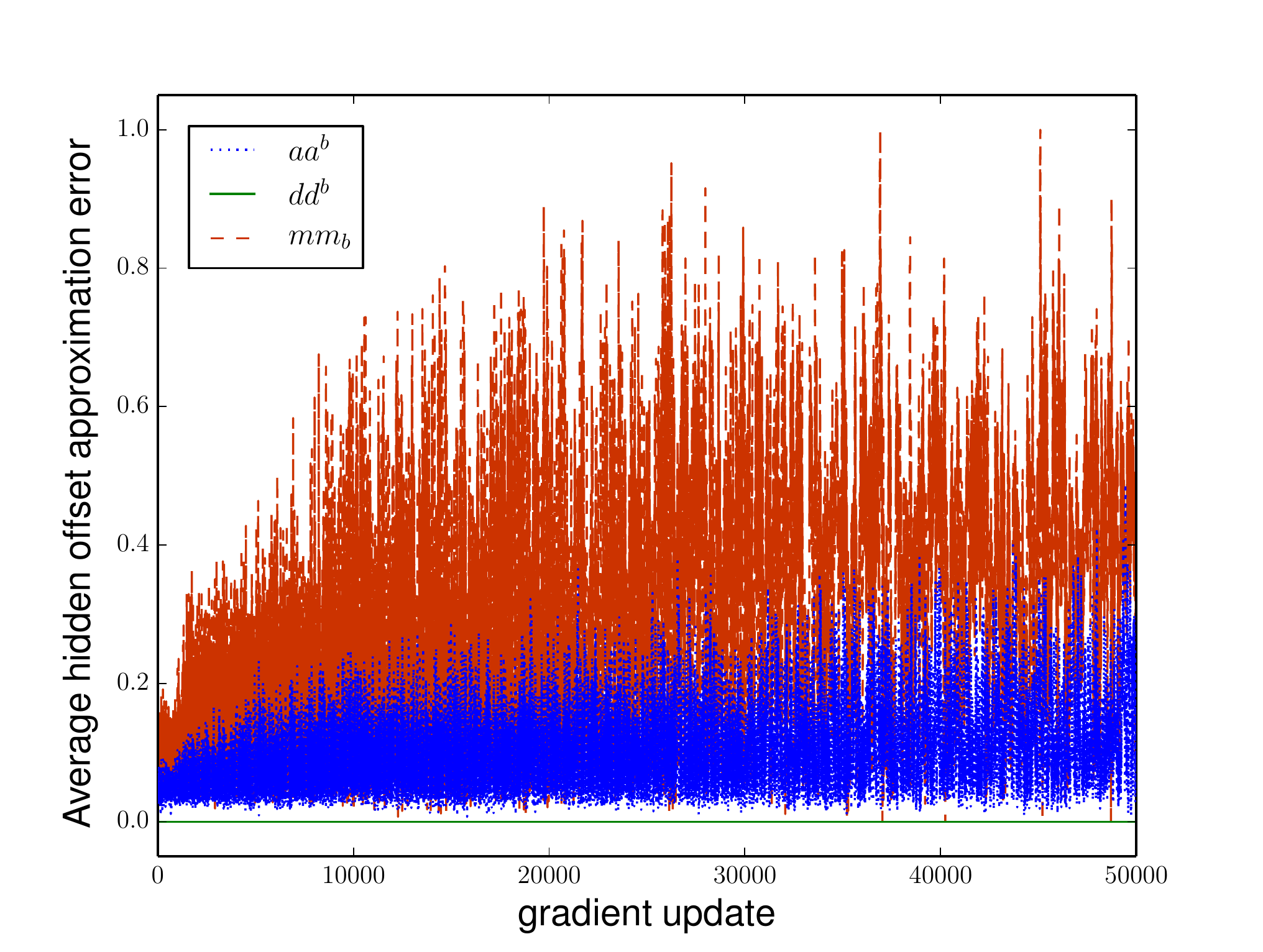}}
\subfigure[Evolution of the weight norm]{\label{subfig:para_div_C}
\includegraphics[trim=0.8cm 0cm 1.5cm 1cm, clip=true,scale=0.4]{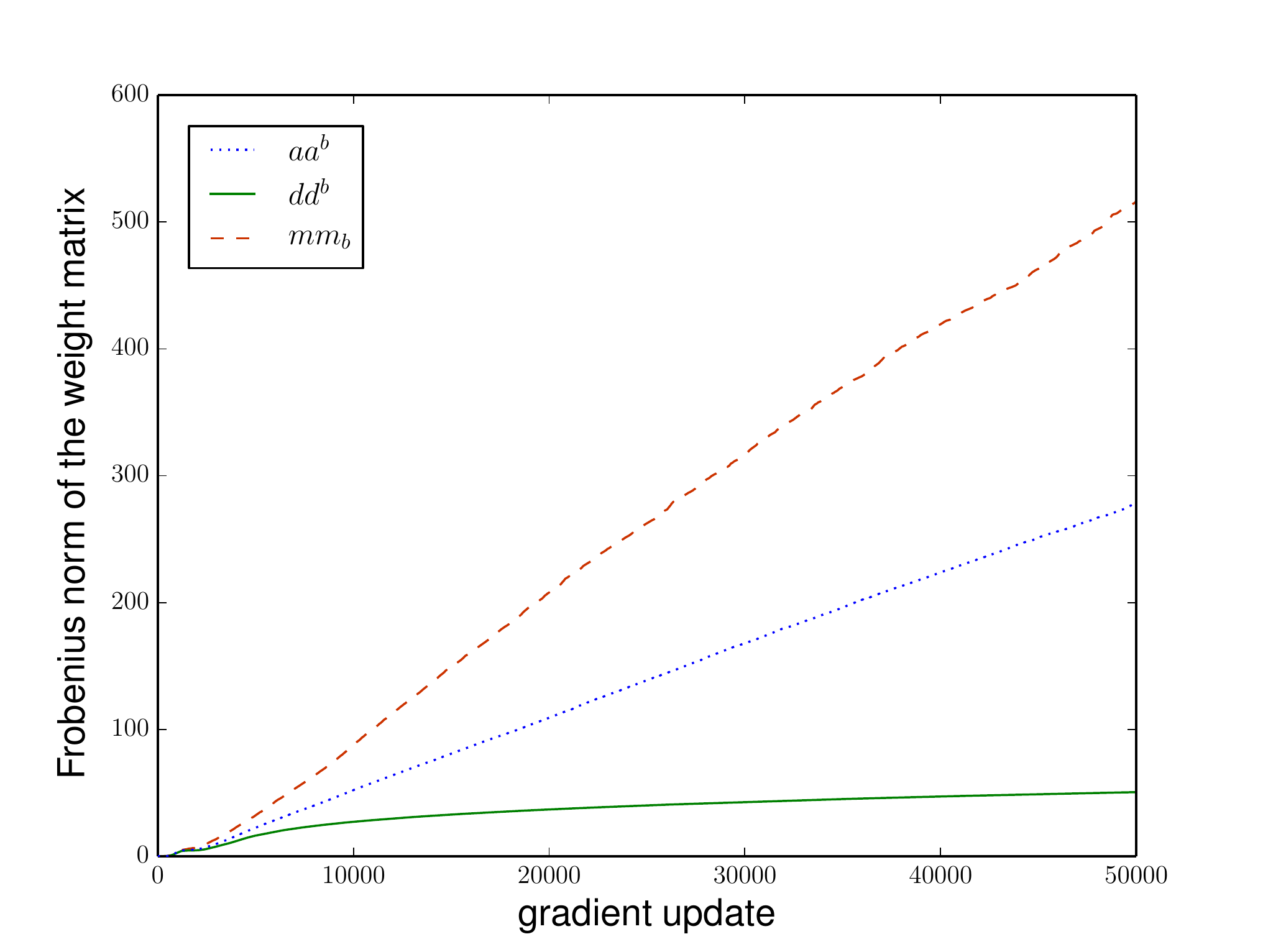}}
\caption{
Evolution of the offsets and weights of different centered variants for an RBM with 9 visible and 4 hidden units trained on \emph{Bars \& Stripes 3x3} using PT$_{10}$. For clearness a single trial is shown, but the experiments where repeated 25 trials where all results showed quantitatively the same results. (a) Close up of the evolution of offset $\lambda_1$ over 500 gradient updates. 
(b) The evolution of the absolute difference between exact and approximated hidden offset averaged of all hidden units. (c) The evolution of the Frobenius norm of the weight matrices. 
}
\label{fig:para_div}
\end{figure} 

The severe divergence  problem observed when using the enhanced gradient in combination with PCD or PT raises the question whether the problem is induced by setting the offsets to $ 0.5 (\langle \vect x \rangle_d + \langle \vect x \rangle_m)$ and $ 0.5 (\langle \vect h \rangle_d + \langle \vect h \rangle_m)$ or by bad sampling based estimates of gradient and offsets.
We therefore trained centered RBMs with 9 visible and 4 hidden units on the 2x2 \emph{Bars \& Stripes} dataset using either the exact gradient where only $\langle \vect x \rangle_m$ and $\langle \vect h \rangle_m$ were approximated by samples or using PT$_{10}$ estimates of the gradient while $\langle \vect x \rangle_m$ and $\langle \vect h \rangle_m$ were calculated exactly. The results are shown in Figure~\ref{fig:TrueGrad}.
If the true model expectations are used as offsets instead of the sample approximations no divergence for $aa$ in combination with PT is observed and the performance of $aa$ and $dd$ become almost equivalent.
Interestingly, the divergence is also prevented if one calculates the exact gradient while still approximating the offsets by samples. 
Thus, the divergence behavior must be induced by approximating both, gradient and offsets. 
The mean under the data distribution can always be calculated exactly, which might explain why we do not observe divergence for $dd$ in combination with PT. 
On the contrary, the divergence becomes even worse when using just the model mean (a centering variant which we denote by $mm$ in the following) instead of the average of data and model mean (as in $aa$) as offsets, as can be seen in Figure~\ref{fig:NoiseMean}.
Furthermore, the divergence also occurs if either visible or hidden offsets are set to the PT-approximated model mean, which can be seen for $dm$ in Figure~\ref{fig:move_average}(b).

To further deepen the understanding of the divergence effect we investigated the parameter evolution during training of RBMs with different offsets. We observed that the change of the offset values between two gradient updates gets extremely large during training when using the model mean. Figure~\ref{subfig:para_div_A} shows exemplary the evolution of the first hidden offset $\lambda_1$ for a single trial, where the offset approximation for $dd^b$ is almost constant while it is rather large for $aa^b$ and even bigger for $mm^b$. In each iteration we calculated the exact offsets to estimate the approximation error shown in Figure~\ref{subfig:para_div_B}. Obviously, there is no approximation error for $dd$ while the error for $aa$ quickly gets large and $mm$ gets even twice as big. In combination with the gradient approximation error this causes the weight matrices for $aa$ and $mm$ to grow extremely big as shown in Figure~\ref{subfig:para_div_C}.

\begin{figure}[t]
\centering
\subfigure[Random offsets]{\label{subfig:Divergence_noise}
\includegraphics[trim=0.6cm 0cm 1.55cm 0cm, clip=true,scale=0.412]{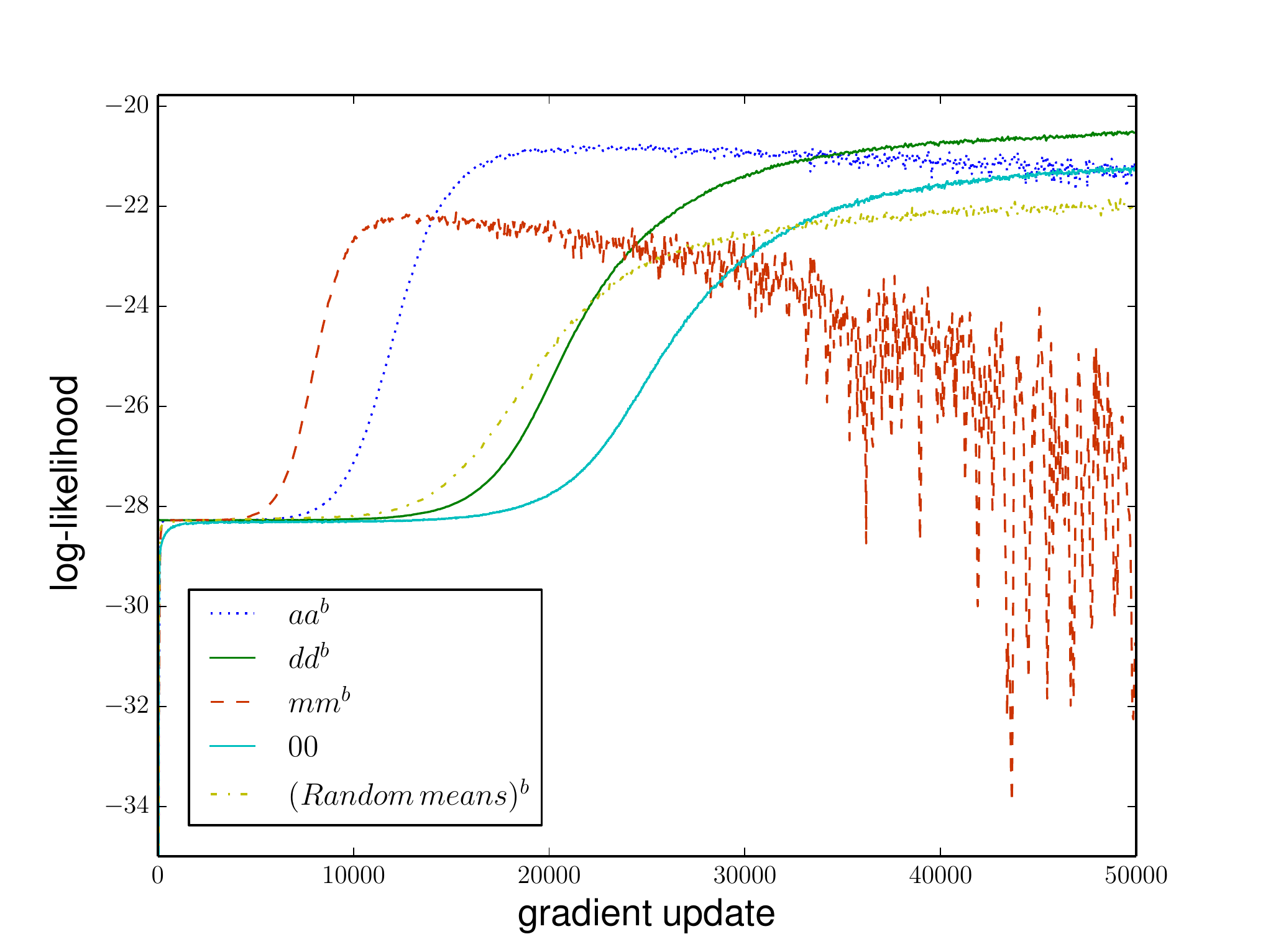}}
\subfigure[Independant approximations]{\label{subfig:IndiPT}
\includegraphics[trim=1.56cm 0cm 1.48cm 0cm, clip=true,scale=0.412]{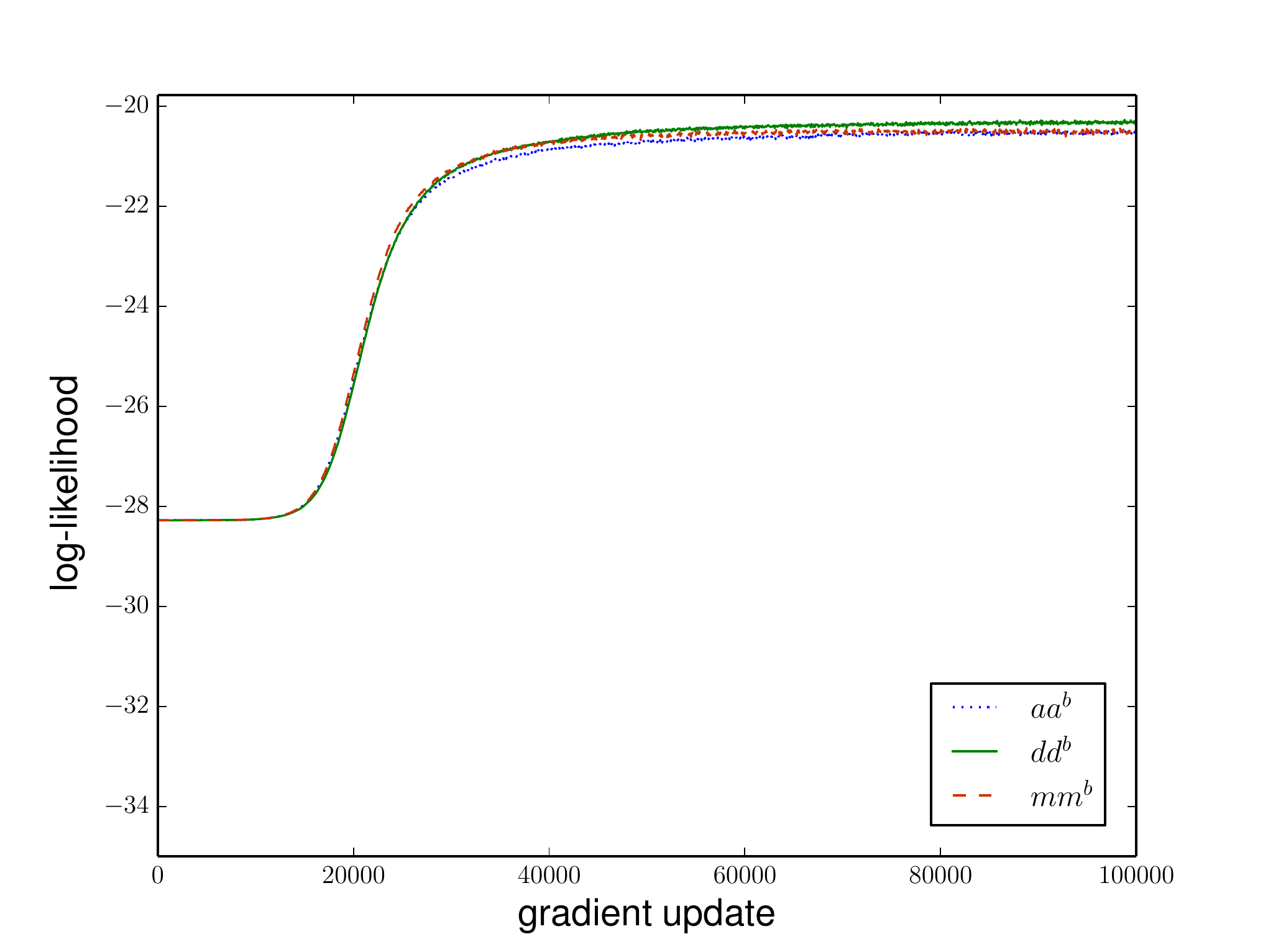}}
\caption{Evolution of the average LL during training on the \emph{Shifting Bars} dataset using PT$_{10}$ with learning rate of $\eta=0.1$. (a) normal RBMs, centered RBMs and centered RBMs using random offset values. (b) centered RBMs when the samples for the offset approximations come from a different Markov chain than the samples used for the gradient estimation.}
\label{fig:NoiseMean}
\end{figure} 
To verify that the divergence is not just caused by the additional sampling noise introduced by approximating the offsets, we trained a centered RBM using PT for the gradient approximation while we set the offsets to uniform random values between zero and one in each iteration. The results are shown in Figure~\ref{fig:NoiseMean}(a) and demonstrate that even random offset values do  not lead to the divergence problems. Thus, the divergence seems not be caused by additional sampling noise but rather by the correlated errors of gradient and offset approximations. 
To test this hypothesis we investigated $mm^b$ where the samples for offset and gradient approximations were taken from different PT sampling processes. The results are shown in Figure~\ref{fig:NoiseMean}(b) where no divergence can be observed anymore. 
While creating two sets of samples for gradient and offset approximations prevents the LL from diverging it almost doubles the computational cost and can therefore not be considered as a relevant solution in practice.
Moreover, using the model mean as offset still leads to slightly worse final LL values than using the mean under the data distribution. This might be explained by the fact that the additional approximation of the model mean introduces noise while the data mean can be estimated exactly.

Interestingly, the observed initially faster learning speed of $mm$ and $aa$,  which can be seen in Figure~\ref{fig:NoiseMean}(a), does not occur anymore when offset and gradient approximation are based on different sample sets. This observation can also be made when the exact gradient is used (see Figure~\ref{fig:TrueGrad}(a) and Figure~\ref{fig:TrueGradTrueAll}). Thus, the initially faster learning speed seems also be caused by the correlated approximations of gradient and offsets.

\subsection{Usage of an Exponentially Moving Average}

An exponentially moving average can be used to smooth the approximation of the offsets between parameter updates. This seems to be reasonable for stabilizing the approximations when small batch sizes are used as well as when the model mean is used for the offsets.
We therefore analyzed the impact of using an exponentially moving average with a sliding factor of $0.01$ for the estimation of the offset parameters.
Figure~\ref{subfig:BASPT005SHIFT_A} illustrates on the~\emph{Bars\& Stripes} dataset that the learning curves of the different models become almost equivalent when using an exponentially moving average. 
The maximum LL values reached are the same whether an exponentially moving average is used or not, 
 which can be seen by comparing Figure~\ref{subfig:BASPT005SHIFT_A} and Figure~\ref{subfig:BASPT005SHIFT_B} and also by comparing the results in Table~\ref{tab:originalMethods1} and Table~\ref{tab:originalMethods2} with those in Table~\ref{tab:final}. Notably, the divergence problem does not occur anymore when an exponentially moving average is used. 
As discussed in the previous section, this problem is caused by the correlation between the approximation error of gradient and offsets. When using an exponential moving average the current offsets contain only a small fraction of the current mean such that the correlation is reduced.
\\
In the previous experiments $dd$ was used with an exponentially moving average as
suggested for this centering variant by \citet{MontavonMueller-2012}.
Note however, that in batch learning when $\langle \vect x \rangle_d$ is used for the visible offsets, these values stay constant such that an exponentially moving average has no effect. More generally if the training data and thus $\langle \vect x \rangle_d$ is known in advance the visible offsets should be fixed to this value independent of whether batch, mini-batch or online learning is used. 
However, the use of an exponentially moving average for approximating $\langle \vect x \rangle_d$ is reasonable 
if the training data is not known in advance, as well as for the approximation of
the mean of the hidden representation $\langle \vect h \rangle_d$.
\\
In our experiments, $dd$ does not suffer from the divergence 
problem when PT is used for sampling, even without exponentially moving average, 
as can be seen in Figure~\ref{subfig:BASPT005SHIFT_B} for example. 
We did not even observe the divergence without a moving average in the
case of mini-batch learning. Thus, $dd$ seems to be generally more
stable than the other centering variants.

\subsection{Other Choices for the Offsets}

\begin{figure}[t]
\centering
\subfigure[Visible and hidden units centered]{\label{subfig:TrueGradTrueOffset_a}
\includegraphics[trim=0.6cm 0cm 1.55cm 0cm, clip=true,scale=0.412]{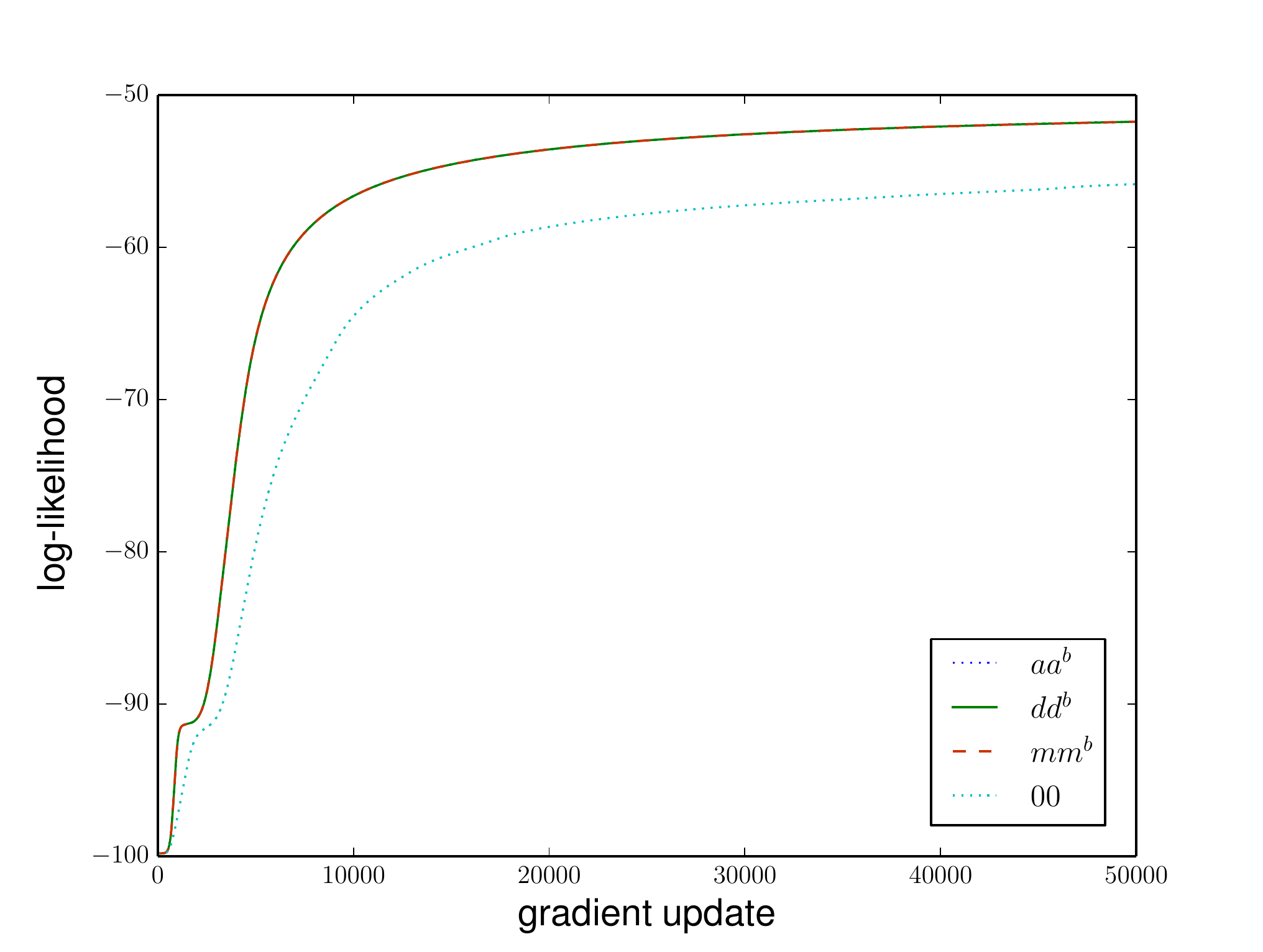}}
\subfigure[Visible or hidden units centered]{\label{subfig:TrueGradTrueOffset_b}
\includegraphics[trim=1.56cm 0cm 1.55cm 0cm, clip=true,scale=0.412]{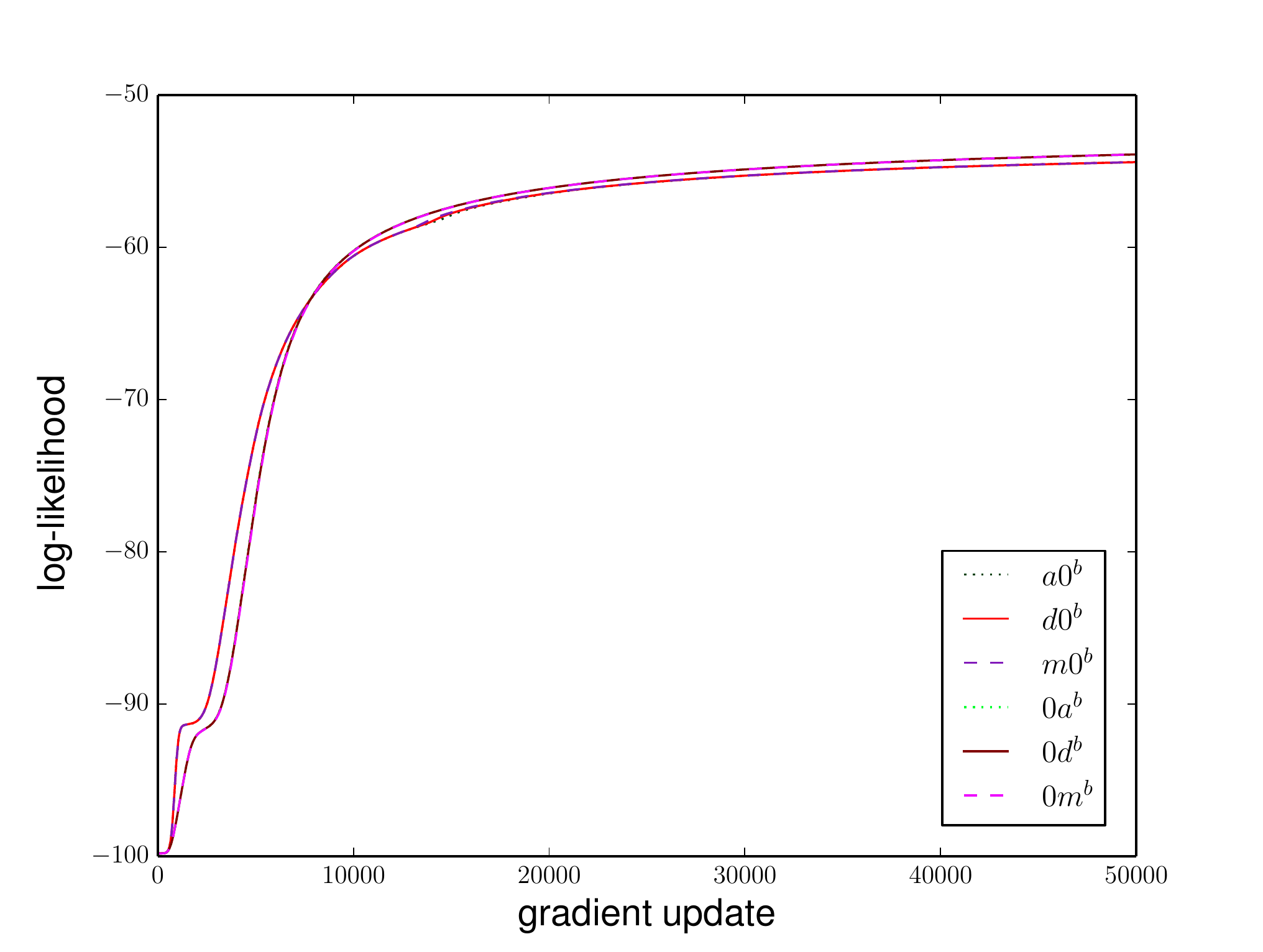}}
\caption{Evolution of the average LL during training on the \emph{Bars \& Stripes} dataset for the various centering methods when the exact gradient is used with the exact offsets and a learning rate of $\eta=0.05$.}
\label{fig:TrueGradTrueAll}
\end{figure} 

As discussed in Section~\ref{sec:CenteredRBMs}, any offset value between $0$ and $1$ guarantees the flip invariance property as long as it flips simultaneously with the data. 
An intuitive and constant choice is to set the offsets to $0.5$, 
which has also been proposed by \citet{OllivierArnoldEtAl-2013} and results in a symmetric variant of the energy of RBMs. 
This leads to comparable LL values on flipped and unflipped datasets. 
However, if the dataset is unbalanced in the amount of zeros and ones like \emph{MNIST}, the performances is always worse compared to that of a normal RBM on the version of the dataset which has less ones than zeros. 
Therefore, fixing the offset values to 0.5 cannot be considered as an alternative for centering using expectation values over  data or model distribution.

In Section~\ref{sec:CenteredRBMs}, we mentioned the existence of alternative offset parameters which lead to the same updates for the weights as the enhanced gradient. 
Setting $\vect{\mu} = \langle  \vect{x} \rangle_{d}$ and $\vect{\lambda} = \langle  \vect{h} \rangle_{m}$ seems reasonable since the data mean is usually known in advance. 
As mentioned above we refer to centering with this choice of offsets as $dm$.
We trained RBMs with $dm_s^b$ using a sliding factor of $0.01$. 
The results are shown in Table~\ref{tab:final} and suggest that there is no significant difference between $dm_s^b$, $aa_s^b$, and $dd_s^b$. 
However, without an exponentially moving average $dm^b$ has the same divergence problems as $aa^b$, as shown in Figure~\ref{subfig:BASPT005SHIFT_B}. 

We further tried variants like $mm$, $m0$, $0d$, $m0$ etc. but did not find better performance than that of $dd$ for any of these choices. 
The variants that subtract an offset from both,  visible and  hidden variables  outperformed or achieved the same performance as the variants that subtract an offset only from one type of variables. When the model expectation was used without a exponentially moving average either for  $\vect \mu$ or $\vect\lambda$, or for both offsets we always observed the divergence problem.

Interestingly, if the exact gradient and offsets are used for training no significant difference can be observed in terms of the LL evolution whether data mean, model mean or the average of both is used for the offsets as shown in Figure~\ref{fig:TrueGradTrueAll}. But centering both visible and hidden units still leads to better results than centering only one. Furthermore, the results illustrate that centered RBMs outperform normal binary RBMs also if the exact gradient is used for training the both models. This emphasizes that the worse performance of normal binary RBMs is caused by the properties of its gradient rather than by the gradient approximation.

\begin{table}[h!]
\setlength{\tabcolsep}{2pt}
\begin{center}
\begin{small}
\begin{sc}
\hspace*{-0.3cm}
\begin{tabular}{l@{\hskip -0.13in} l l l }
\hline
\abovespace\belowspace
Algorithm-$\eta$ & \multicolumn{1}{c}{$aa_s^b$} & \multicolumn{1}{c}{$dd_s^b$} & \multicolumn{1}{c}{$dm_s^b$} \\
\hline
\abovespace
Bars \& Stripes &  &   &  \\
\abovespace
CD-1-0.1& \textbf{-60.09} $\pm$2.02 \,(-69.6) & \textbf{-60.34} $\pm$2.18 \,(-69.9) & \textbf{-60.35} $\pm$1.99 \,(-68.8) \\
CD-1-0.05& \textbf{-60.31} $\pm$2.10 \,(-64.2) & \textbf{-60.19} $\pm$1.98 \,(-63.6) & \textbf{-60.25} $\pm$2.13 \,(-64.2) \\
CD-1-0.01& \textbf{-61.22} $\pm$1.50 \,(-61.3) & \textbf{-61.23} $\pm$1.49 \,(-61.3) & \textbf{-61.23} $\pm$1.49 \,(-61.3) \\
PCD-1-0.1& \textbf{-54.78} $\pm$1.63 \,(-211.7) & \textbf{-54.86} $\pm$1.52 \,(-101.0) & \textbf{-54.92} $\pm$1.49 \,(-177.3) \\
PCD-1-0.05& \textbf{-53.81} $\pm$1.58 \,(-89.9) & \textbf{-53.71} $\pm$1.45 \,(-67.7) & \textbf{-53.88} $\pm$1.54 \,(-83.3) \\
PCD-1-0.01& \textbf{-56.48} $\pm$0.74 \,(-56.7) & -56.68 $\pm$0.74 \,(-56.9) & \textbf{-56.47} $\pm$0.74 \,(-56.6) \\
PT$_{10}$-0.1& \textbf{-51.20} $\pm$1.11 \,(-52.4) & -51.25 $\pm$1.09 \,(-52.3) & \textbf{-51.10} $\pm$1.02 \,(-52.5) \\
PT$_{10}$-0.05& \textbf{-51.99} $\pm$1.39 \,(-52.6) & -52.06 $\pm$1.38 \,(-52.6) & \textbf{-51.82} $\pm$1.05 \,(-52.4) \\
PT$_{10}$-0.01& \textbf{-56.65} $\pm$0.77 \,(-56.7) & -56.72 $\pm$0.77 \,(-56.7) & \textbf{-56.67} $\pm$0.77 \,(-56.7) \\
\hline
\abovespace
Flipped Shifting Bar \hspace{-2cm} &  &  &  \\
\abovespace
CD-1-0.2& \textbf{-20.36} $\pm$0.74 \,(-20.7) & \textbf{-20.32} $\pm$0.69 \,(-20.6) & \textbf{-20.32} $\pm$0.70 \,(-20.6) \\
CD-1-0.1& \textbf{-20.80} $\pm$0.76 \,(-20.9) & \textbf{-20.86} $\pm$0.81 \,(-21.0) & \textbf{-20.69} $\pm$0.76 \,(-20.8) \\
CD-1-0.05& \textbf{-22.58} $\pm$0.64 \,(-22.6) & -22.64 $\pm$0.69 \,(-22.7) & -22.94 $\pm$0.73 \,(-23.0) \\
PCD-1-0.2& \textbf{-21.00} $\pm$0.65 \,(-41.5) & \textbf{-20.96} $\pm$0.49 \,(-31.0) & \textbf{-21.00} $\pm$0.68 \,(-38.3) \\
PCD-1-0.1& \textbf{-20.75} $\pm$0.53 \,(-23.4) & \textbf{-20.76} $\pm$0.53 \,(-22.8) & \textbf{-20.88} $\pm$0.70 \,(-23.2) \\
PCD-1-0.05& \textbf{-22.28} $\pm$0.68 \,(-22.3) & \textbf{-22.29} $\pm$0.64 \,(-22.3) & -22.68 $\pm$0.65 \,(-22.7) \\
PT$_{10}$-0.2& \textbf{-20.14} $\pm$0.45 \,(-20.7) & \textbf{-20.31} $\pm$0.61 \,(-20.7) & \textbf{-20.07} $\pm$0.38 \,(-20.5) \\
PT$_{10}$-0.1& \textbf{-20.42} $\pm$0.51 \,(-20.7) & \textbf{-20.46} $\pm$0.56 \,(-20.6) & \textbf{-20.60} $\pm$0.72 \,(-20.8) \\
PT$_{10}$-0.05& \textbf{-22.36} $\pm$0.64 \,(-22.4) & \textbf{-22.39} $\pm$0.69 \,(-22.4) & -22.86 $\pm$0.70 \,(-22.9) \\
\hline
\abovespace
\emph{MNIST} &  & &  \\
\abovespace
CD-1-0.1& \textbf{-150.61} $\pm$1.52 \,(-153.8) & \textbf{-150.60} $\pm$1.55 \,(-153.9) & \textbf{-150.50} $\pm$1.48 \,(-153.6) \\
CD-1-0.05& \textbf{-151.11} $\pm$1.55 \,(-153.2) & -150.98 $\pm$1.90 \,(-153.8) & \textbf{-150.80} $\pm$1.92 \,(-153.5) \\
CD-1-0.01& \textbf{-152.83} $\pm$2.42 \,(-153.3) & \textbf{-152.23} $\pm$1.75 \,(-152.6) & \textbf{-152.17} $\pm$1.72 \,(-152.5) \\
PCD-1-0.1& \textbf{-141.10} $\pm$0.64 \,(-145.4) & \textbf{-141.11} $\pm$0.53 \,(-145.7) & \textbf{-140.99} $\pm$0.56 \,(-144.8) \\
PCD-1-0.05& \textbf{-140.01} $\pm$0.58 \,(-142.9) & \textbf{-139.95} $\pm$0.47 \,(-142.6) & \textbf{-139.94} $\pm$0.46 \,(-142.7) \\
PCD-1-0.01& \textbf{-140.85} $\pm$0.47 \,(-141.6) & \textbf{-140.67} $\pm$0.46 \,(-141.4) & \textbf{-140.72} $\pm$0.39 \,(-141.5) \\
PT$_{10}$-0.01& -142.32 $\pm$0.47 \,(-145.7) & \textbf{-141.56} $\pm$0.52 \,(-143.3) & -142.18 $\pm$0.45 \,(-146.0) \\
\hline
\end{tabular}
\end{sc}
\end{small}
\end{center}
\caption{Maximum average LL during training on (top) \emph{Bars \& Stripes}, (middel) \emph{Flipped Shifting Bar}, and (bottom) \emph{MNIST} when using an exponentially moving average with an sliding factor of 0.01.} \label{tab:final}
\end{table}

\begin{figure}[t]
\centering
\subfigure[With exponentially moving average]{\label{subfig:BASPT005SHIFT_A}
\includegraphics[trim=0.6cm 0cm 1.57cm 0cm, clip=true,scale=0.413]{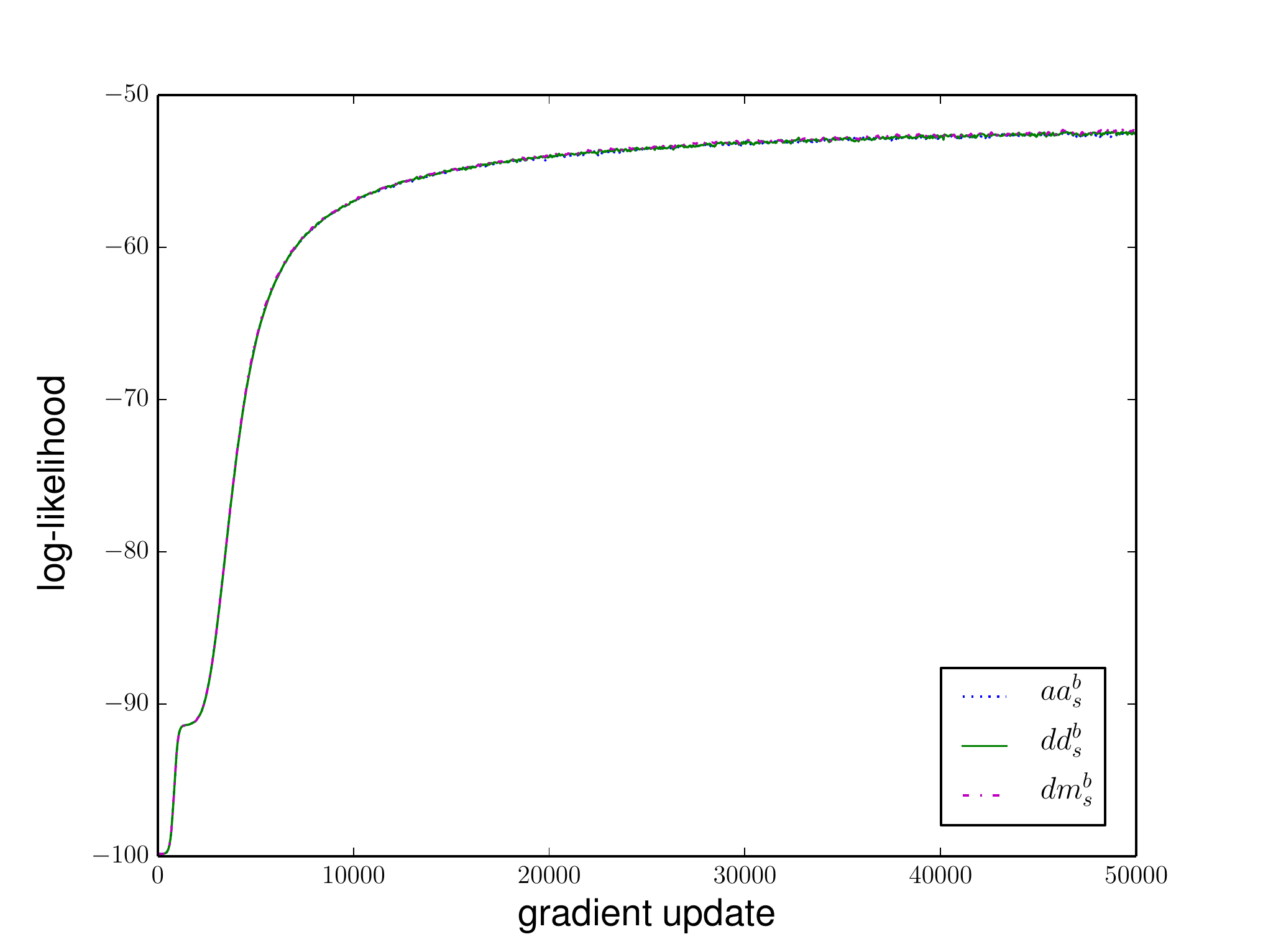}}
\subfigure[Without exponentially moving average]{\label{subfig:BASPT005SHIFT_B}
\includegraphics[trim=1.58cm 0cm 1.55cm 0cm, clip=true,scale=0.413]{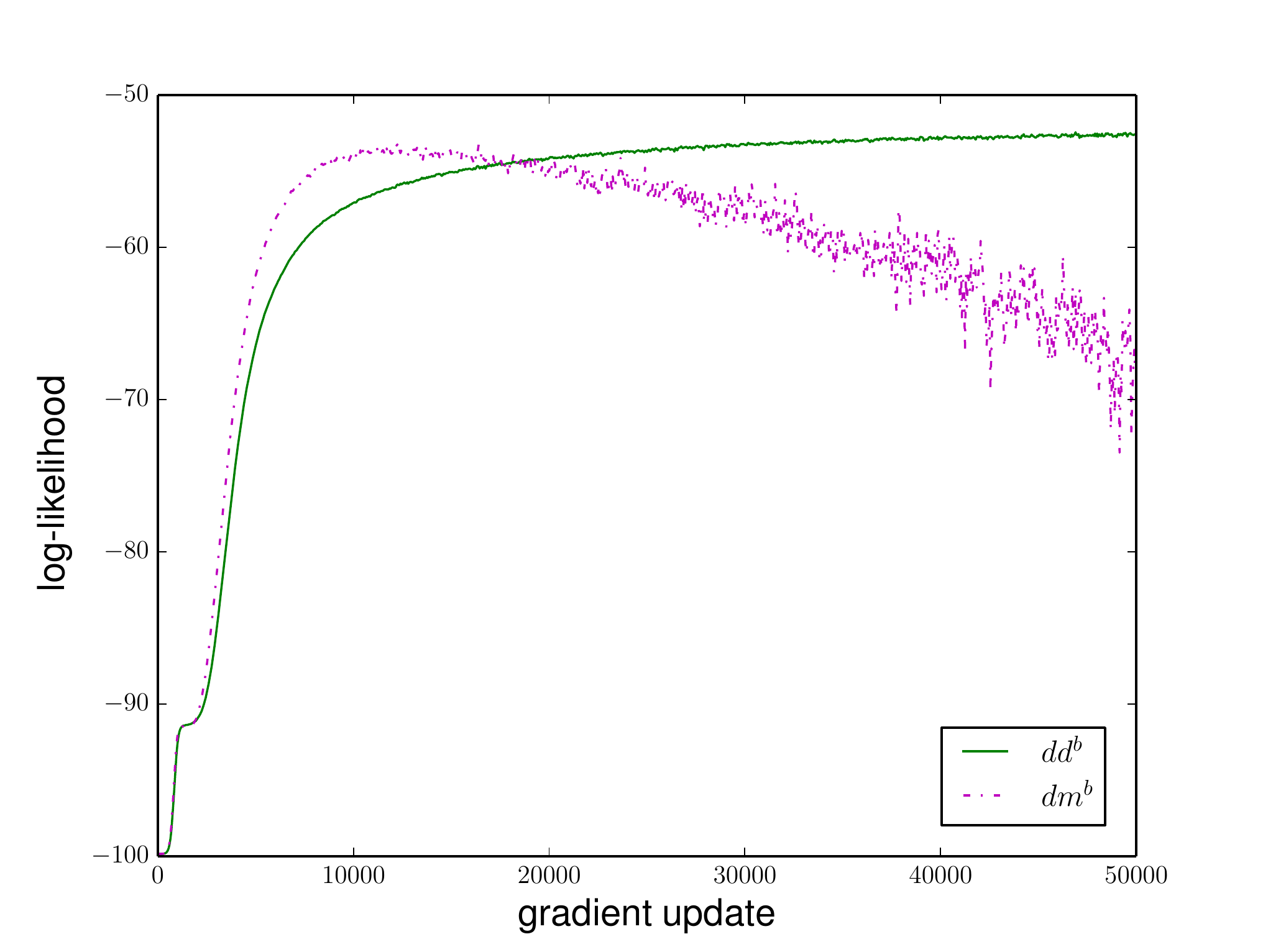}}
\caption{Evolution of the average LL during training on \emph{Bars \& Stripes} with the different centering variants, using PT$_{10}$, and a learning rate of $\eta=0.05$. (a) When an exponentially moving average with sliding factor of 0.01 was used (where the curves are almost equivalent) and (b) when no exponentially moving average was used.}
\label{fig:move_average}
\end{figure}

\subsection{Experiments with Big RBMs}

In the previous experiments we trained small models in order to be able to run  many experiments and to evaluate the LL exactly. 
We now want to show that the results we have observed on the toy problems and \emph{MNIST} with RBMs with $16$ hidden units carry over to more realistic settings. Furthermore, we want to investigate  the generalization performance of the different models.
In a first set of experiments we therefore trained the models \emph{00}, \emph{d0}, $dd_s^b$, and $aa_s^b$ with $500$ hidden units on \emph{MNIST} and \emph{Caltech}. The weight matrices were initialized with random values sampled from a
Gaussian with zero mean and a standard deviation of $0.01$ and visible and hidden biases, and offsets were initialized as described in Section~\ref{sec:initialization}. The LL was estimated using Annealed Importance Sampling (AIS), where we used the same setup as described in the analysis of~\citet{SalakhutdinovMurray-2008}. 

\begin{figure}[t]
\centering
\subfigure[PCD-$1$]{
\includegraphics[trim=0.6cm 0cm 1.43cm 0cm, clip=true,scale=0.41]{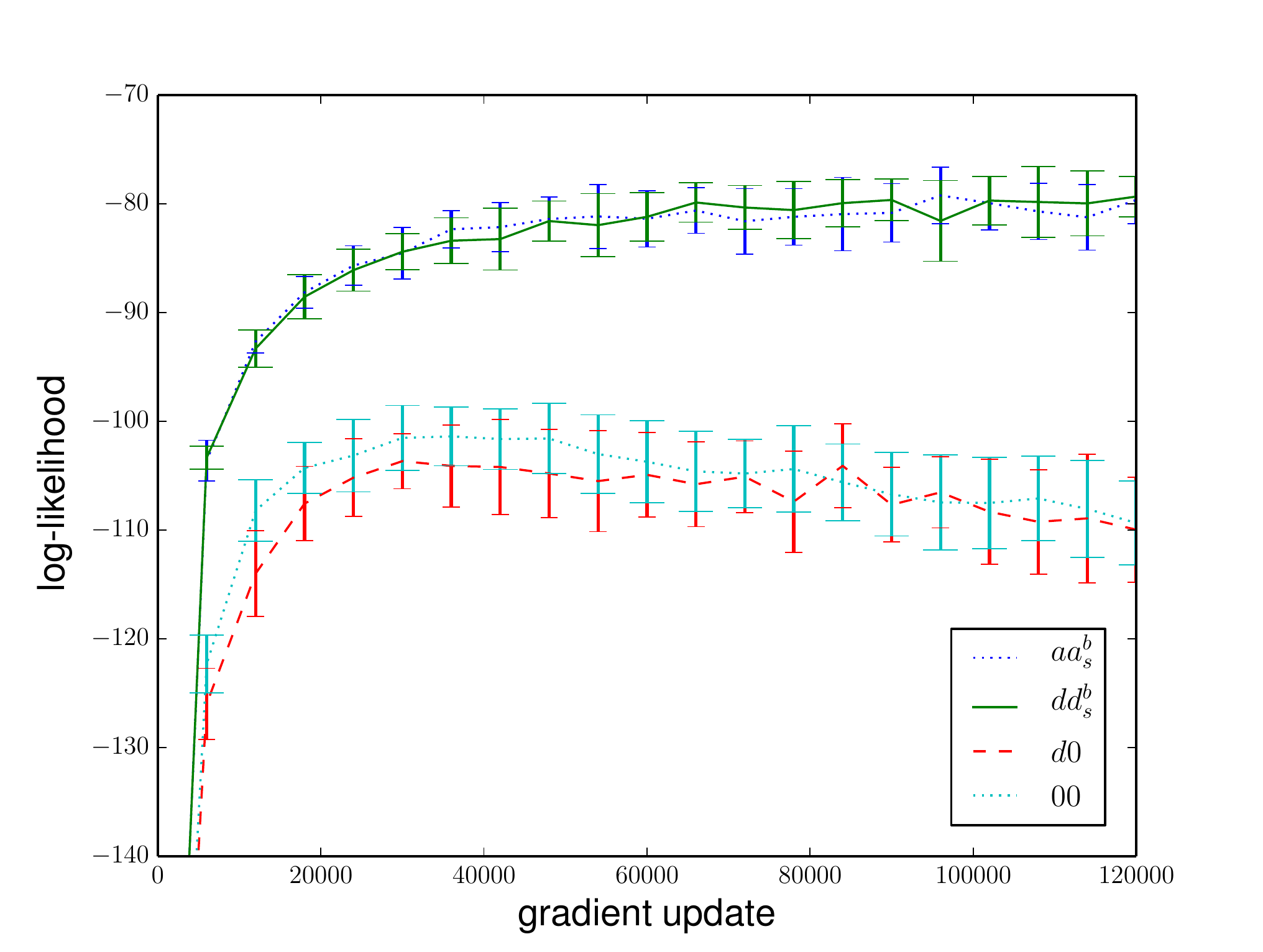}}
\subfigure[PT$_{20}$]{
\includegraphics[trim=1.6cm 0cm 1.43cm 0cm, clip=true,scale=0.41]{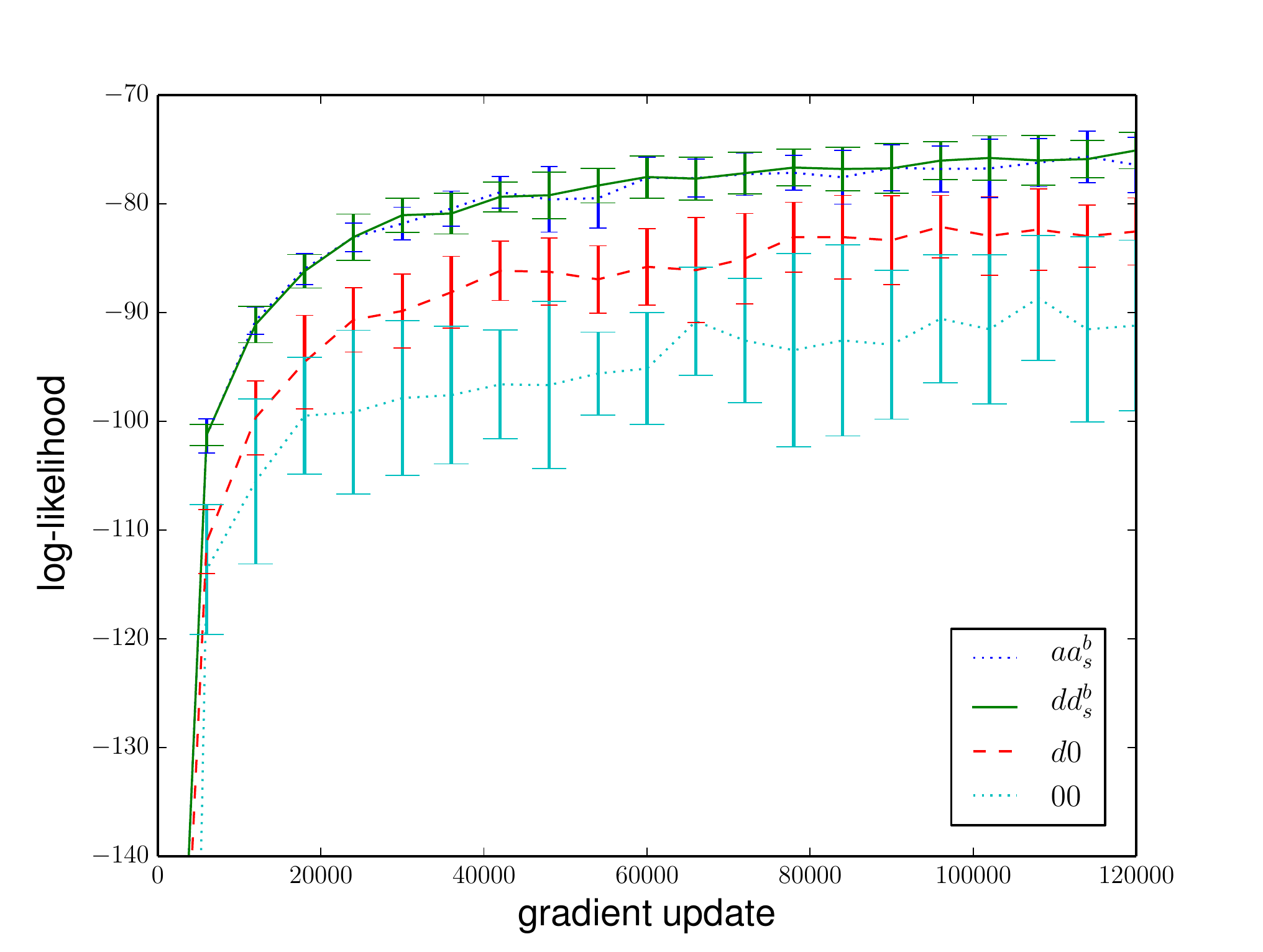}}
\caption{Evolution of the average LL on the test data of \emph{MNIST} during training for different centering variants with 500 hidden units, using a learning rate of $\eta=0.01$, and a sliding factor of 0.01. (a) When using PCD-$1$ and (b) when using $PT_{20}$ for sampling. The error bars indicate the standard deviation of the LL over the 25 trials. }
\label{fig:MNIST500}
\end{figure} 

Figure~\ref{fig:MNIST500} shows the evolution of the average LL on the test data of \emph{MNIST} over 25 trials for PCD-$1$ and PT$_{20}$ for the different centering versions. The models were trained for 200 epochs, each consisting of 600 gradient updates with a batch size of 100 and  the LL was estimated every 10th epoch using AIS. 
Both variants $dd_s^b$ and $aa_s^b$ reach significantly higher LL values than \emph{00} and \emph{d0}. 
The standard deviation over the 25 trials indicated by the error bars is smaller for $dd_s^b$ and $aa_s^b$ than for \emph{00} and \emph{d0}, especially when PT$_{20}$ is used for sampling. Furthermore, \emph{00} and \emph{d0} show divergence already after 30.000 gradient updates when PCD-$1$ is used, while no divergence can be observed for $dd_s^b$ and $aa_s^b$ after 120.000 gradient updates. The evolution of the LL on the training data is not shown, since it is almost equivalent to the evolution on the test data.
To our knowledge the best reported performance of an RBM with 500 hidden units carefully trained on \emph{MNIST} was -84~\citep{Salakhutdinov-2008, SalakhutdinovMurray-2008, TangSutskever-2011, ChoRaikoEtAl-2013}.\footnote{Note, that the preprocessing of \emph{MNIST} is usually done by treating the gray values (normalized to values in $[0,1]$) as probabilities. In different studies the probabilities are then either used directly as input, or the data set is binarized using a threshold of $0.5$ or by sampled according to the probabilities. 
This can make the LL values reported for \emph{MNIST} experiments difficult to compare across studies.}
In our experience choosing the correct training setup and using additional modifications of the update rule like a momentum term, weight decay, and  an annealing learning rate is essential to reach a value of -84 with normal binary RBMs.
However, in order to get an unbiased comparison of the different models, we did not use any of these modifications in our experiments. This explains why our performance of \emph{00} does not reach -84.  
\emph{d0} however, reaches a value of -84 when PT is used for sampling, and $dd_s^b$ and $aa_s^b$ reach even higher values around -80 with PCD-$1$ and -75 with PT$_{20}$. 

\begin{figure}[t]
\centering
\subfigure[Training data- learning rate 0.001]{
\includegraphics[trim=0.6cm 0cm 1.43cm 0cm, clip=true,scale=0.41]{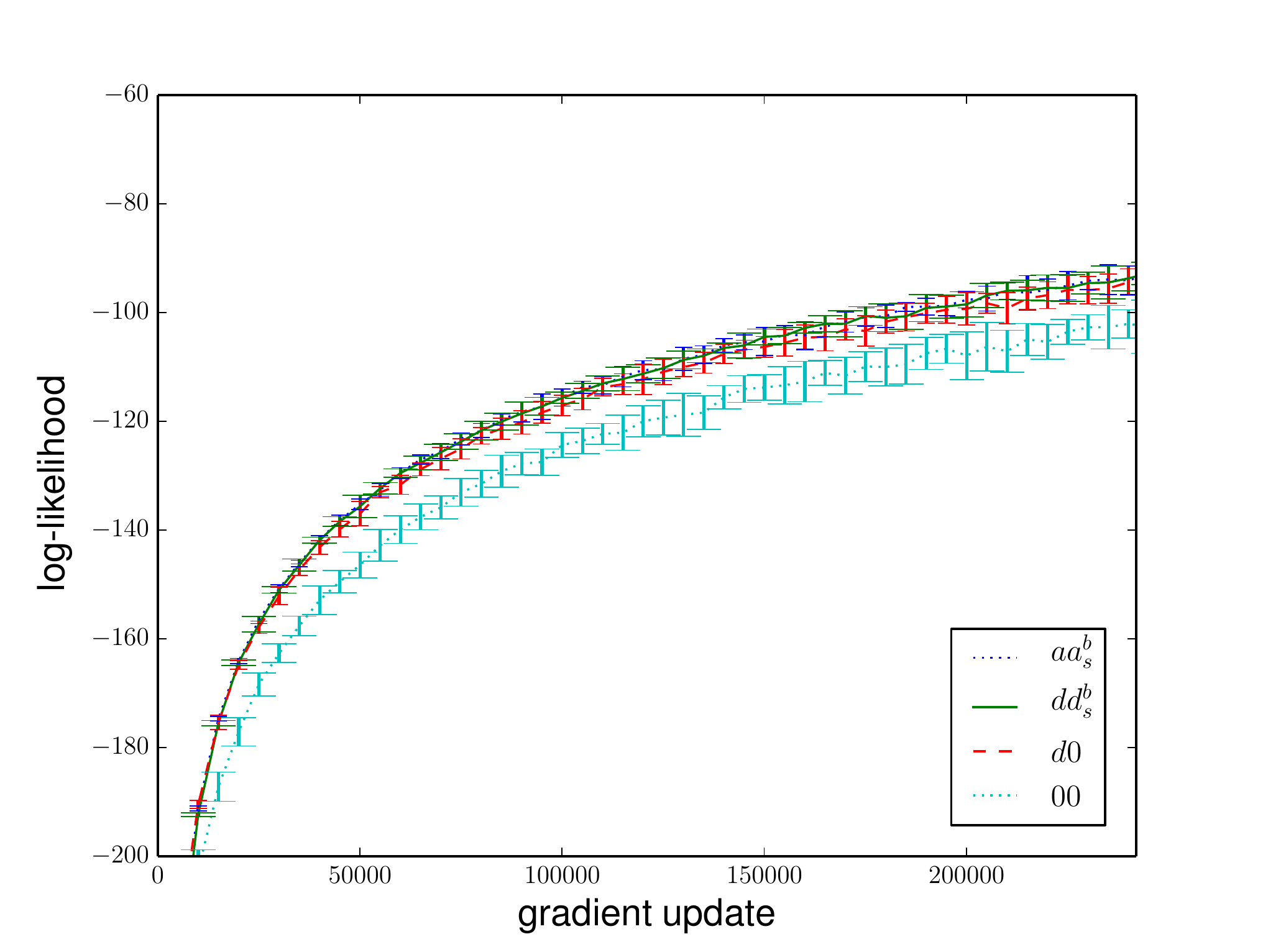}}
\subfigure[Test data - learning rate 0.001]{
\includegraphics[trim=1.6cm 0cm 1.43cm 0cm, clip=true,scale=0.41]{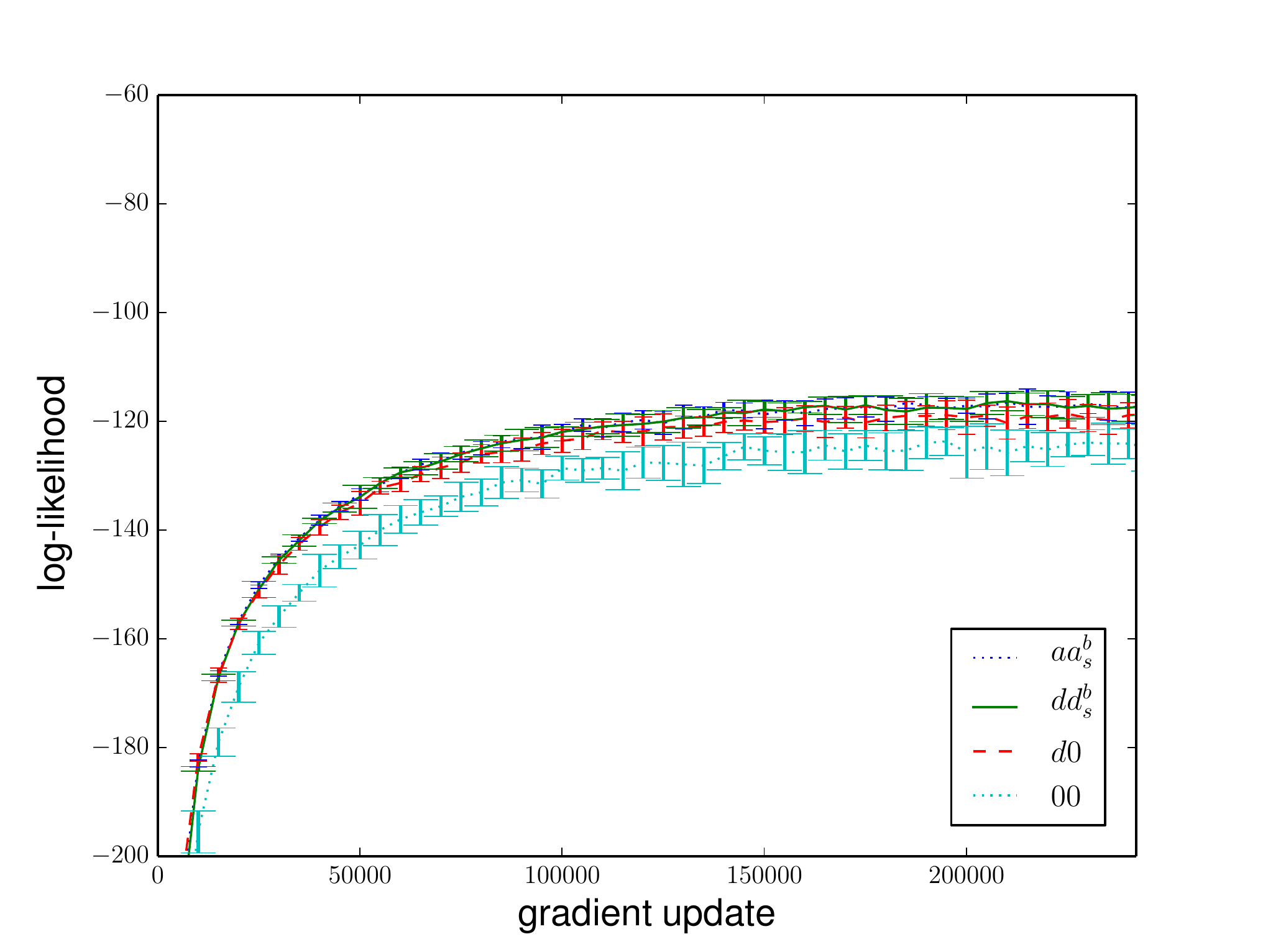}}
\subfigure[Training data - learning rate 0.01]{
\includegraphics[trim=0.6cm 0cm 1.43cm 0cm, clip=true,scale=0.41]{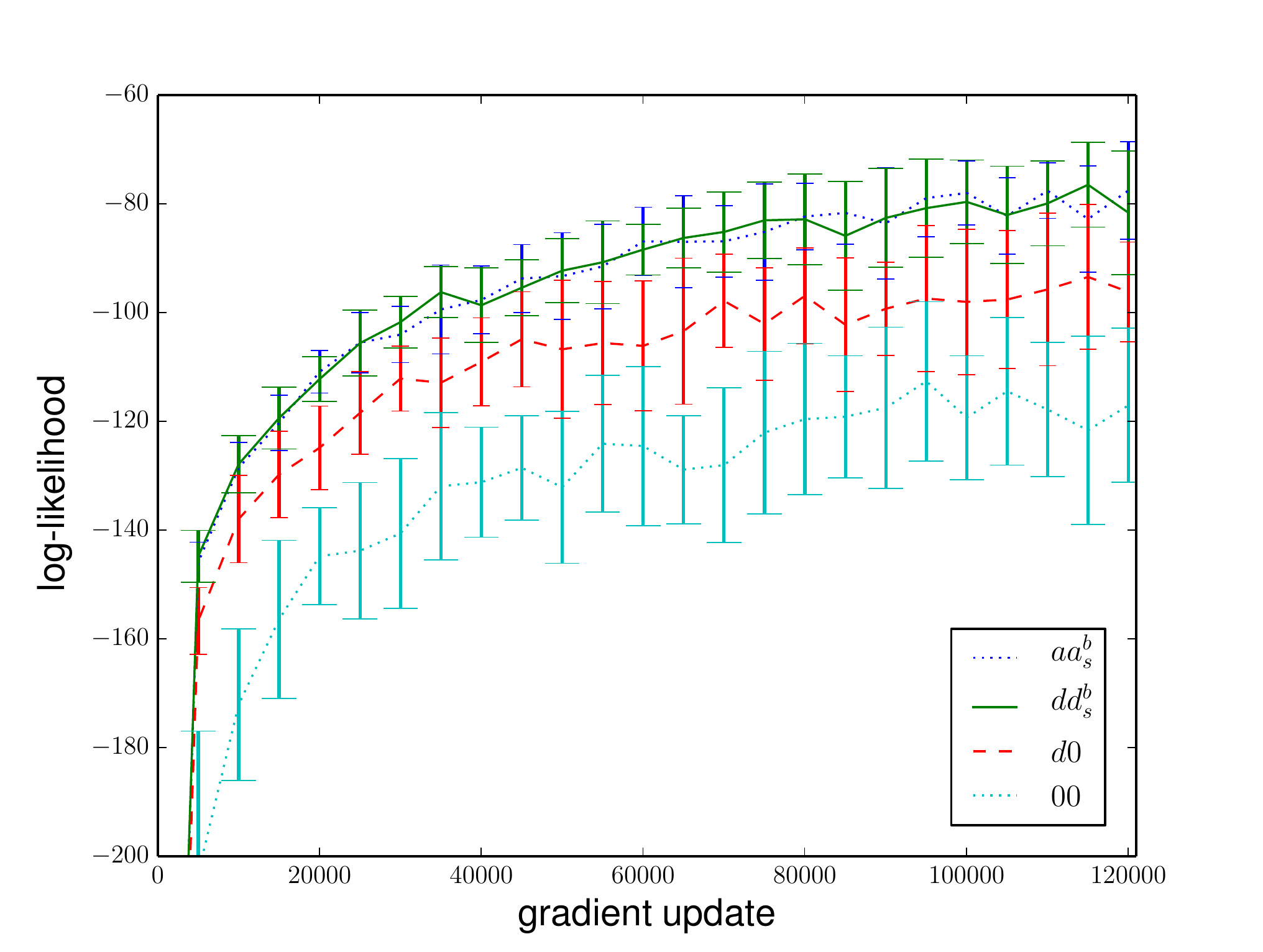}}
\subfigure[Test data - learning rate 0.01]{
\includegraphics[trim=1.6cm 0cm 1.43cm 0cm, clip=true,scale=0.41]{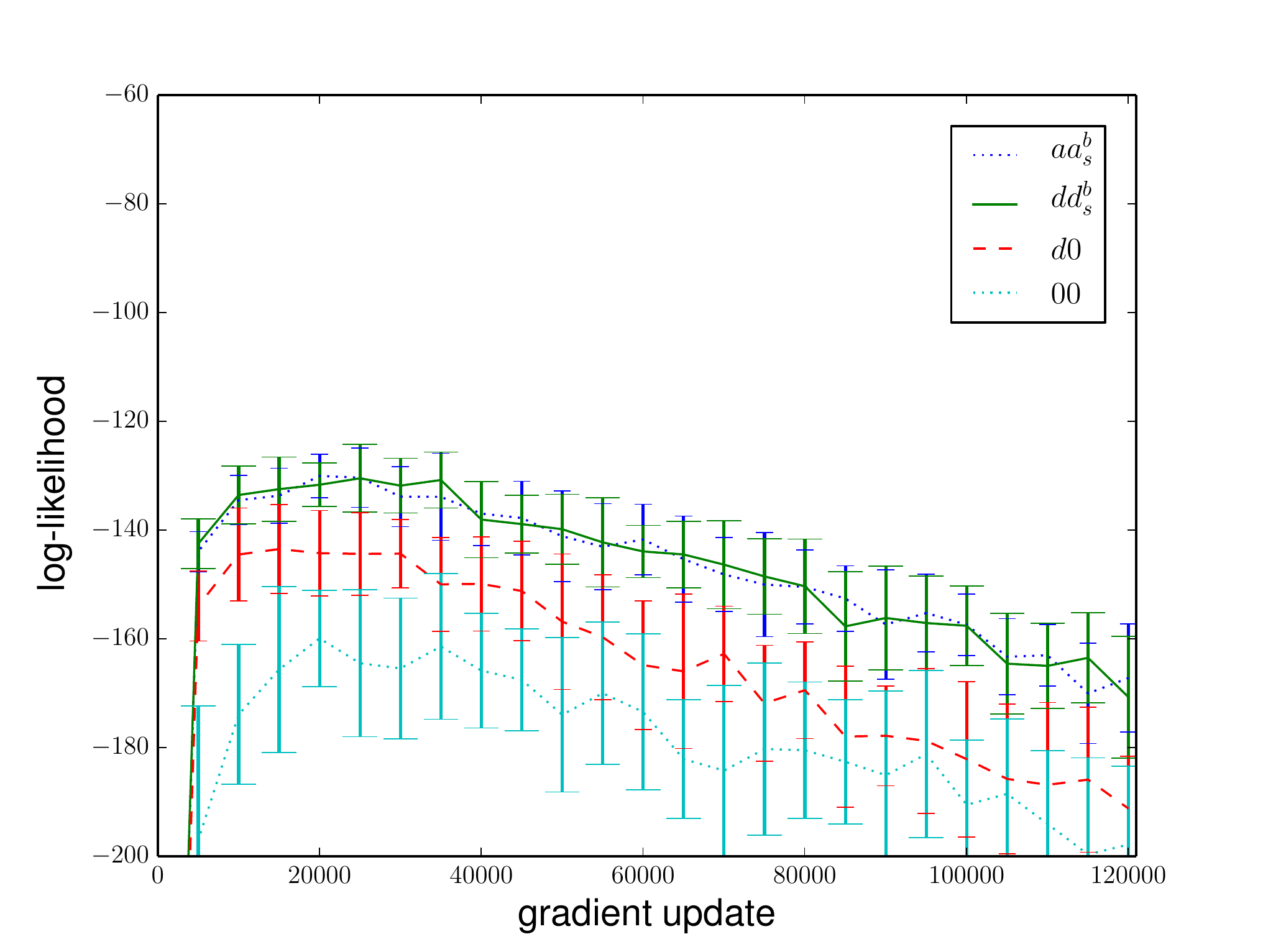}}
\caption{Evolution of the average LL on \emph{Caltech} dataset with the different centering variants with 500 hidden units. The results on training and test data for a learning rate of $\eta=0.001$ are shown in sub-figures (a) and (b), respectively and for a learning rate of $\eta=0.01$  in sub-figures (c) and (d), respectively. In both cases a sliding factor of 0.01 and PCD-$1$ was used. The error bars indicate the standard deviation of the LL over the 25 trials.}
\label{fig:CALTECH500}
\end{figure} 

For the \emph{Caltech} dataset, Figure~\ref{fig:CALTECH500} shows the evolution of the average LL on training and test data over 25 trials for the different centering versions using PCD-$1$ with  a batch size of 100 and either a learning rate of $0.001$ or $0.01$. The LL was estimated every 5000th gradient update using AIS.
The results show that $dd_s^b$, $aa_s^b$ and \emph{d0} reach higher LL values than \emph{00} for both learning rates and on training and test data. While $dd_s^b$ and $aa_s^b$ perform only slightly better than \emph{d0} when a small learning rate is used, the difference becomes more prominent for a big learning rate. Figure~\ref{fig:CALTECH500}(c) and (d) show that all models over fit to the training data. Nevertheless, $dd_s^b$ and $aa_s^b$ reach higher LL values on the test data and thus lead to a better generalization.

To emphasize that the divergence problems induced by using the model means as offsets also appear for big models when no shifting average is used, we trained RBMs with 500 hidden units on \emph{MNIST} and \emph{Caltech} using PT$_{20}$ with a learning rate of $0.001$ and a batch size of 100. 
In addition, we trained $aa^b$ and $dd^b$ on \emph{Caltech} using full batch learning and a learning rate of $0.01$.
Figure~\ref{fig:500Divergence} shows that $aa^b$
diverges, while $dd^b$ and the corresponding centering versions using a moving average, $aa_s^b$ and $dd_s^b$, show no divergence. The divergence for  $aa^b$ even occurs in full batch training as shown in Figure~\ref{fig:500Divergence}(b).

\begin{figure}[t]
\centering
\subfigure[MNIST LL on test data]{
\includegraphics[trim=0.4cm 0cm 1.43cm 0cm, clip=true,scale=0.405]{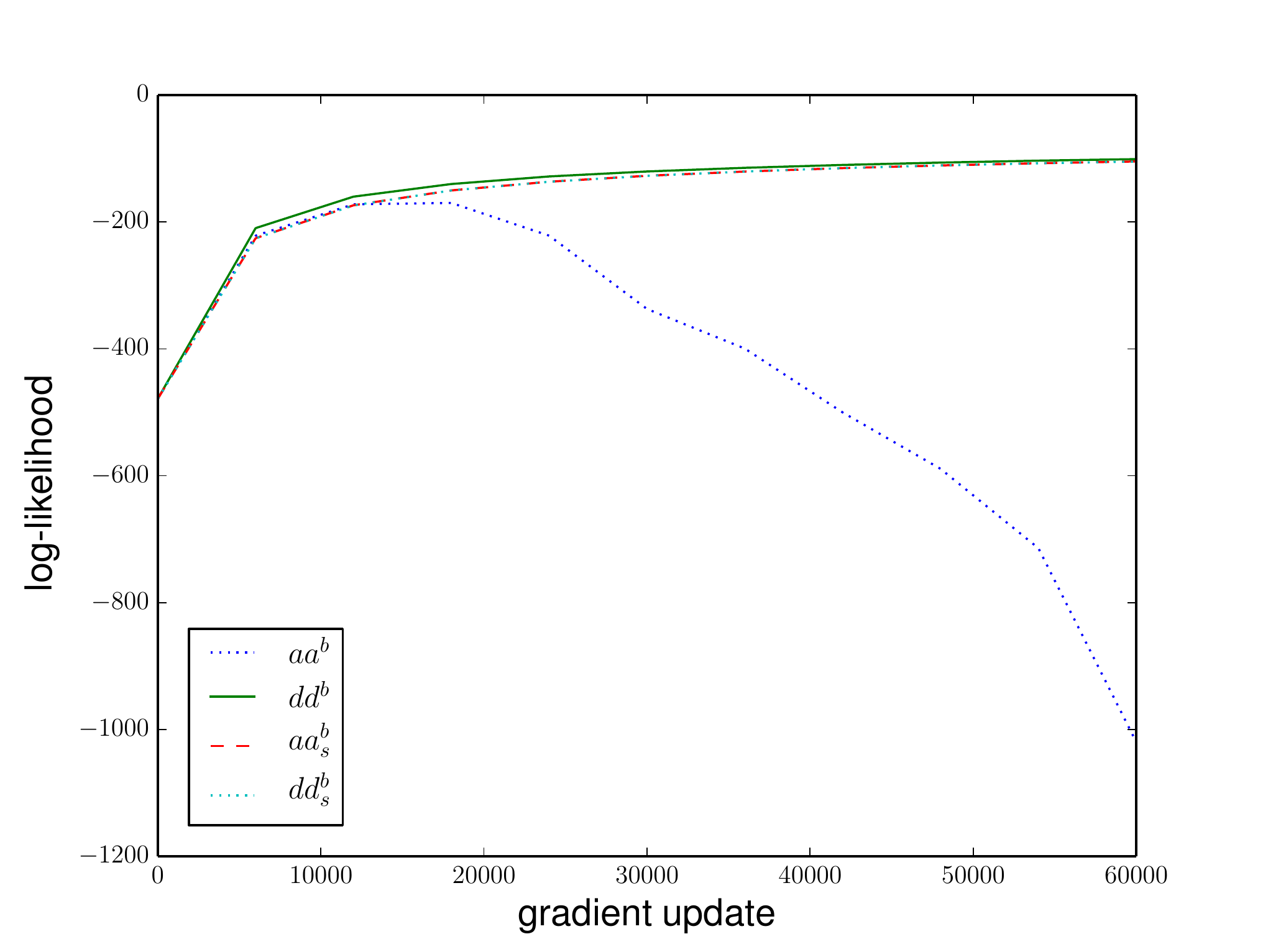}}
\subfigure[Caltech LL on test data]{
\includegraphics[trim=1.6cm 0cm 1.43cm 0cm, clip=true,scale=0.405]{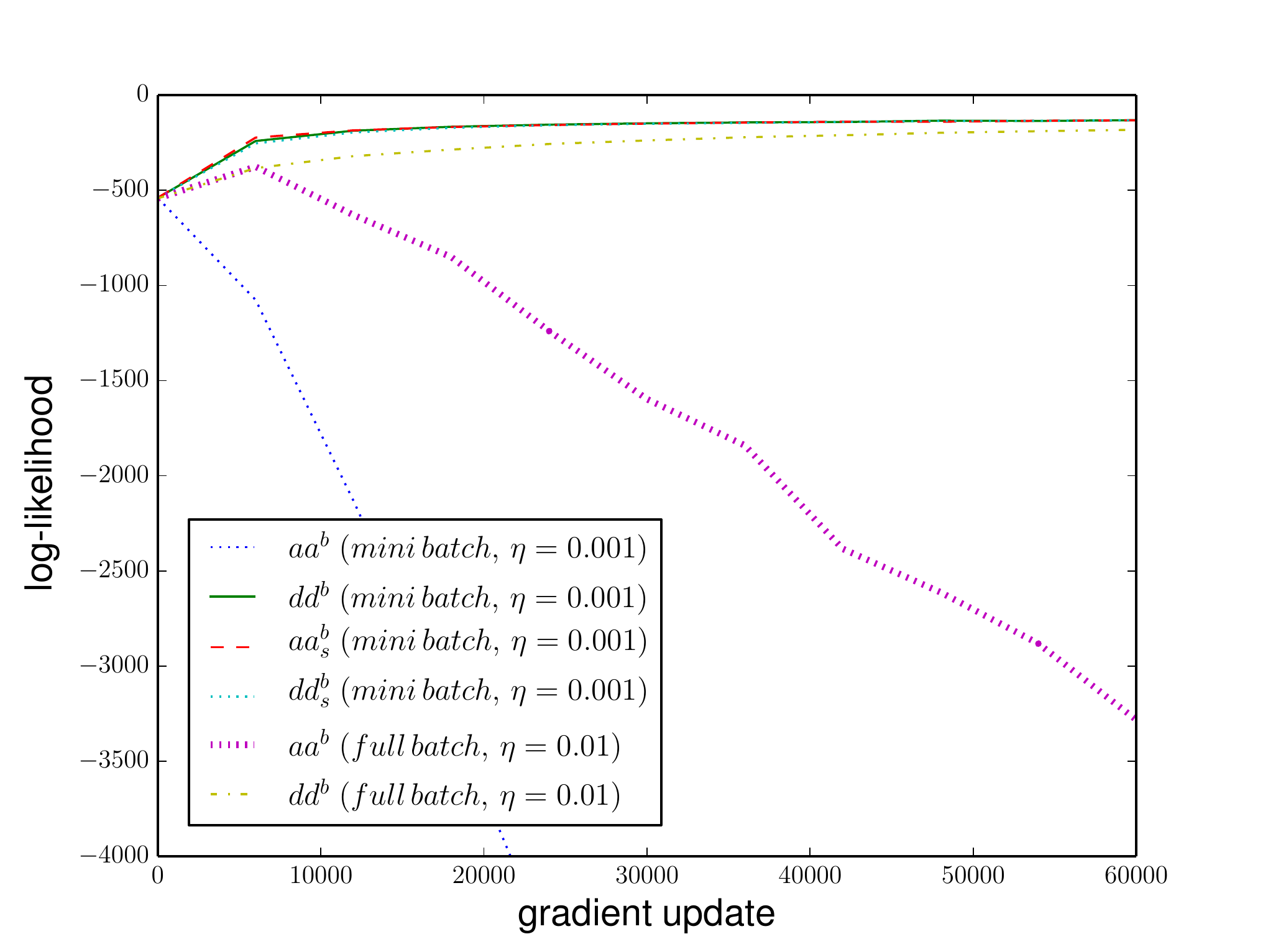}}
\caption{Evolution of the LL of exemplary trials for the different centering variants $aa^b$, $dd^b$, $aa_s^b$ and $dd_s^b$ on (a) \emph{MNIST} and  (b) \emph{Caltech} during training using PT$_{20}$ with a batch size of 100, 500 hidden units and a learning rate of $0.001$. For \emph{Caltech} $aa^b$ and $dd^b$ were also trained in full batch mode with PT$_{20}$ and a learning rate of $0.01$.}
\label{fig:500Divergence}
\end{figure}

In a second set of experiments we extended our analysis to the eight datasets used by \citet{LarochelleBengioEtAl-2010}. The four different models \emph{00}, \emph{d0}, $dd_s^b$, and $aa_s^b$ were trained with the same setup as before using PCD-$1$, a learning rate of $0.01$, a batch size of 100 and a total number of $5000$ epochs. All experiments were repeated 25 times and we trained either RBMs with 16 hidden units and calculated the LL exactly or RBMs with 200 hidden units using AIS for estimating the LL. Additionally we trained RBMs with 200 hidden  with
a smaller learning rate of $0.001$ for $30000$ epochs.
Due to the long training time these experiments were repeated only 10 times.
The maximum average LL for the test data is shown in Table~\ref{tab:testLL_Various}. On seven out of eight datasets $dd_s^b$ or $aa_s^b$ reached the best result independent of whether 16 or 200 hidden units or a learning rate of $0.01$ or $0.001$ were used. Whenever $aa_s^b$ reached the highest value it was not significantly different to $dd_s^b$.
Note, that for training  RBMs with 16 hidden units $d0$ reached comparable  results to $dd_s^b$ on some datasets. Only on the \emph{RCV1} dataset, $00$ lead to better LL values than the centered RBMs for both 16 and 200 hidden units. It seems that the convergence rate on the \emph{RCV1}, \emph{OCR} and \emph{Web} dataset is rather slow for all models since the difference between the highest and the final LL values are rather small. This can also be observed on the training data shown in Table~\ref{tab:trainLL_Various}.
The \emph{DNA} dataset and the \emph{NIPS} dataset over fit to the training data as indicated by the fact that the divergence is only observed for the test data. In contrast, on the remaining three datasets \emph{Adult}, \emph{CONNECT-4} and \emph{Mushroom} the divergence can be observed on training and test data.
Finally note, that all eight datasets contain more zeros than ones in the current representation  as mentioned in Section~\ref{sec:Benchmark_problems}. Thus, the performance of the normal RBM would be even worse on the flipped datasets while for the centering variants it would stay the same.

Consistent with the experiments on small models, the results from nine of the ten real world datasets clearly support the superiority of centered over normal RBMs and show that centering visible and hidden units in RBMs is important for yielding good models. 

One explanation why centering works has been provided by \citet{MontavonMueller-2012}, who found that centering leads to an initially better conditioned optimization problem. 

Furthermore, \citet{ChoRaikoEtAl-2011a} have shown that when the enhanced gradient is used for 
training the update directions for the weights are less correlated than when the standard gradient is used, which allows to learn more meaningful features. 

From our analysis in Section~\ref{sec:CenteredRBMs} we know that centered RBMs and normal RBMs belong to the same model class and therefore the reason why centered RBMs outperform normal RBMs can indeed only be due to the optimization procedure. 
Furthermore, one has to keep in mind that in centered RBMs the variables mean values are explicitly stored in the corresponding offset parameters, or if the centered gradient is used for training normal RBMs the mean values are transferred to the corresponding bias parameters. 
This allows the weights to model second and higher order statistics right from the start, which is in contrast to normal binary RBMs where weights usually capture parts of the mean values. 
To support this statement empirically, we calculated the average weight and bias norms during training of the RBMs with 500 hidden units on \emph{MNIST} using the standard and the centered gradient. 
The results are shown in Figure~\ref{fig:MNISTNORMS}, where it can be seen that the row and column norms (see Figure~\ref{subfig:MNISTNORMS_A} and \ref{subfig:MNISTNORMS_B}) of the weight matrix for $dd_s^b$, $aa_s^b$, and \emph{d0} are consistently smaller than for \emph{00}. 
At the same time the bias values (see Figure~\ref{subfig:MNISTNORMS_C} and \ref{subfig:MNISTNORMS_D}) for $dd_s^b$, $aa_s^b$, and \emph{d0} are much bigger than for \emph{00}, indicating that the weight vectors of \emph{00} model information that could potentially be modeled by the bias values.
Interestingly, the curves for all parameters of $dd_s^b$ and $aa_s^b$ show the same logarithmic shape, while for \emph{d0} and \emph{00} the visible bias norm does not change significantly. 
It seems that the bias values did not adapt properly during training. 
Comparing, \emph{d0} with $dd_s^b$ and $aa_s^b$, the weight norms are slightly bigger and the visible bias is much smaller for \emph{d0}, indicating that it is not sufficient to center only the visible variables and that visible and hidden bias influence each other. 
This dependence of the hidden mean and visible bias can also be seen from Equation~\eqref{eqn:transform_b} where the transformation of the visible bias depends on the offset of the hidden variables.

\begin{landscape}
\begin{table*}
\setlength{\tabcolsep}{3pt}
\begin{center}
\begin{small}
\begin{sc}
\hspace*{-2.1cm}
\begin{tabular}{l l l l l}
\hline
\abovespace\belowspace
Dataset & \multicolumn{1}{c}{$aa_s^b$} & \multicolumn{1}{c}{$dd_s^b$} & \multicolumn{1}{c}{\emph{d0}} & \multicolumn{1}{c}{\emph{00}} \\
\hline
\multicolumn{5}{l}{16 hidden units - learning rate 0.01}
\abovespace\belowspace \\
Adult 	       & -18.09 $\pm$0.62 \,(-18.18) &  \textbf{-17.70} $\pm$0.25 \,(-17.74) &  -17.90 $\pm$0.21 \,(-17.97) &  -17.94 $\pm$0.22 \,(-17.98) \\ 
CONNECT-4  	 & -20.07 $\pm$0.19 \,(-20.52) &  \textbf{-19.89} $\pm$0.21 \,(-19.90) &  -20.14 $\pm$0.24 \,(-20.16) &  -20.59 $\pm$0.29 \,(-20.74) \\ 
DNA        	 & \textbf{-96.97} $\pm$0.05 \,(-97.26) &  \textbf{-96.97} $\pm$0.04 \,(-97.25) &  -97.01 $\pm$0.04 \,(-97.21) &  -97.03 $\pm$0.05 \,(-97.19) \\ 
Mushroom  	 & -16.83 $\pm$0.57 \,(-17.11) &  \textbf{-16.53} $\pm$0.64 \,(-16.53) &  \textbf{-16.76} $\pm$0.56 \,(-17.11) &  -17.05 $\pm$0.59 \,(-17.44) \\ 
NIPS      	 & \textbf{-276.37} $\pm$0.16 \,(-279.13) &  \textbf{-276.38} $\pm$0.16 \,(-279.11) &  -276.53 $\pm$0.21 \,(-278.97) &  -278.04 $\pm$0.27 \,(-279.95) \\ 
OCR       	 & \textbf{-45.81} $\pm$0.13 \,(-45.81) &  \textbf{-45.82} $\pm$0.12 \,(-45.83) &  \textbf{-45.84} $\pm$0.12 \,(-45.86) &  -45.97 $\pm$0.17 \,(-45.99) \\ 
RCV1       	 & -49.57 $\pm$0.04 \,(-49.57) &  -49.58 $\pm$0.05 \,(-49.58) &  -49.59 $\pm$0.05 \,(-49.59) &  \textbf{-49.53} $\pm$0.04 \,(-49.53) \\ 
Web       	 & \textbf{-29.99} $\pm$0.05 \,(-29.99) &  \textbf{-29.98} $\pm$0.05 \,(-29.98) &  \textbf{-30.20} $\pm$0.70 \,(-30.86) &  -30.75 $\pm$0.11 \,(-38.82) \\ 
\hline
\multicolumn{5}{l}{200 hidden units - learning rate 0.01}
\abovespace\belowspace \\
Adult 	       & -15.98 $\pm$0.37 \,(-17.80) &  \textbf{-15.55} $\pm$0.25 \,(-16.22) &  -16.65 $\pm$0.61 \,(-18.96) &  -16.91 $\pm$0.78 \,(-19.24) \\ 
CONNECT-4  	 & -14.85 $\pm$0.24 \,(-23.31) &  \textbf{-14.70} $\pm$0.20 \,(-17.71) &  -16.14 $\pm$0.50 \,(-23.14) &  -17.88 $\pm$0.78 \,(-30.41) \\ 
DNA        	 & -90.15 $\pm$0.09 \,(-95.17) &  \textbf{-90.12} $\pm$0.10 \,(-94.13) &  -90.78 $\pm$0.10 \,(-95.93) &  -91.13 $\pm$0.12 \,(-97.62) \\ 
Mushroom  	 & \textbf{-15.45} $\pm$1.35 \,(-20.11) &  \textbf{-15.61} $\pm$1.23 \,(-19.93) &  -16.28 $\pm$0.88 \,(-21.59) &  -16.42 $\pm$1.61 \,(-21.90) \\ 
NIPS      	 & \textbf{-270.81} $\pm$0.05 \,(-291.82) &  \textbf{-270.81} $\pm$0.07 \,(-291.77) &  -271.38 $\pm$0.28 \,(-294.28) &  -272.88 $\pm$0.44 \,(-290.02) \\ 
OCR       	 & \textbf{-29.75} $\pm$0.50 \,(-30.18) &  \textbf{-29.53} $\pm$0.52 \,(-29.97) &  -30.66 $\pm$0.65 \,(-30.66) &  -30.08 $\pm$0.52 \,(-30.25) \\ 
RCV1       	 & -47.13 $\pm$0.12 \,(-47.13) &  -47.14 $\pm$0.12 \,(-47.16) &  -47.14 $\pm$0.10 \,(-47.19) &  \textbf{-46.70} $\pm$0.11 \,(-46.72) \\ 
Web       	 & \textbf{-28.27} $\pm$0.19 \,(-28.47) &  \textbf{-28.27} $\pm$0.19 \,(-28.58) &  -28.50 $\pm$0.51 \,(-29.16) &  -28.35 $\pm$0.14 \,(-28.71) \\ 
\hline
\multicolumn{5}{l}{200 hidden units - learning rate 0.001}
\abovespace\belowspace \\
Adult 	       & -15.51 $\pm$0.17 \,(-17.14) &  \underline{-14.90} $\pm$0.07 \,(-14.90) &  -15.27 $\pm$0.22 \,(-15.48) &  -15.16 $\pm$0.21 \,(-15.39) \\ 
CONNECT-4  	 & -13.73 $\pm$0.17 \,(-15.95) &  \underline{-13.13} $\pm$0.10 \,(-13.25) &  -13.86 $\pm$0.29 \,(-14.79) &  -14.47 $\pm$0.38 \,(-16.04) \\ 
DNA        	 & -90.17 $\pm$0.07 \,(-90.17) &  \underline{-90.15} $\pm$0.07 \,(-90.15) &  -90.45 $\pm$0.09 \,(-90.45) &  -90.73 $\pm$0.13 \,(-90.73) \\ 
Mushroom  	 & -13.44 $\pm$0.75 \,(-13.96) &  \underline{-13.17} $\pm$1.15 \,(-13.94) &  -13.47 $\pm$0.73 \,(-14.51) &  -13.68 $\pm$0.77 \,(-14.77) \\ 
NIPS      	 & -270.71 $\pm$0.03 \,(-313.30) &  \underline{-270.70} $\pm$0.03 \,(-313.16) &  -270.84 $\pm$0.04 \,(-315.96) &  -271.91 $\pm$0.12 \,(-309.03) \\ 
OCR       	 & -27.92 $\pm$0.34 \,(-27.92) &  \underline{-27.70} $\pm$0.15 \,(-27.75) &  -28.24 $\pm$0.21 \,(-28.34) &  -28.04 $\pm$0.39 \,(-28.29) \\ 
RCV1       	 & -47.09 $\pm$0.05 \,(-47.09) &  -47.10 $\pm$0.03 \,(-47.11) &  -46.93 $\pm$0.07 \,(-46.93) &  \underline{-46.56} $\pm$0.07 \,(-46.56) \\ 
Web       	 & -28.13 $\pm$0.02 \,(-28.17) &  \underline{-28.11} $\pm$0.06 \,(-28.11) &  -28.16 $\pm$0.05 \,(-28.20) &  -28.13 $\pm$0.05 \,(-28.15) \\ 
\hline
\end{tabular}
\end{sc}
\end{small}
\end{center}
\caption {Maximum average LL on test data on various datasets using PCD-$1$, with (top) 16 
hidden units and a learning rate of $0.01$, (middle) 200 
hidden units and a learning rate of $0.01$ and (bottom) 200 
hidden units and learning rate of $0.001$ (since 10 trials are are not enough to perform a statistical significance test we simply underlined the best result).
}
\label{tab:testLL_Various}
\end{table*}
\end{landscape}

\begin{landscape}
\begin{table*}
\setlength{\tabcolsep}{3pt}
\begin{center}
\begin{small}
\begin{sc}
\hspace*{-2.1cm}
\begin{tabular}{l l l l l}
\hline
\abovespace
Dataset & \multicolumn{1}{c}{$aa_s^b$} & \multicolumn{1}{c}{$dd_s^b$} & \multicolumn{1}{c}{\emph{d0}} & \multicolumn{1}{c}{\emph{00}} \\
\hline
\multicolumn{5}{l}{16 hidden units - learning rate 0.01}
\abovespace\belowspace \\
Adult 	       & -17.93 $\pm$0.61 \,(-18.01) &  \textbf{-17.54} $\pm$0.25 \,(-17.58) &  -17.73 $\pm$0.20 \,(-17.80) &  -17.78 $\pm$0.23 \,(-17.81) \\ 
CONNECT-4  	 & -19.94 $\pm$0.19 \,(-20.38) &  \textbf{-19.75} $\pm$0.25 \,(-19.77) &  -20.01 $\pm$0.24 \,(-20.03) &  -20.46 $\pm$0.28 \,(-20.61) \\ 
DNA        	 & \textbf{-94.33} $\pm$0.05 \,(-94.33) &  \textbf{-94.33} $\pm$0.06 \,(-94.33) &  -94.44 $\pm$0.04 \,(-94.44) &  -94.42 $\pm$0.05 \,(-94.42) \\ 
Mushroom  	 & -16.55 $\pm$0.56 \,(-16.82) &  \textbf{-16.25} $\pm$0.64 \,(-16.25) &  \textbf{-16.46} $\pm$0.56 \,(-16.80) &  -16.75 $\pm$0.60 \,(-17.15) \\ 
NIPS      	 & \textbf{-255.03} $\pm$0.25 \,(-255.05) &  \textbf{-255.02} $\pm$0.23 \,(-255.04) &  -255.88 $\pm$0.28 \,(-255.88) &  -258.57 $\pm$0.29 \,(-258.57) \\ 
OCR       	 & \textbf{-45.88} $\pm$0.12 \,(-45.88) &  \textbf{-45.90} $\pm$0.11 \,(-45.90) &  \textbf{-45.90} $\pm$0.12 \,(-45.93) &  -46.03 $\pm$0.16 \,(-46.05) \\ 
RCV1       	 & -49.42 $\pm$0.04 \,(-49.42) &  -49.43 $\pm$0.05 \,(-49.43) &  -49.44 $\pm$0.05 \,(-49.44) &  \textbf{-49.38} $\pm$0.04 \,(-49.38) \\ 
Web       	 & \textbf{-29.79} $\pm$0.05 \,(-29.79) &  \textbf{-29.78} $\pm$0.05 \,(-29.78) &  \textbf{-29.97} $\pm$0.70 \,(-30.62) &  -30.64 $\pm$0.11 \,(-38.61) \\ 
\hline
\multicolumn{5}{l}{200 hidden units - learning rate 0.01}
\abovespace\belowspace \\
Adult 	       & -15.19 $\pm$0.62 \,(-16.29) &  \textbf{-14.51} $\pm$0.43 \,(-14.80) &  -15.83 $\pm$0.61 \,(-17.17) &  -16.00 $\pm$0.79 \,(-17.51) \\ 
CONNECT-4  	 & -14.39 $\pm$0.24 \,(-22.50) &  \textbf{-14.22} $\pm$0.20 \,(-17.02) &  -15.74 $\pm$0.49 \,(-22.32) &  -17.58 $\pm$0.76 \,(-29.55) \\ 
DNA        	 & \textbf{-60.19} $\pm$1.20 \,(-60.69) &  \textbf{-59.75} $\pm$1.45 \,(-59.75) &  \textbf{-60.31} $\pm$1.64 \,(-60.31) &  -61.11 $\pm$1.54 \,(-61.71) \\ 
Mushroom  	 & \textbf{-14.90} $\pm$1.34 \,(-19.14) &  \textbf{-15.06} $\pm$2.09 \,(-19.00) &  \textbf{-15.70} $\pm$2.88 \,(-20.52) &  -15.87 $\pm$1.63 \,(-20.87) \\ 
NIPS      	 & -180.48 $\pm$0.24 \,(-180.48) &  -180.51 $\pm$0.31 \,(-180.51) &  \textbf{-180.29} $\pm$0.54 \,(-180.29) &  -184.84 $\pm$0.62 \,(-184.84) \\ 
OCR       	 & \textbf{-28.97} $\pm$0.49 \,(-29.39) &  \textbf{-28.76} $\pm$0.52 \,(-29.20) &  -29.84 $\pm$0.63 \,(-29.84) &  -29.34 $\pm$0.51 \,(-29.49) \\ 
RCV1       	 & -45.83 $\pm$0.12 \,(-45.83) &  -45.85 $\pm$0.11 \,(-45.87) &  -45.92 $\pm$0.18 \,(-45.97) &  \textbf{-45.60} $\pm$0.11 \,(-45.61) \\ 
Web       	 & \textbf{-26.22} $\pm$0.67 \,(-26.22) &  -26.28 $\pm$0.20 \,(-26.32) &  -26.57 $\pm$0.66 \,(-26.75) &  -26.34 $\pm$0.38 \,(-26.39) \\ 
\hline
\multicolumn{5}{l}{200 hidden units - learning rate 0.001}
\abovespace\belowspace \\
Adult 	       & -14.61 $\pm$0.52 \,(-15.96) &  \underline{-13.85} $\pm$0.06 \,(-13.85) &  -14.07 $\pm$0.22 \,(-14.17) &  -14.03 $\pm$0.21 \,(-14.11) \\ 
CONNECT-4  	 & -13.15 $\pm$0.17 \,(-15.20) &  \underline{-12.54} $\pm$0.11 \,(-12.62) &  -13.28 $\pm$0.29 \,(-14.10) &  -13.98 $\pm$0.38 \,(-15.39) \\ 
DNA        	 & -70.22 $\pm$0.08 \,(-70.22) &  -70.24 $\pm$0.08 \,(-70.24) &  -69.66 $\pm$0.06 \,(-69.66) &  \underline{-69.40} $\pm$0.08 \,(-69.40) \\ 
Mushroom  	 & -12.76 $\pm$0.75 \,(-13.21) &  \underline{-12.43} $\pm$1.18 \,(-13.21) &  -12.69 $\pm$0.72 \,(-13.68) &  -13.05 $\pm$0.76 \,(-13.92) \\ 
NIPS      	 & -148.14 $\pm$0.30 \,(-148.14) &  -148.30 $\pm$0.31 \,(-148.30) &  \underline{-147.33} $\pm$0.30 \,(-147.33) &  -150.91 $\pm$0.30 \,(-150.91) \\ 
OCR       	 & -27.06 $\pm$0.33 \,(-27.06) &  \underline{-26.90} $\pm$0.14 \,(-26.95) &  -27.41 $\pm$0.20 \,(-27.52) &  -27.28 $\pm$0.39 \,(-27.53) \\ 
RCV1       	 & -45.83 $\pm$0.04 \,(-45.83) &  -45.85 $\pm$0.05 \,(-45.85) &  -45.73 $\pm$0.07 \,(-45.73) &  \underline{-45.47} $\pm$0.06 \,(-45.47) \\ 
Web       	 & -26.44 $\pm$0.06 \,(-26.45) &  -26.36 $\pm$0.07 \,(-26.36) &  \underline{-26.32} $\pm$0.05 \,(-26.33) &  -26.35 $\pm$0.10 \,(-26.35) \\ 
\hline
\end{tabular}
\end{sc}
\end{small}
\end{center}
\caption {Maximum average LL on training data on various datasets using PCD-$1$, with (top) 16 
hidden units and a learning rate of $0.01$, (middle) 200 
hidden units and a learning rate of $0.01$ and (bottom) 200 
hidden units and a learning rate of $0.001$.(since 10 trials are are not enough to perform a statistical significance test we simply underlined the best result).}
\label{tab:trainLL_Various}
\end{table*}
\end{landscape}

\begin{figure}[t]
\centering
\subfigure[Norm of the weight matrix colums]{\label{subfig:MNISTNORMS_A}
\includegraphics[trim=0.6cm 0cm 1.75cm 0cm, clip=true,scale=0.417]{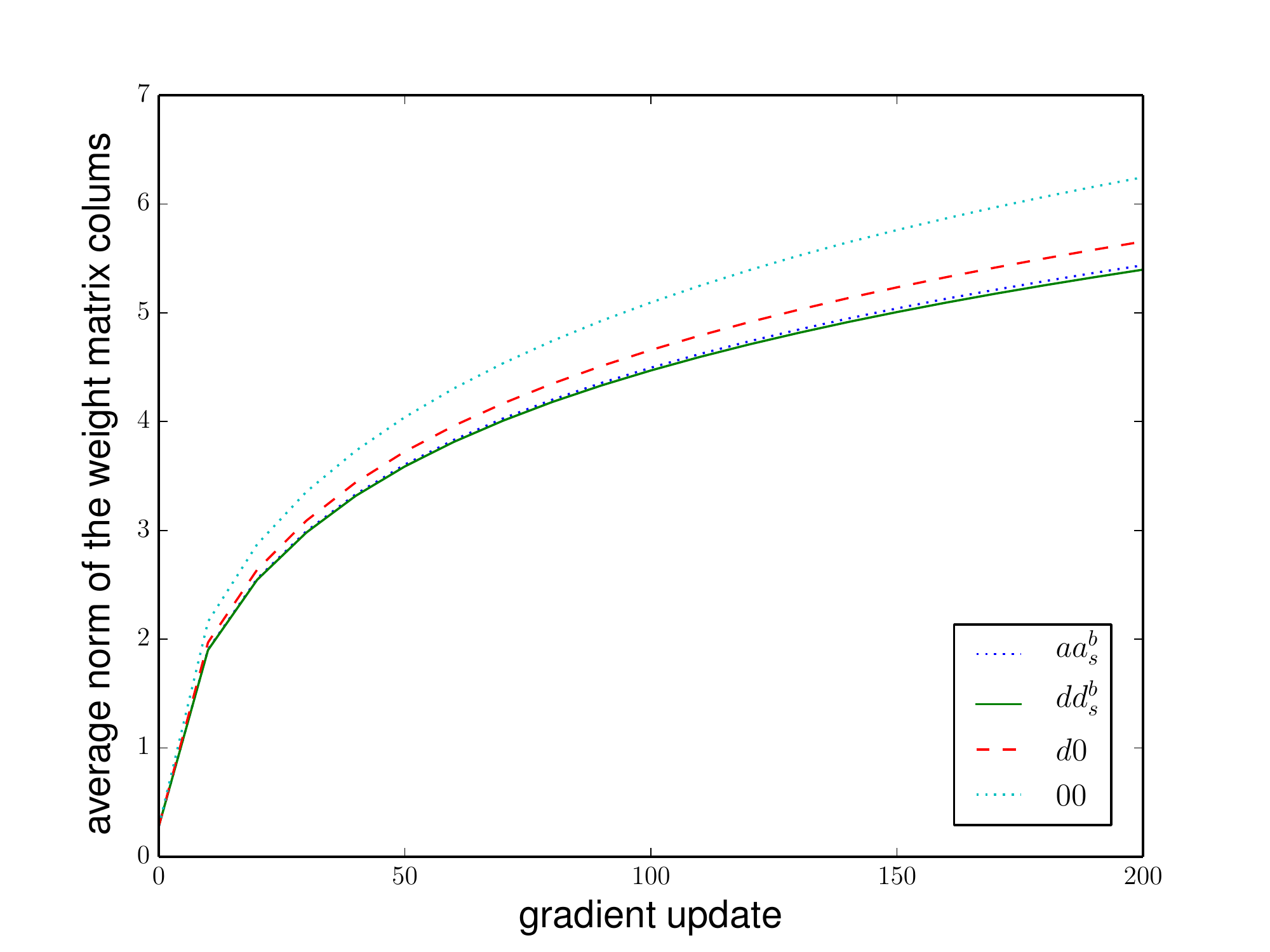}}
\subfigure[Norm of the weight matrix rows]{\label{subfig:MNISTNORMS_B}
\includegraphics[trim=0.9cm 0cm 1.75cm 0cm, clip=true,scale=0.417]{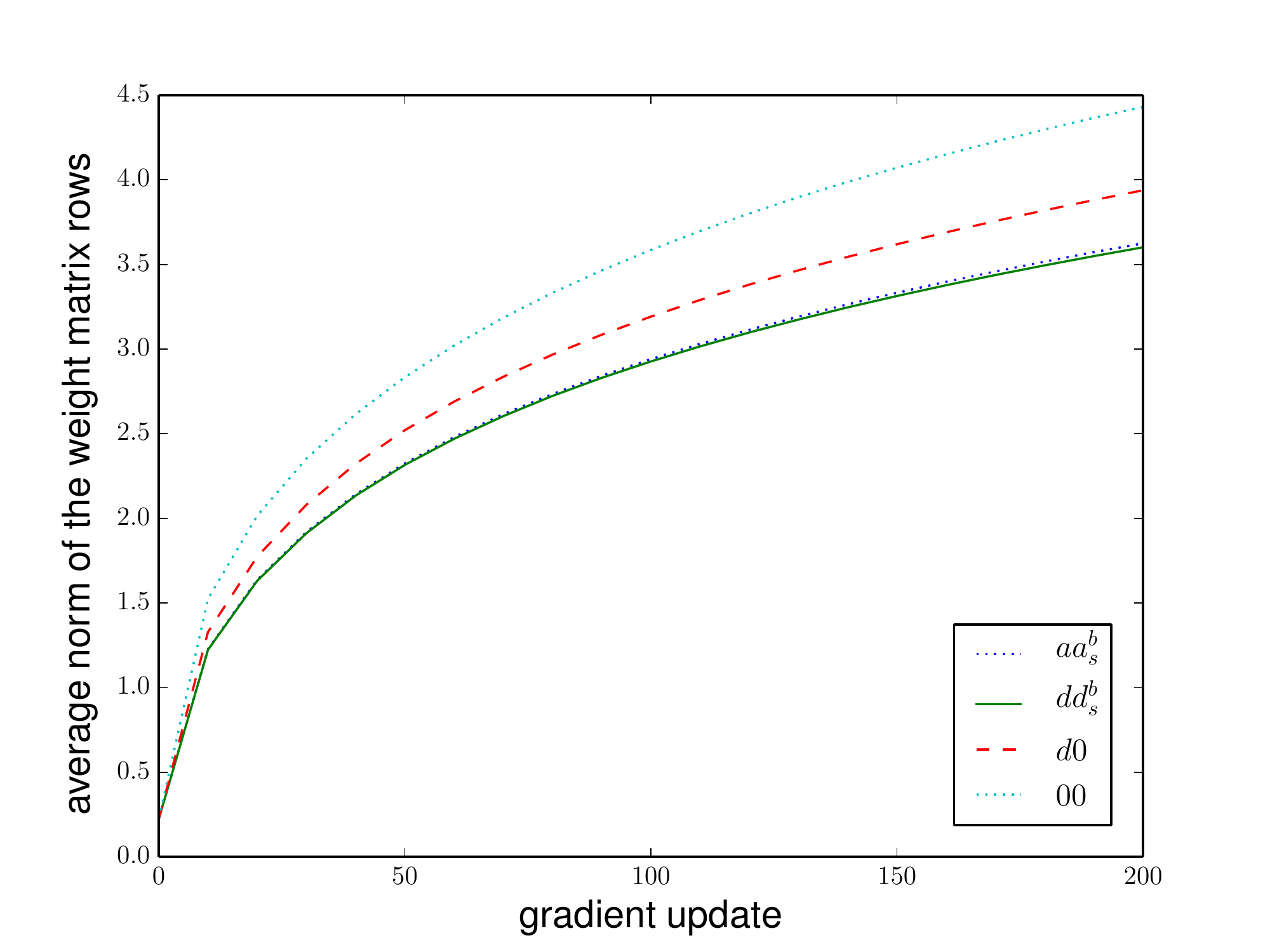}}
\subfigure[Norm of the visible bias]{\label{subfig:MNISTNORMS_C}
\includegraphics[trim=0.6cm 0cm 1.75cm 0cm, clip=true,scale=0.417]{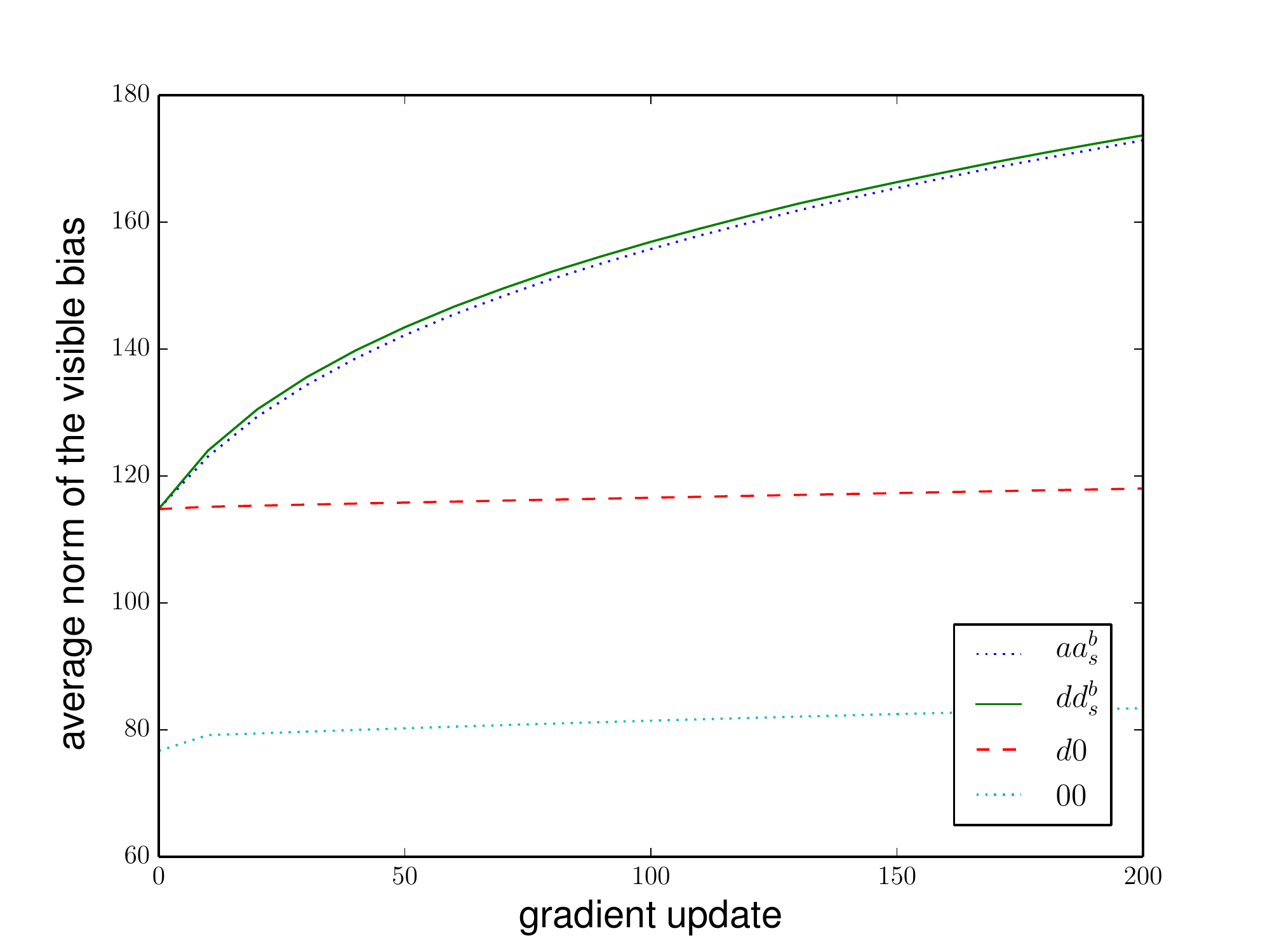}}
\subfigure[Norm of the hidden bias]{\label{subfig:MNISTNORMS_D}
\includegraphics[trim=0.9cm 0cm 1.75cm 0cm, clip=true,scale=0.417]{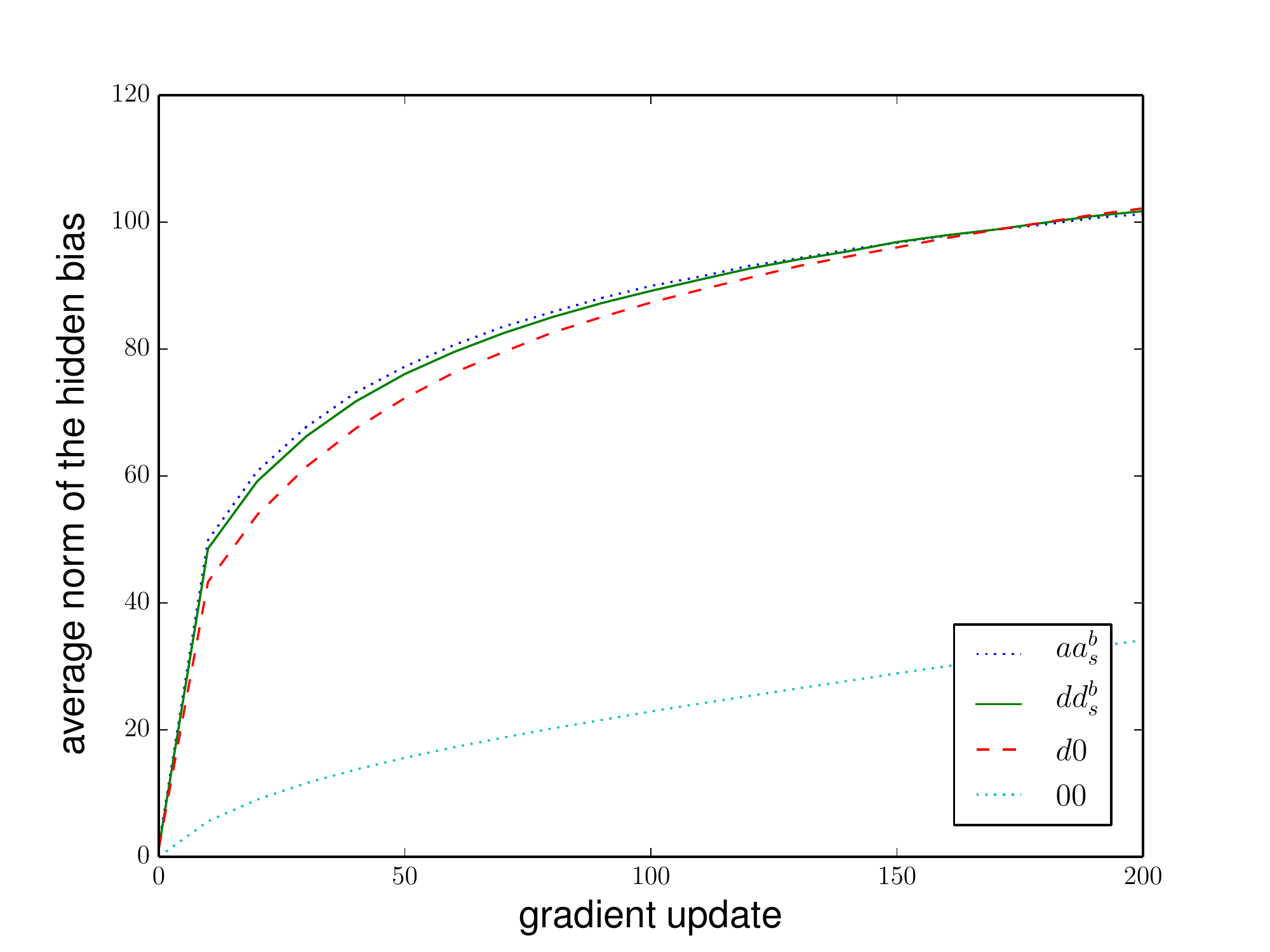}}
\caption{Evolution of the average Euclidean norm of the parameters of the RBMs with 500 hidden units trained on \emph{MNIST}.}
\label{fig:MNISTNORMS}
\end{figure} 

\subsection{Comparision to the Natural Gradient}\label{sec:natural_grad_exp}

The results of the previous section indicate that one explanation for the better performance of the centered gradient compared to the standard gradient is the decoupling of the bias and weight parameters. As described in Section~\ref{sec:theoNatGrad} the natural gradient is independent of the parameterization of the model distribution. Thus it is also independent of how the mean information is stored in the parameters and should not suffer from the described bias-weight coupling problem. 
For the same reason it is also invariant to changes of the representation of the data distribution (e.g. variable flipping). 
That is why we expect the direction of the natural gradient to be closer to the direction of the centered gradient than the direction of the standard gradient.

\begin{figure}[t]
\centering
\subfigure[LL evolution of the natural gradient]{\label{subfig:natgradangle_A}
\includegraphics[trim=0.8cm 0cm 1.75cm 0cm, clip=true,scale=0.42]{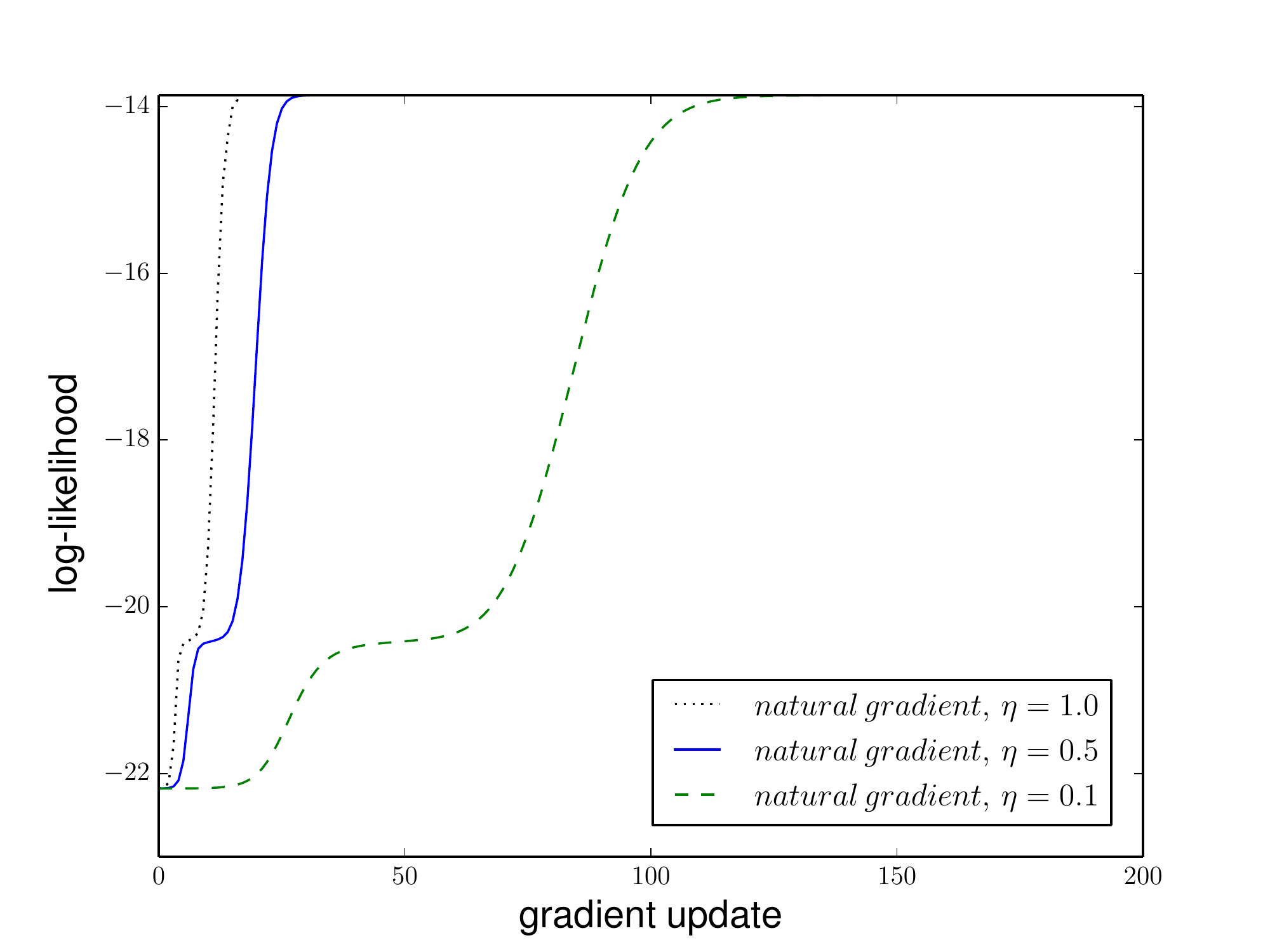}}
\subfigure[Angle between standard natural gradient]{\label{subfig:natgradangle_B}
\includegraphics[trim=1.13cm 0cm 1.7cm 0cm, clip=true,scale=0.42]{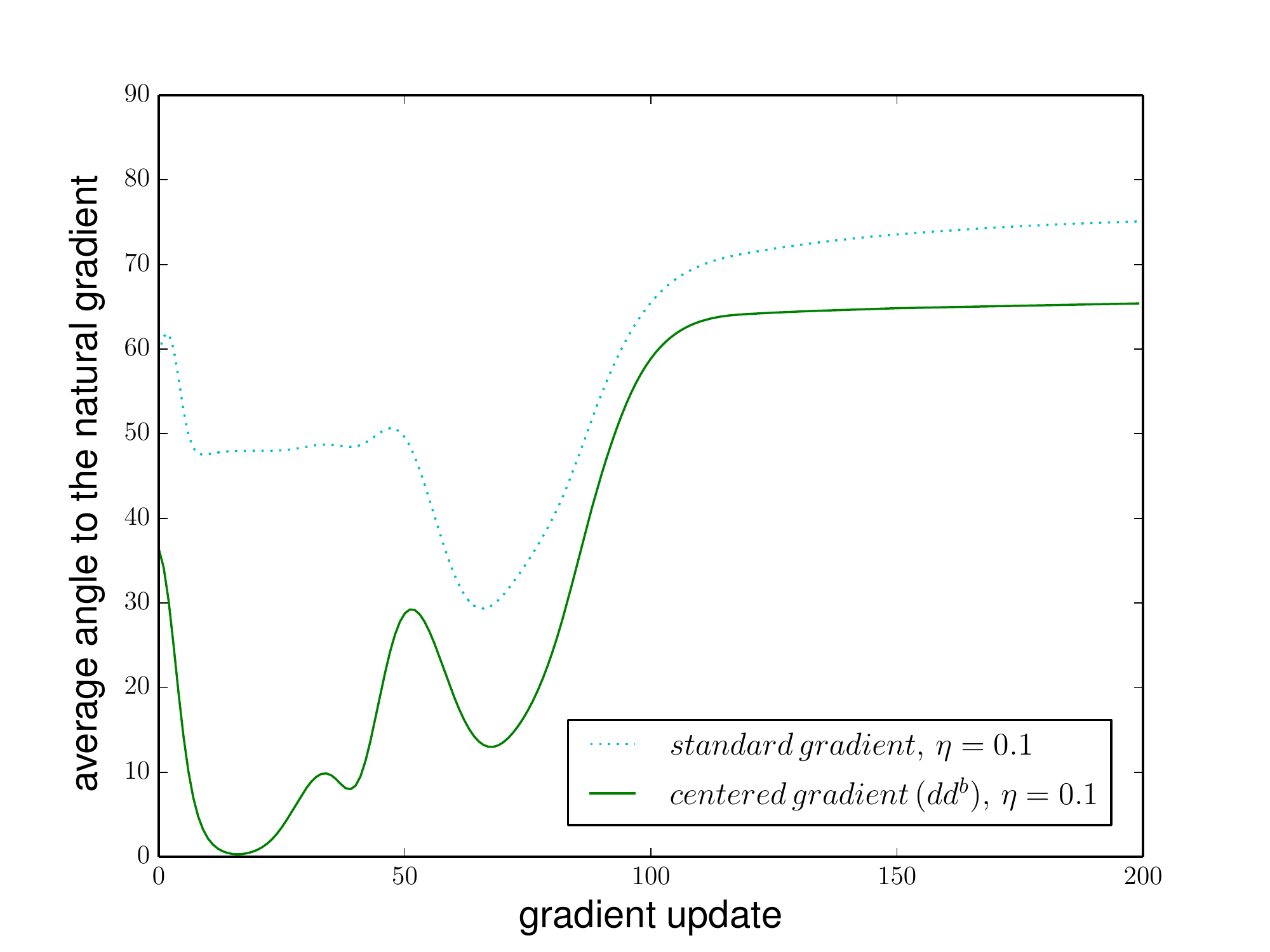}}
\subfigure[Comparision of the LL evolution of standard, natural and centered gradient]{\label{subfig:natgradangle_C}
\includegraphics[trim=4.25cm 0cm 1.55cm 0cm, clip=true,scale=0.362]{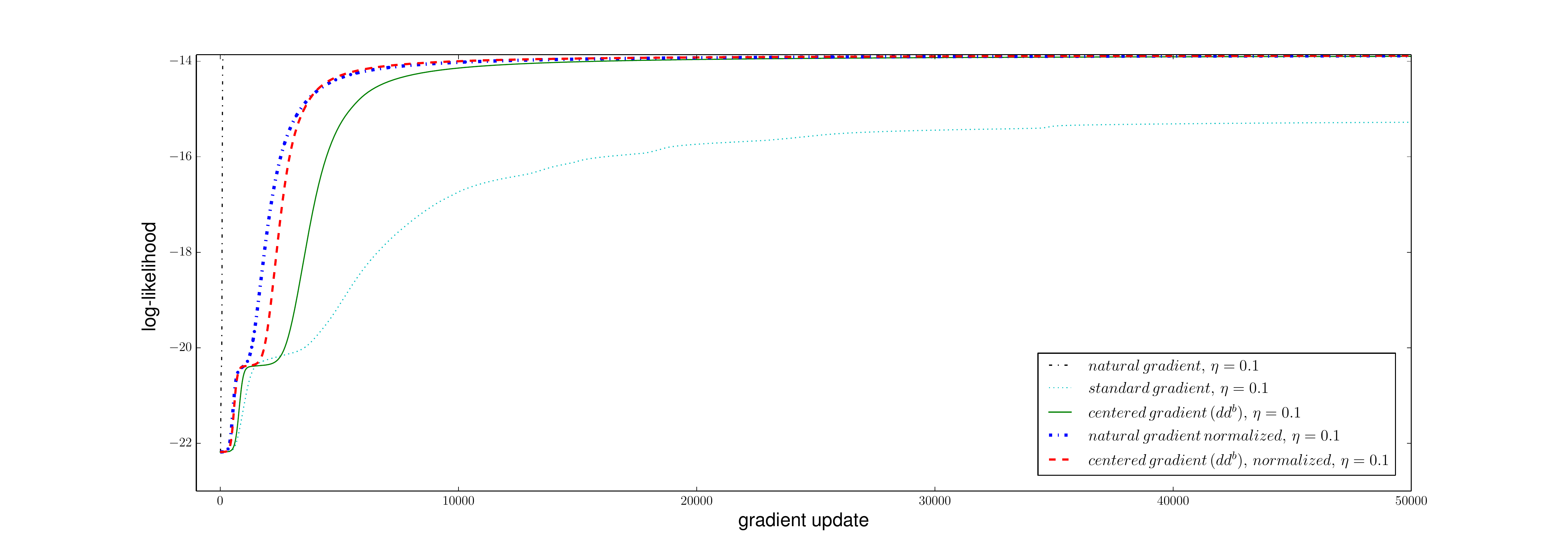}}
\caption{Comparison of the centered gradient, standard gradient, and natural gradient for RBMs with 4 visible and 4 hidden units trained on \emph{Bars \& Stripes 2x2}. (a) The average LL evolution over 25 trials when the natural gradient is used for training with different learning rates, (b) the average angle over 25 trials between the natural and standard gradient as well as natural and centered gradient when a learning rate of 0.1 is used, and (c) average LL evolution over 25 trials when either the natural gradient, standard gradient or centered gradient is used for training. }
\label{fig:natgradangle}
\end{figure} 

To verify this hypothesis empirically, we trained small RBMs with 4 visible and 4 hidden units using the exact natural gradient on the 2x2 \emph{Bars \& Stripes} dataset. After each gradient update the different exact gradients were calculated and the angle between the centered and the natural gradient as well as the angle between the standard and the natural gradient were evaluated.
The results are shown in Figure~\ref{fig:natgradangle} where Figure~\ref{subfig:natgradangle_A} shows the evolution of the average LL when the exact natural gradient is used for training with different learning rates. Figure~\ref{subfig:natgradangle_B} shows the average angles between the different gradients during training when the natural gradient is used for training with a learning rate of $0.1$. 
The angle between centered and natural gradient is consistently much smaller than the angle between standard and natural gradient. 
Comparable results can also be observed for the \emph{Shifting \& Bars} dataset and when the standard, or centered gradient is used for training. 

Notice, how fast the natural gradient reaches a value very close to the theoretical LL upper bound of $-13.86$ even for a learning rate of $0.1$. 
This verifies empirically the theoretical statement that the natural gradient is clearly the update direction of choice, which should be used if it is tractable.
To further emphasize how quick the natural gradient converges, we compared the average LL evolution of the standard, centered and natural gradient, as shown in Figure~\ref{subfig:natgradangle_C}.  Although much slower than the natural gradient, the centered gradient reaches the theoretical upper bound of the LL. The standard gradient seems to saturate on a much smaller value, showing again 
the inferiority of the standard gradient even if it is calculated exactly and not only approximated. 

To verify that the better performance of natural and centered gradient is not only due to larger gradients resulting in bigger step sizes, we also analyzed the LL evolution for the natural and centered gradient when they are scaled to the norm of the normal gradient before updating the parameters. The results are shown in Figure~\ref{subfig:natgradangle_C}. 
The natural gradient still outperforms the other methods but it becomes significantly slower than when used with its original norm. The reason why the norm of the natural gradient is somehow optimal can be explained by the fact that for
distributions of the exponential family a natural gradient update on the LL becomes equivalent to performing a Newton step on the LL. In this sense, the Fisher metric  results in an automatic step size adaption such that even a learning rate of $1.0$ can be used as illustrated in  Figure~\ref{subfig:natgradangle_A}.
Interestingly, if the length of the natural gradient is normalized to the length of the centered gradient and therefore the optimal step size is ignored, the centered gradient becomes almost as fast as the natural gradient. 
The fact that the normalization of the centered gradient increases the resulting learning speed shows that the norm of the centered gradient is smaller than the norm of the normal gradient. Therefore, the worse performance of the normal gradient does not result from the length but the direction of the gradient.
To conclude, these results support our assumption that the centered gradient is closer to the natural gradient and that it is therefore preferable over the standard gradient.

\subsection{Experiments with DBMs}
 
When centering was firstly applied to DBMs by \citet{MontavonMueller-2012} the authors only saw an improvement of centering for locally connected DBMs.
Due to our observations for RBMs and the structural similarity between RBMs and DBMs (a DBM is an RBM with restricted connections and partially unknown input data as discussed in Section~\ref{sec:dbm}) we expect that the benefit of centering carries over to DBMs.
To verify this assumption and empirically investigate the different centering variants in DBMs we performed extensive experiments on the big datasets listed in Section~\ref{sec:Benchmark_problems}. 

Training the models and evaluating the lower bound of the LL was performed as originally proposed for normal DBMs in~\citet{SalakhutdinovHinton-2009a}.
The authors also proposed to pre-train DBMs in a layer-wise fashion based on RBMs~\citep{HintonSalakhutdinov-2012}. In our experiments we trained all models with and without pre-training to investigate the effect of pre-training in both normal and centered DBMs. For pre-training we used the same learning rate and the same offset type as in the final DBM models. 
Notice, that we keep using the term ``average LL'' although it is precisely speaking only the lower bound of the average LL, which has been shown to be rather tied~\citep{SalakhutdinovHinton-2009a}. For the estimation of the partition function we again used AIS where we doubled the number of  intermediate temperatures compared to the  RBM setting to $29000$.
We continue using the short hand notation introduced for RBMs also for DBMs with the only difference that we add a third letter to indicate the offset used in the second hidden layer, such that $000$ corresponds to the normal binary DBM, and $ddd^b_s$ and $aaa^b_s$ corresponds to the centered DBMs using the data mean and the average of data and model mean as offsets, respectively. 
Due to the large number of experiments and the high computational cost -- especially for estimating the LL -- the experiments where repeated only $10$ times and we focused our analysis only on normal DBMs ($000$) and fully centered DBMs ($ddd^b_s$, $aaa^b_s$).

\begin{figure}[t]
\centering
\subfigure[Without pre-training]{
\includegraphics[trim=0.6cm 0cm 1.43cm 0cm, clip=true,scale=0.41]{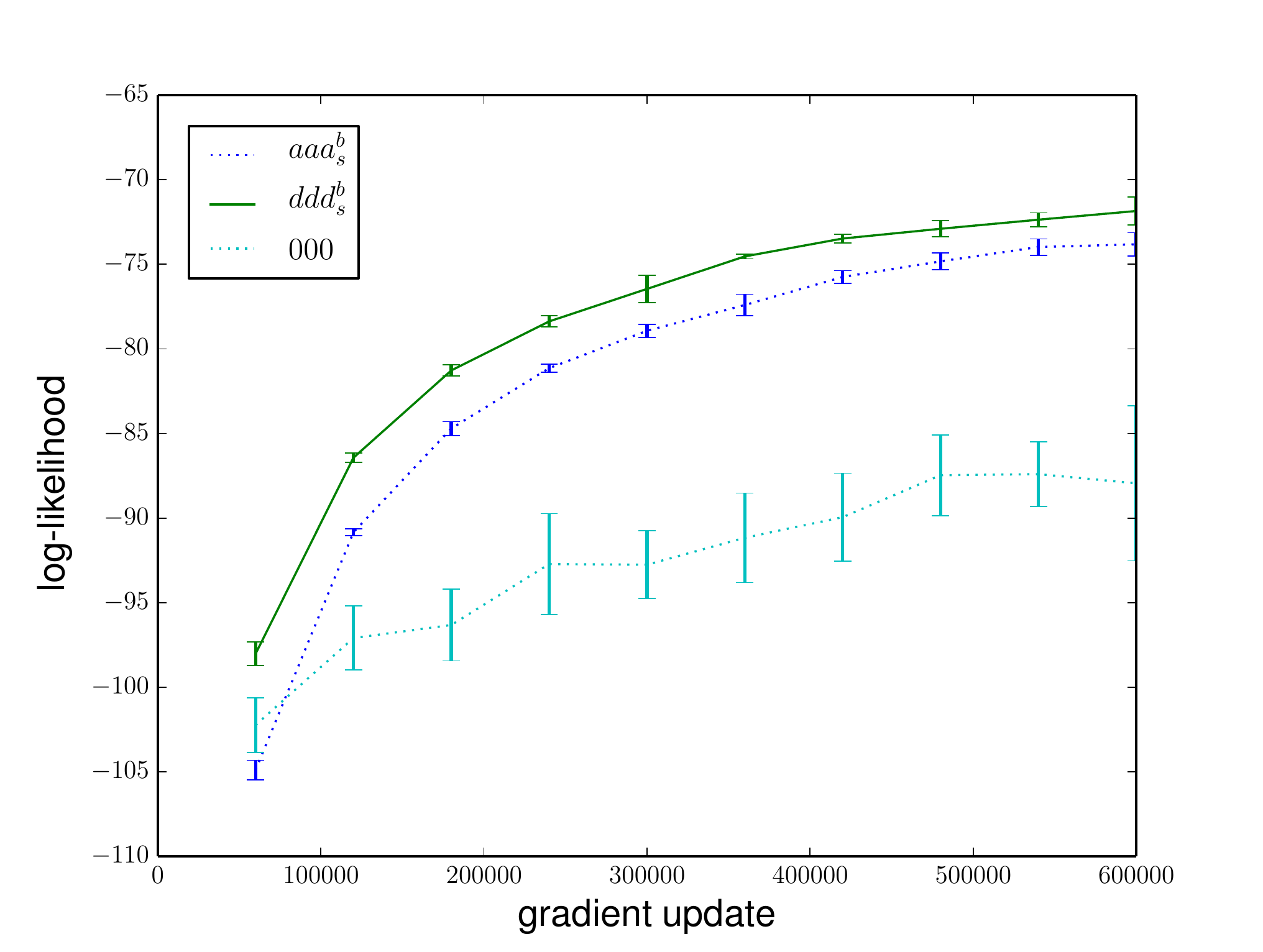}}
\subfigure[With pre-training]{
\includegraphics[trim=1.6cm 0cm 1.43cm 0cm, clip=true,scale=0.41]{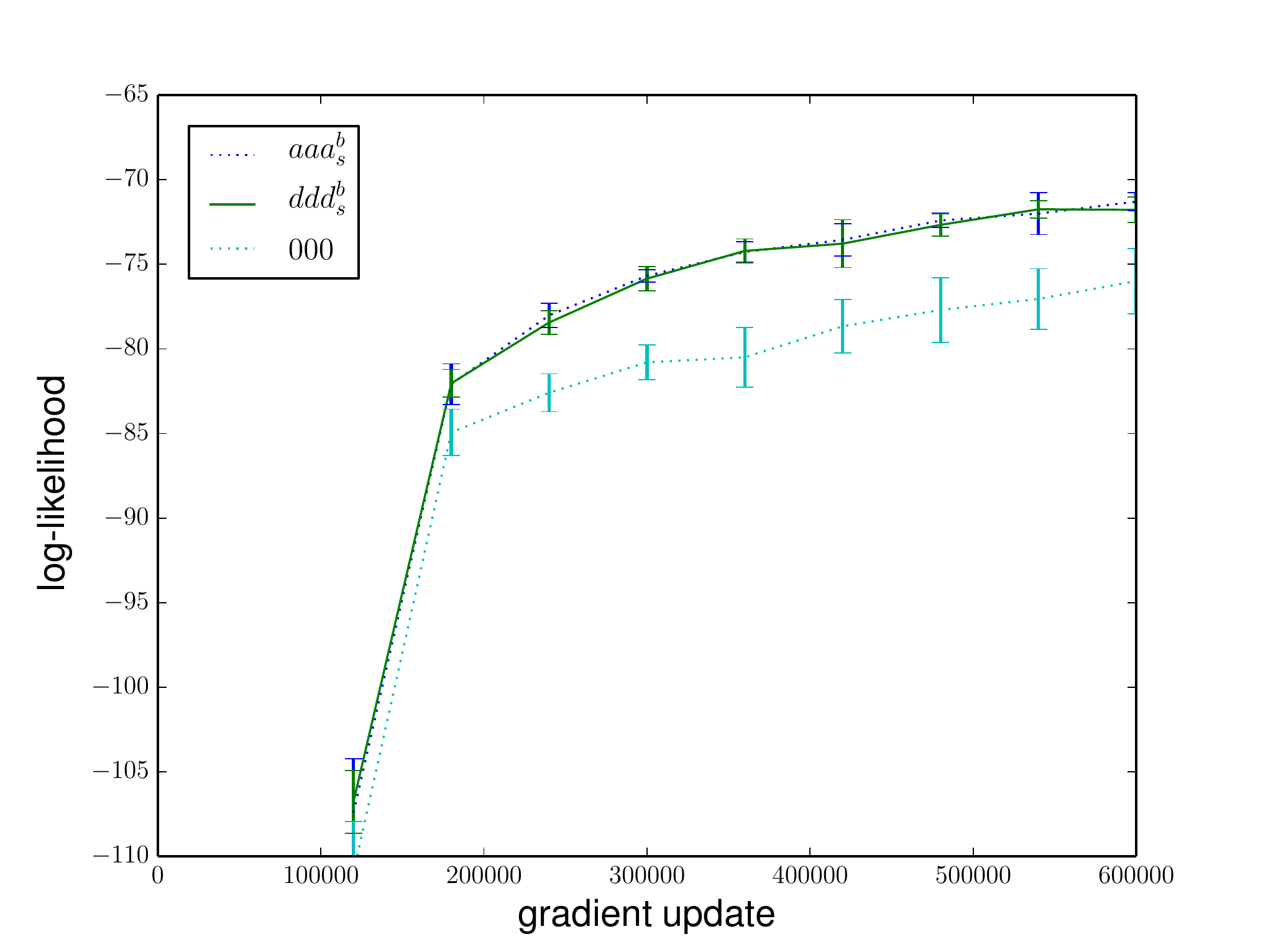}}
\caption{Evolution of the average LL on the test data of the \emph{MNIST} dataset for DBMs with 500x500 hidden units. The different variants $aaa^b_s$, $ddd^b_s$ and $000$ were either trained (a) without pre-training or (b) when each DBM layer was pre-trained for 120.000 gradient updates (200 epochs). In both cases PCD-$1$ with a learning rate of $\eta=0.001$ and for centering a sliding factor of 0.01 was used. The error bars indicate the standard deviation of the average LL over the 10 trials. We skipped evaluating the initial model and (b) starts after the 200 epochs of pre-training to roughly account for the computation overhead of pre-training}
\label{fig:MNIST_DBM}
\end{figure} 

\begin{figure}[t]
\centering
	\subfigure[$000$ without pre-training]{\label{subfig:000_no_pre}
	\includegraphics[trim=0.0cm 0cm 0cm 0cm, clip=true,scale=0.7]{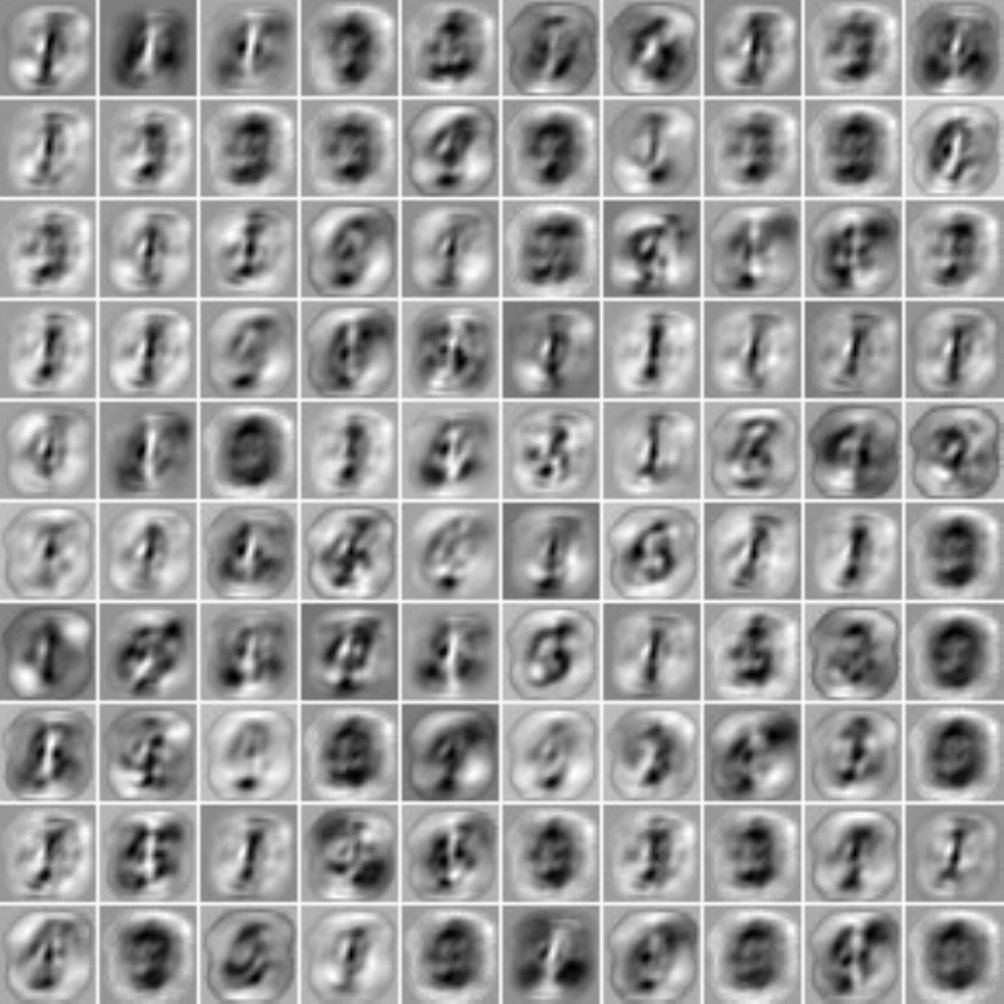}}
	\hspace{0.4cm}
	\subfigure[$ddd^b$ without pre-training]{\label{subfig:DDD_no_pre}
	\includegraphics[trim=0cm 0cm 0cm 0cm, clip=true,scale=0.7]{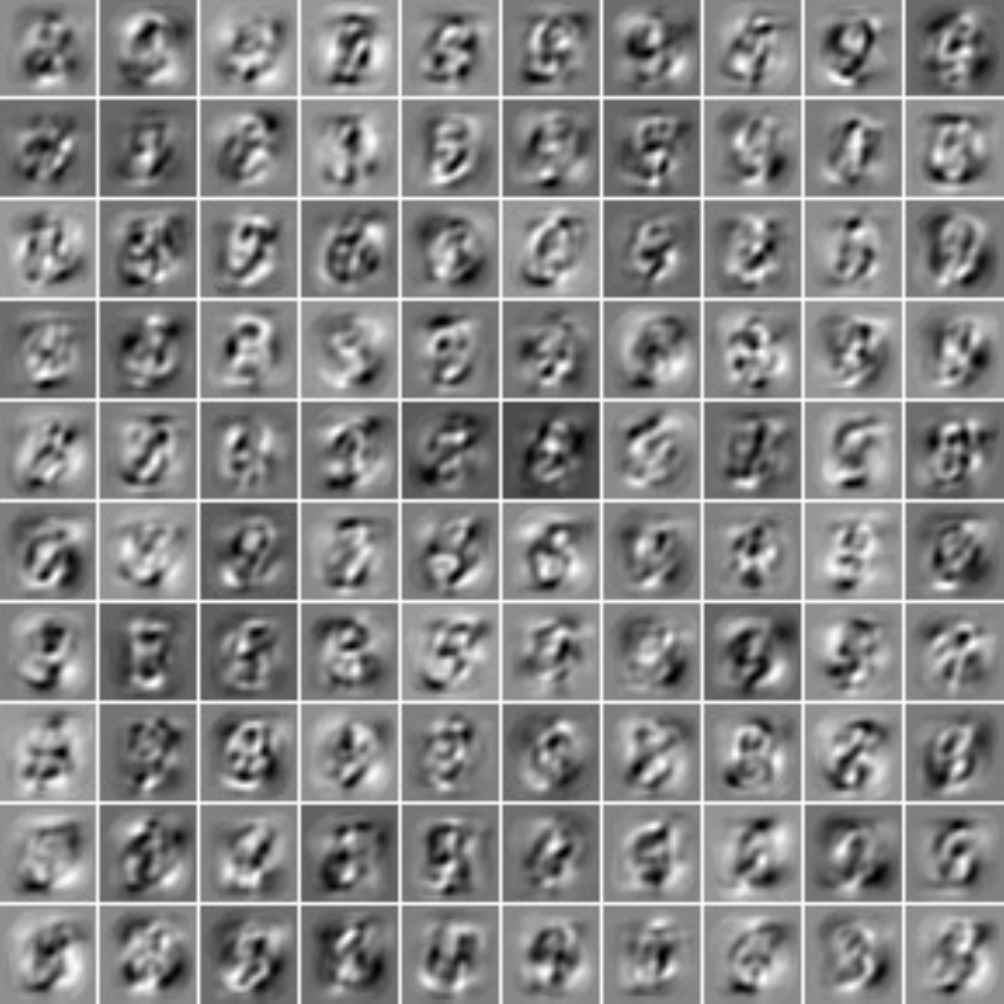}}	
	\subfigure[$000$ with pre-training]{\label{subfig:000_pre}
	\includegraphics[trim=0.0cm 0cm 0cm 0cm, clip=true,scale=0.7]{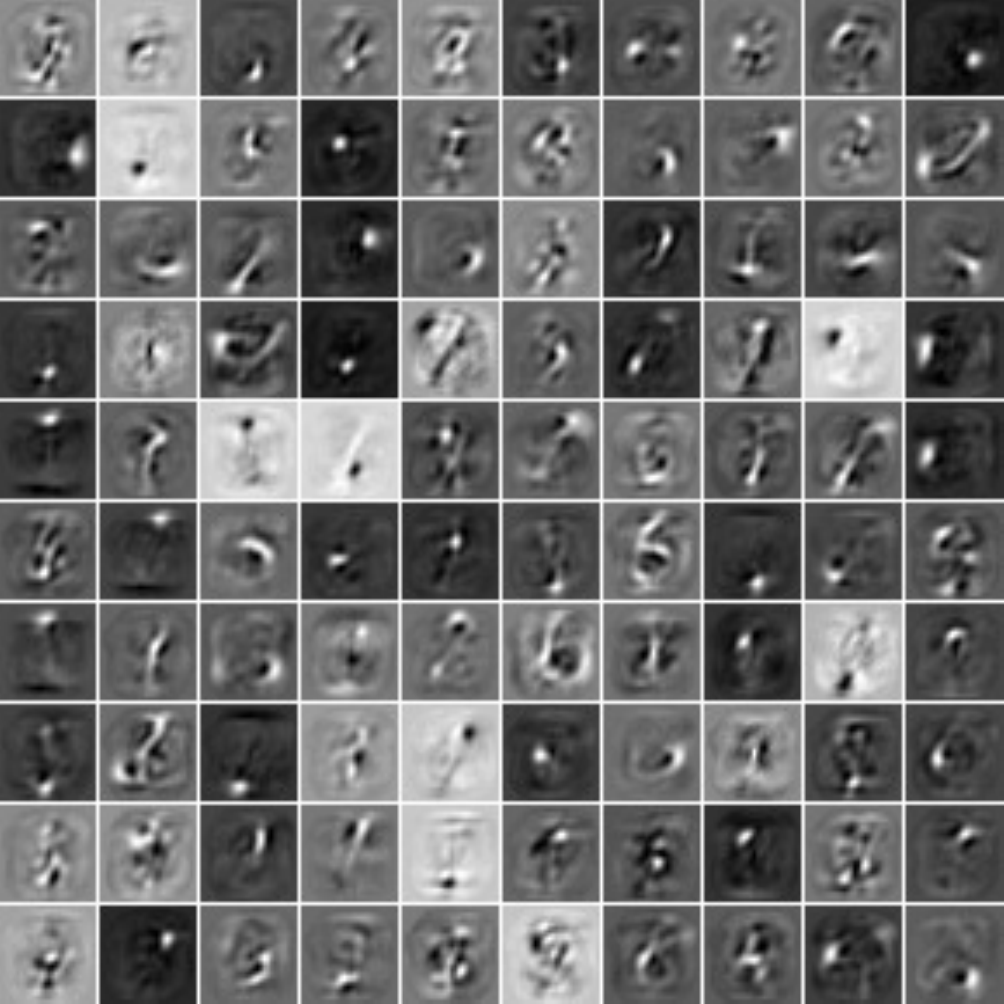}}
	\hspace{0.4cm}
	\subfigure[$ddd^b$ with pre-training]{\label{subfig:DDD_pre}
	\includegraphics[trim=0cm 0cm 0cm 0cm, clip=true,scale=0.7]{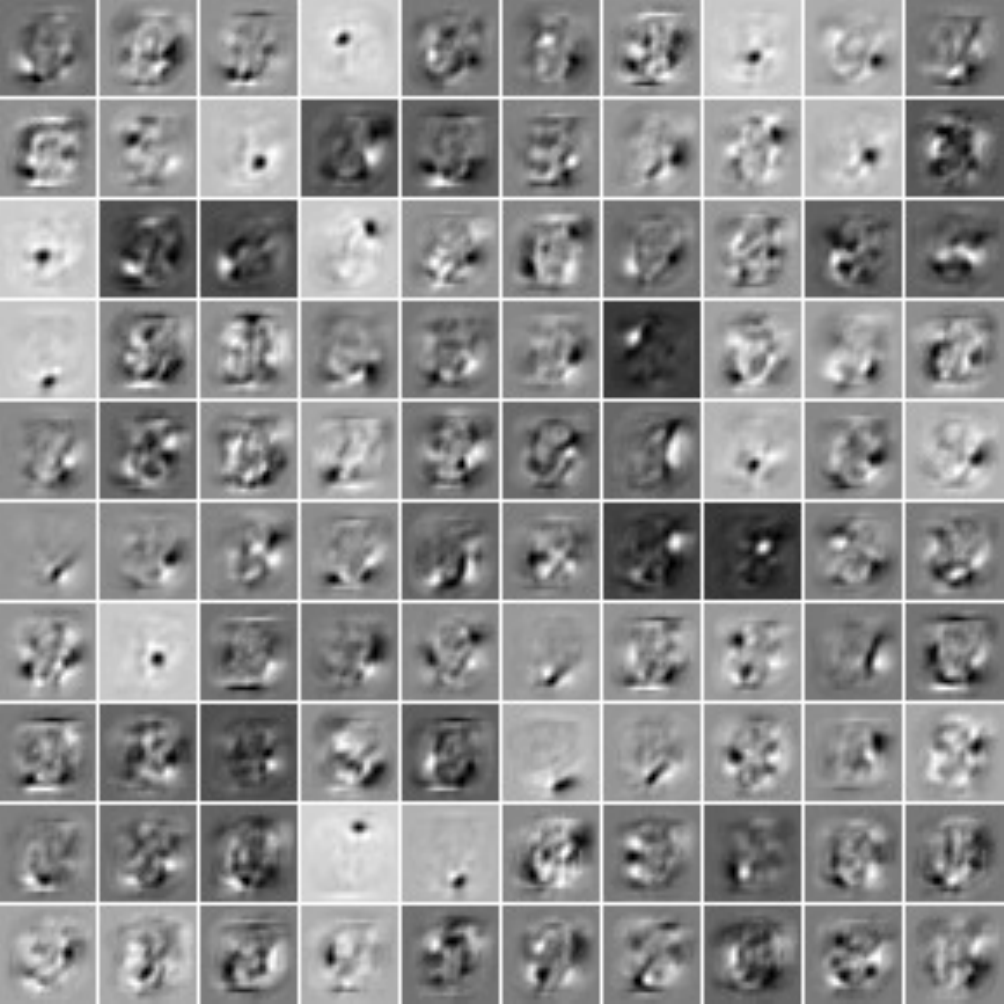}}	
\caption{Random selection of 100 linearly projected filters of the second hidden layer for (a) $000$ and (b) $ddd^b$ without pre-training and (c) $000$ and (d) $ddd^b$ without 200 epochs pre-training. The filters have been normalized independently such that the structure can be seen better.}
\label{fig:DBM_filter}
\end{figure} 

Again, we begin our analysis with the \emph{MNIST} dataset on which we trained normal and centered DBMs with $500$ hidden units in the first and $500$ units in the second hidden layer. Training was done using PCD-$1$ with a batch size of $100$, a learning rate of $0.001$ and in case of centering a sliding factor of $0.01$ for the extensive amount of $1000$ epochs ($600000$ gradient updates). The evolution of the average LL on the test data without pre-training is shown in Figure~\ref{fig:MNIST_DBM}(a) where the evolution of the average LL on the training data is not shown since it is almost equivalent. Both centered DBMs reach significantly higher LL values with a much smaller standard deviation between the trials than the normal DBMs (indicated by the error bars) and $ddd^b_s$ performs slightly better than $aaa^b_s$. These findings are different to the observations of \citet{MontavonMueller-2012} who reported only an improvement of the model through centering for locally connected DBMs. This might be due to the different training setup (e.g. learning rate, batch size, shorter training time or approximation of the data dependent part of the LL gradient by Gibbs sampling instead of optimizing the lower bound of the LL)
Figure~\ref{fig:MNIST_DBM}(b) shows the evolution of the average LL on the test data of the same models with pre-training for $120000$ gradient updates ($200$ epochs). The evolution of the average LL on the training data was again almost equivalent. $ddd^b_s$ has approximately the same performance with and without pre-training but $aaa^b_s$ now has similar performance as $ddd^b_s$. Pre-training allows $000$ to reach better LL than without pre-training, however it is still significantly worse compared to the centered DBMs with or without pre-training.
By comparing the results with the results of RBMs with $500$ hidden units trained on \emph{MNIST} shown in Figure~\ref{fig:MNIST500}(a) we see that all DBMs reach higher LL values than the corresponding RBM models. 

\begin{figure}[t]
\centering
\subfigure[Without pre-training]{
\includegraphics[trim=0.6cm 0cm 1.43cm 0cm, clip=true,scale=0.41]{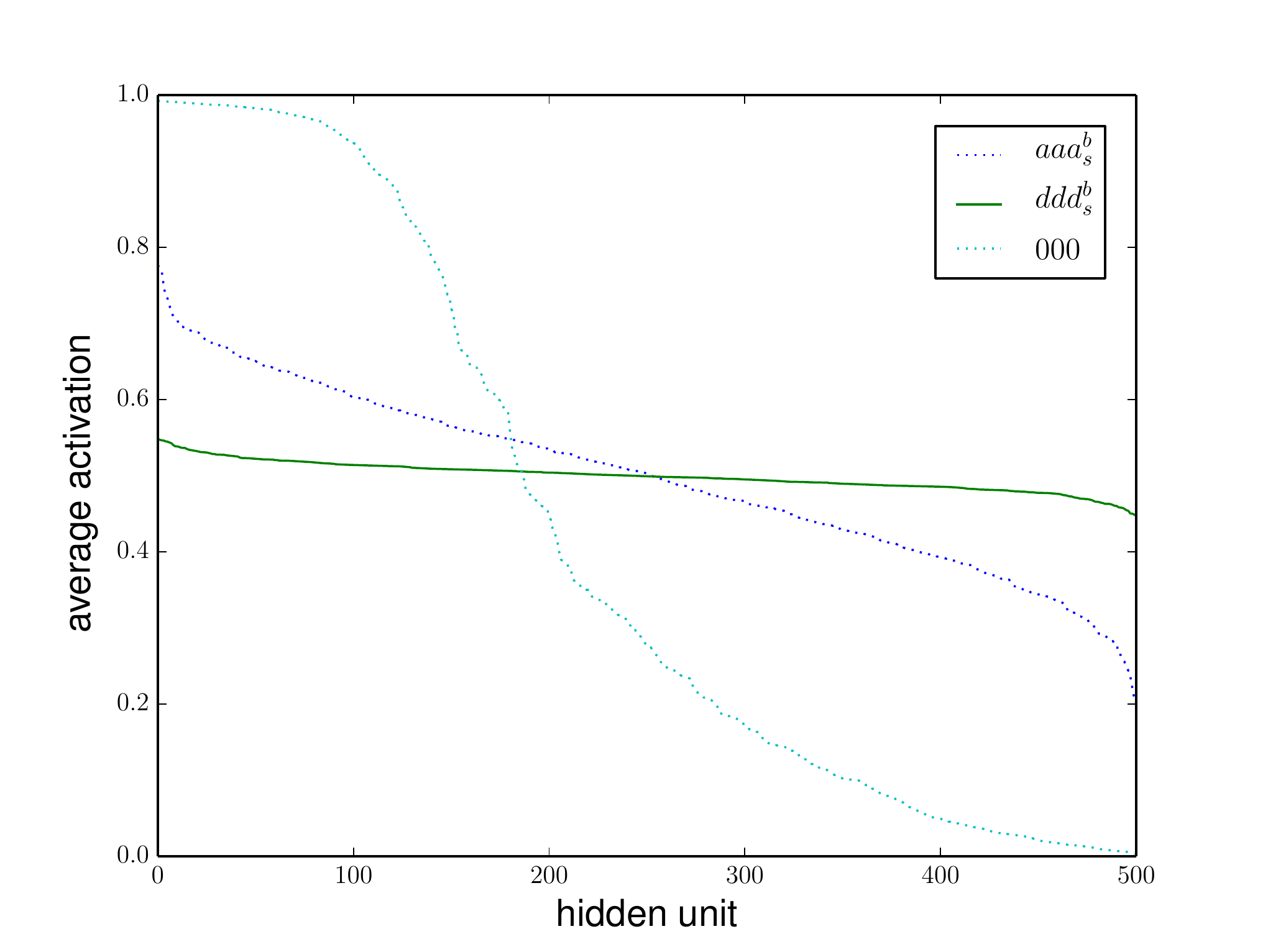}}
\subfigure[With pre-training]{
\includegraphics[trim=1.6cm 0cm 1.43cm 0cm, clip=true,scale=0.41]{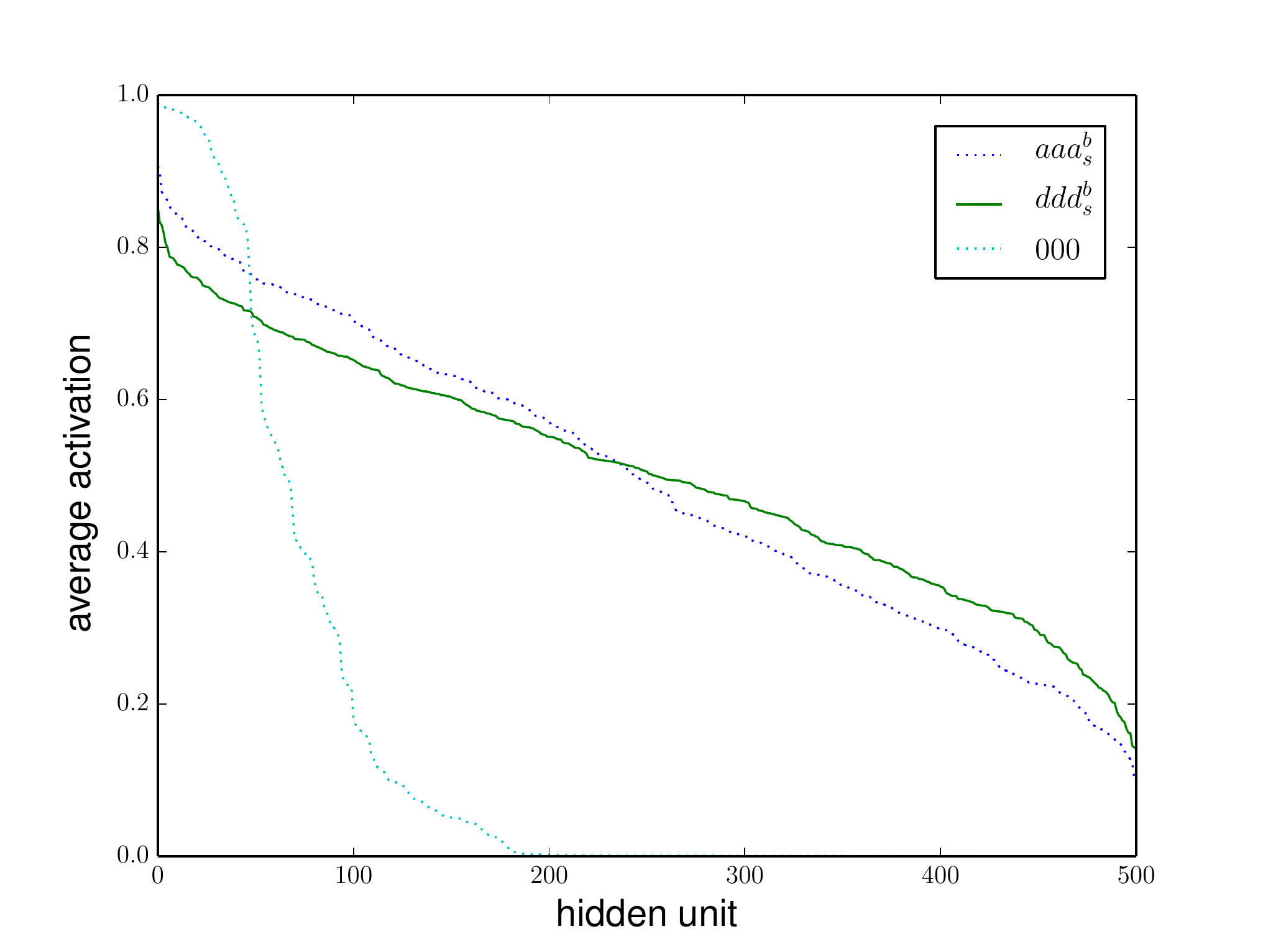}}
\caption{Decreasingly ordered average hidden activity on the training data for the different models (a) without pre-training and (b) with pre-training}
\label{fig:DBM_activity}
\end{figure} 

The higher layer representations in DBMs highly depend on the data driven lower layer representations. Thus we expect to see a qualitative difference between the second layer receptive fields or filters given by the columns of the weight matrices in centered and normal DBMs.
We did not visualize the filters of the first layer since all models showed the well known stroke like structure, which can be seen for RBMs in the review paper by \citet{fischer:14} for example. We visualized the filters of the second layer by linearly back projecting the second layer filters into the input space given by the matrix product of first and second layer weight matrix. The corresponding back projected second layer filters for $000$ and $ddd^b_s$ are shown Figure~\ref{fig:DBM_filter}(a) and (b), respectively. 
It can be seen that many second layer filters of $000$ are roughly the same and thus highly correlated. Moreover, they seem to represent some kind of mean information. Whereas
the filters for $ddd^b_s$ have much more diverse and less correlated structures than the filters of the normal DBM.
When pre-training is used the filters for $000$ become more diverse and the filters of both  $000$ and $ddd^b_s$ become more selective as can be seen in Figure~\ref{fig:DBM_filter}(c) and (d), respectively. 
The effect of the diversity difference of the filters can also be seen from the average activation of the second hidden layer. As shown in Figure~\ref{fig:DBM_activity}(a) without pre-training the average activation of the hidden units of $ddd^b_s$ given the training data is approximately 0.5 for all units while for $aaa^b_s$ it is a bit less balanced and for $000$ most of the units tend to be either active or inactive all the time. The results in Figure~\ref{fig:DBM_activity}(b) illustrate that the average activity for all models become less balanced when pre-training is used, which also reflects the higher selectivity of the filters as shown in Figure~\ref{fig:DBM_filter}(c) and (d). While the second layer hidden activities of $ddd^b_s$ and $aaa^b_s$ stay in a reasonable range, they become extremely selective for $000$ where 300 out of 500 units are inactive all the time. Therefore, the filters, average activation and  evolution of the LL indicate that that normal RBMs have difficulties in making use of the second hidden layer with and without pre-training.
\begin{figure}[ht]
\centering
\subfigure[Test LL without pre-training]{
\includegraphics[trim=0.6cm 0cm 1.43cm 0cm, clip=true,scale=0.41]
{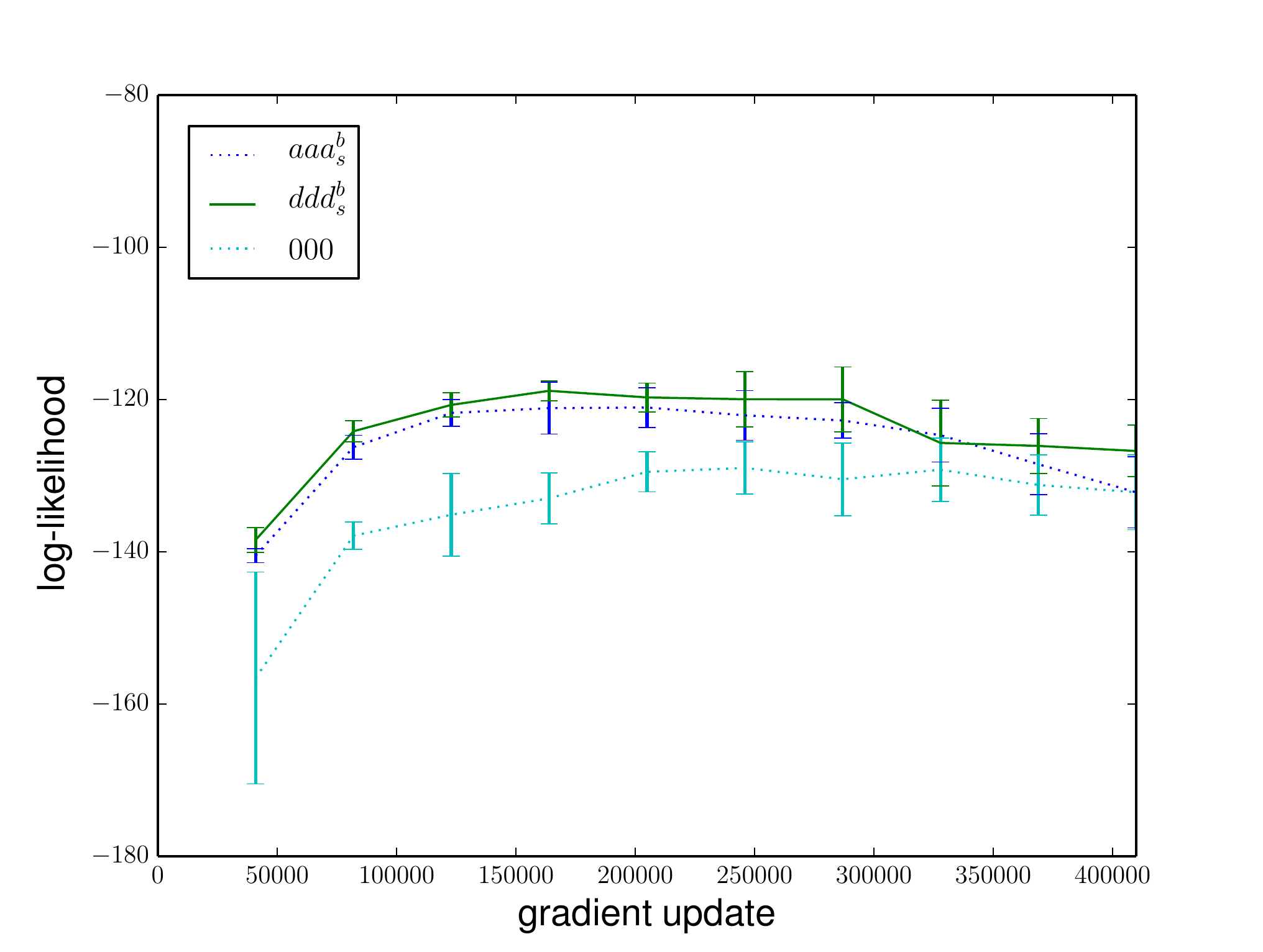}}
\subfigure[Test LL with 500 epochs pre-training]{
\includegraphics[trim=1.6cm 0cm 1.43cm 0cm, clip=true,scale=0.41]{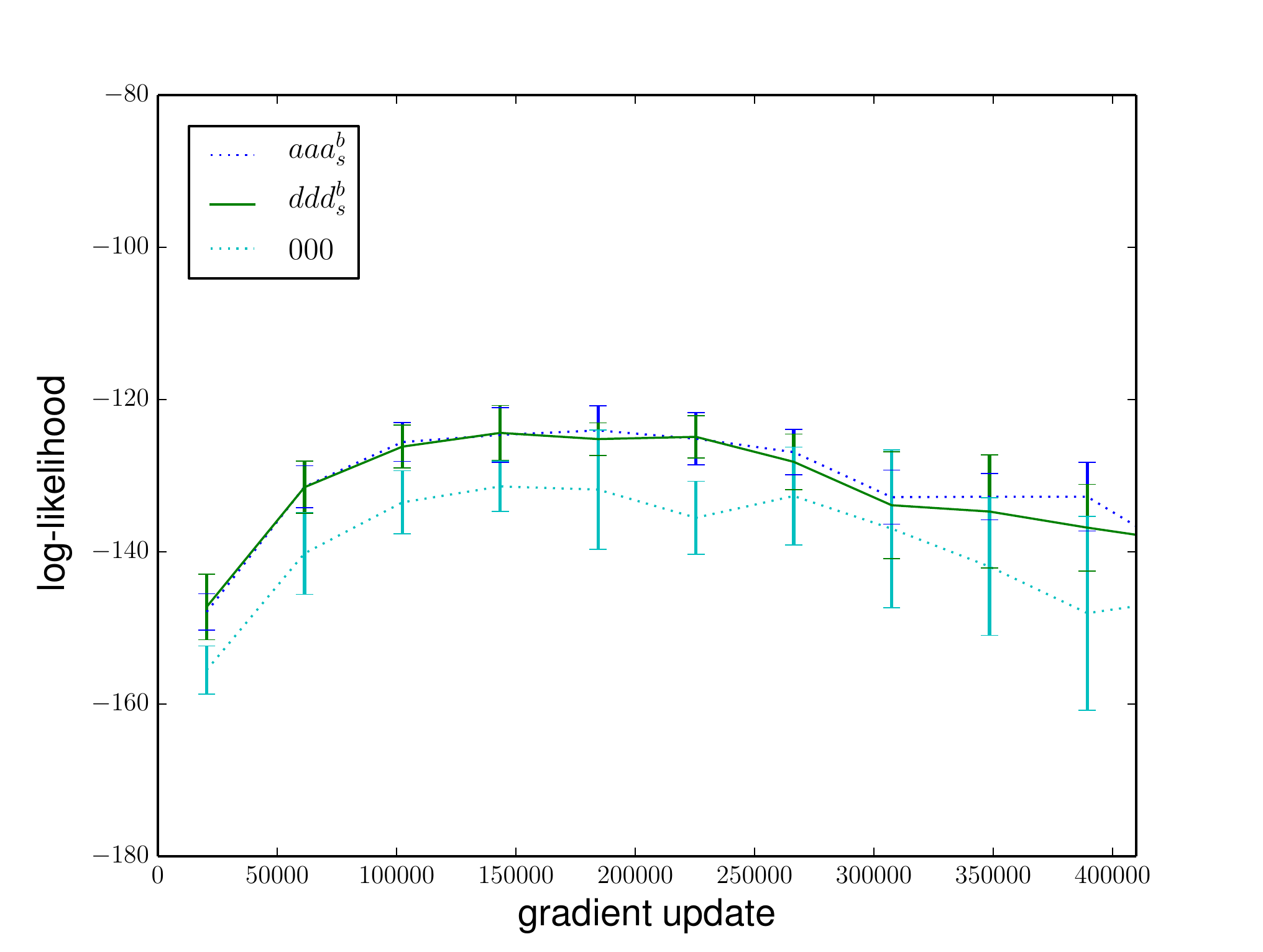}}
\subfigure[Train LL without pre-training]{
\includegraphics[trim=0.6cm 0cm 1.43cm 0cm, clip=true,scale=0.41]{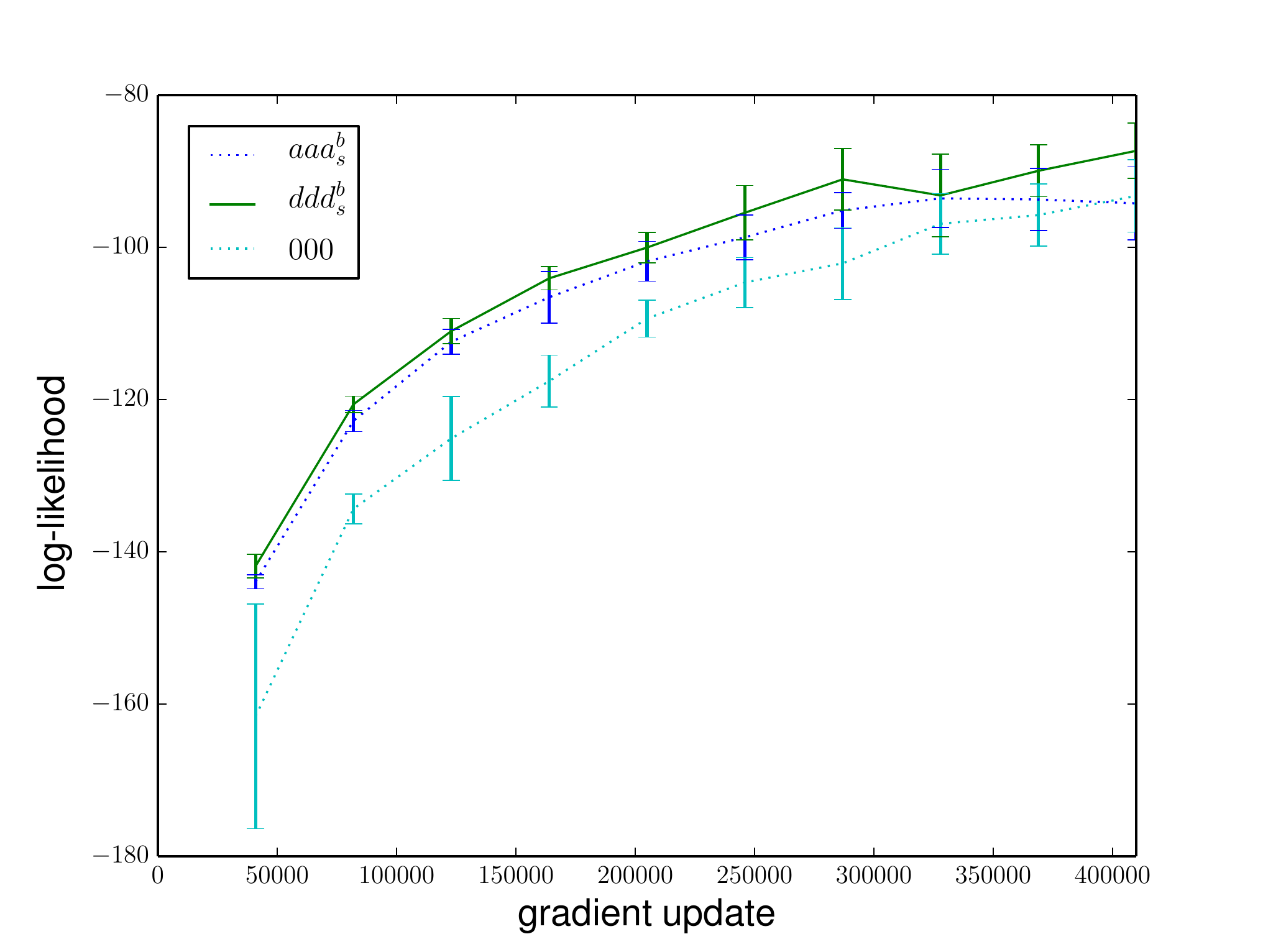}}
\subfigure[Train LL with 500 epochs pre-training]{
\includegraphics[trim=1.6cm 0cm 1.43cm 0cm, clip=true,scale=0.41]{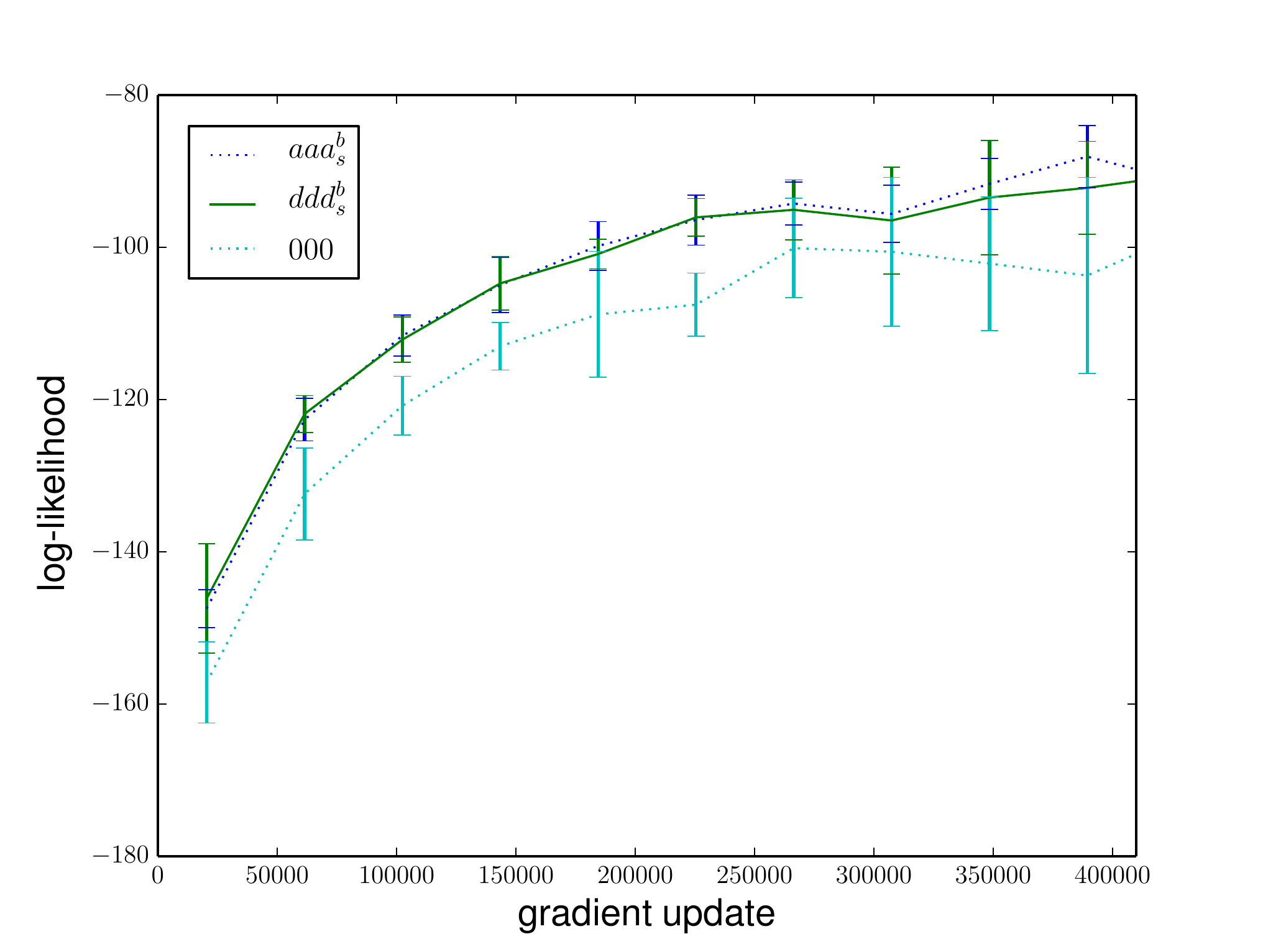}}
\caption{Evolution of the average LL on the \emph{Caltech} dataset for the DBMs $aaa^b_s$, $ddd^b_s$ and $000$ with 500x500 hidden units. (a) test LL (c) train LL without pre-training, (b) test LL and (d) train LL with $500$ epochs (20500 gradients updates) pre-training.
The models were trained using PCD-1 with a batch size of 100, a sliding factor of 0.01 and a learning rate of $\eta=0.001$ was used. The error bars indicate the standard deviation of the LL over the 10 trials. We skipped evaluating the initial model and (b) and (d) start after the 500 epochs of pre-training to roughly account for the computation overhead of pre-training}
\label{fig:CALTECH_DBM}
\end{figure} 

\begin{table}[t]
\begin{center}
\begin{small}
\begin{sc}
\begin{tabular}{l l l}
\hline
\abovespace\belowspace
Dataset & \multicolumn{1}{c}{$dd_s^b$} & \multicolumn{1}{c}{\emph{00}} \\
\hline
\multicolumn{3}{l}{No pre-training}
\abovespace\belowspace \\
adult  &  \underline{-15.54} $\pm$0.42 \,(-17.12)  & -18.44 $\pm$0.12 \,(-24.92)  \\
connect4  &  \underline{-15.09} $\pm$0.39 \,(-40.83)  & -18.15 $\pm$0.09 \,(-43.84)  \\
dna  &  \underline{-89.81} $\pm$0.13 \,(-92.57)  & -91.18 $\pm$0.10 \,(-95.12)  \\
mushrooms  &  \underline{-15.24} $\pm$0.60 \,(-19.68)  & -17.21 $\pm$0.82 \,(-27.71)  \\
nips  &  \underline{-270.35} $\pm$0.09 \,(-360.59)  & -275.43 $\pm$0.13 \,(-360.56)  \\
ocr letters  &  \underline{-30.37} $\pm$0.39 \,(-32.23)  & -31.56 $\pm$0.69 \,(-32.74)  \\
rcv1  &  -46.83 $\pm$0.08 \,(-47.26)  & \underline{-46.51} $\pm$0.16 \,(-47.88)  \\
web  &  \underline{-30.02} $\pm$0.59 \,(-72.88)  & -30.35 $\pm$0.58 \,(-79.36)  \\
\hline
\multicolumn{3}{l}{With pre-training}
\abovespace\belowspace \\
adult  &  \underline{-18.86} $\pm$2.74 \,(-21.43)  & -21.64 $\pm$0.59 \,(-40.42)  \\
connect4  &  \underline{-27.38} $\pm$1.52 \,(-32.13)  & -41.21 $\pm$1.61 \,(-52.04)  \\
dna  &  \underline{-89.87} $\pm$0.11 \,(-94.03)  & -91.06 $\pm$0.10 \,(-97.48)  \\
mushrooms  &  -24.23 $\pm$5.43 \,(-35.07)  & \underline{-21.42} $\pm$5.87 \,(-35.82)  \\
nips  &  \underline{-272.92} $\pm$0.16 \,(-404.11)  & -276.88 $\pm$0.21 \,(-378.76)  \\
ocr letters  &  -36.89 $\pm$1.44 \,(-39.76)  & \underline{-32.25} $\pm$2.04 \,(-35.01)  \\
rcv1  &  -47.79 $\pm$0.84 \,(-49.30)  & \underline{-46.90} $\pm$1.04 \,(-48.36)  \\
web  &  \underline{-31.10} $\pm$0.14 \,(-81.93)  & -32.43 $\pm$0.14 \,(-47.73)  \\
\hline
\end{tabular}
\end{sc}
\end{small}
\end{center}
\caption {Maximum average LL on test data on various datasets for DBMs with 200 hidden units on the first and second layer. For training (top) without pre-training and (bottom) with 200 epochs pre-training PCD-$1$, with a learning rate of $0.01$, batch size of 100 was used. (The best result is underlined).}
\label{tab:testLL_Various_DBM}
\end{table}

\begin{table}[t]
\begin{center}
\begin{small}
\begin{sc}
\begin{tabular}{l l l}
\hline
\abovespace
Dataset & \multicolumn{1}{c}{$dd_s^b$} & \multicolumn{1}{c}{\emph{00}} \\
\hline
\multicolumn{3}{l}{No pre-training}
\abovespace\belowspace \\
adult  &  \underline{-14.65} $\pm$0.37 \,(-15.49)  & -17.88 $\pm$0.12 \,(-23.04)  \\
connect4  &  \underline{-14.68} $\pm$0.38 \,(-39.74)  & -17.82 $\pm$0.07 \,(-42.78)  \\
dna  &  \underline{-62.00} $\pm$1.11 \,(-62.42)  & -62.48 $\pm$0.96 \,(-62.98)  \\
mushrooms  &  \underline{-14.74} $\pm$1.54 \,(-18.61)  & -16.62 $\pm$0.82 \,(-26.53)  \\
nips  &  \underline{-107.09} $\pm$0.21 \,(-107.09)  & -114.28 $\pm$0.21 \,(-114.28)  \\
ocr letters  &  \underline{-29.15} $\pm$0.67 \,(-30.91)  & -30.41 $\pm$0.69 \,(-31.57)  \\
rcv1  &  \underline{-45.75} $\pm$0.07 \,(-46.18)  & -45.80 $\pm$0.25 \,(-47.17)  \\
web  &  \underline{-29.37} $\pm$0.59 \,(-69.70)  & -29.68 $\pm$0.58 \,(-77.11)  \\
\hline
\multicolumn{3}{l}{With pre-training}
\abovespace\belowspace \\
adult  &  \underline{-17.11} $\pm$2.70 \,(-19.66)  & -21.42 $\pm$0.59 \,(-38.21)  \\
connect4  &  \underline{-26.62} $\pm$6.66 \,(-31.14)  & -40.53 $\pm$1.56 \,(-51.29)  \\
dna  &  \underline{-59.97} $\pm$0.49 \,(-60.37)  & -61.16 $\pm$1.51 \,(-61.72)  \\
mushrooms  &  -23.89 $\pm$5.27 \,(-34.11)  & \underline{-20.59} $\pm$5.85 \,(-34.86)  \\
nips  &  \underline{-114.51} $\pm$3.76 \,(-118.29)  & -118.90 $\pm$6.19 \,(-120.94)  \\
ocr letters  &  -35.39 $\pm$1.88 \,(-37.70)  & \underline{-30.90} $\pm$3.53 \,(-33.21)  \\
rcv1  &  -46.54 $\pm$0.87 \,(-48.07)  & \underline{-45.93} $\pm$1.04 \,(-47.39)  \\
web  &  \underline{-30.41} $\pm$0.15 \,(-78.45)  & -31.68 $\pm$0.15 \,(-44.42)  \\
\hline
\end{tabular}
\end{sc}
\end{small}
\end{center}
\caption {Maximum average LL on training data on various datasets for DBMs with 200 hidden units on the first and second layer. For training (top) without pre-training and (bottom) with 200 epochs pre-training PCD-$1$, with a learning rate of $0.01$, batch size of 100 was used. (The best result is underlined).}
\label{tab:trainLL_Various_DBM}
\end{table}

We continue our analysis with experiments on the \emph{Caltech} dataset on which we again trained normal and centered DBMs with $500$ hidden units on the first and second hidden layer. Training was also done using PCD-$1$ with a batch size of $100$, a learning rate of $0.001$ and in case of centering a sliding factor of $0.01$. Since the training data has only 41 batches the models were trained for the extensive amount of $10000$ epochs ($410000$ gradient updates).
Figure~\ref{fig:CALTECH_DBM} shows the average LL on the test data (a) without and (b) with 500 epochs pre-training. In addition, Figure~\ref{fig:CALTECH_DBM}(c) and (d) show the corresponding average LL on the training data demonstrating that all models overfitted to the \emph{Caltech} dataset. The results are consistent with the findings for \emph{MNIST}, that is $000$ performs worse than centering on training and test data independent of whether pre-training is used or not. Furthermore, $aaa^b_s$ seems to perform slightly worse than $ddd^b_s$ without pre-training, while the performance becomes equivalent if pre-training is used. But in contrast to the results of \emph{MNIST},  on \emph{Caltech} all methods perform worse  with pre-training.
This negative effect of pre-training becomes even worse when the number of pre-training epochs is increased. In the case of 2000 epochs of pre-training for example, $ddd^b_s$ and $aaa^b_s$ still perform better than $000$ but the maximal average LL among all models, which was reached by $ddd^b_s$ was only -98.1 for the training data and -141.4 for the test data, compared to -90.4 and -124.0 when 500 epochs of pre-training were used and -87.3 and -118.8 when no pre-training was used. Without pre-training the LL values are comparable to the results when an RBM with 500 hidden units is trained on \emph{Caltech} as shown in Figure~\ref{fig:CALTECH500}, illustrating that in terms of the LL a DBM does not necessarily perform better than an RBM. 
We also visualized the filters and plotted the average hidden activities for the training data of \emph{Caltech}, which lead to the same conclusions as the results for \emph{MNIST} and are not shown for this reason.  

Finally, we also performed experiments on the eight {\bf{\emph{additional binary datasets}}} described in Section~\ref{sec:Benchmark_problems} using the same training setup as for the corresponding RBM experiments. That is, 
the DBMs with 200x200 hidden units were trained for $5000$ epochs with PCD-$1$, a batch size of $100$, a learning rate of $0.01$, and in the case of centering a sliding factor of $0.01$. The LL was evaluated every 50th epoch and in the case of pre-training the models were pre-trained for 200 epochs. 
Table~\ref{tab:testLL_Various_DBM} shows the maximum average LL on the test data (top) without pre-training and (bottom) with pre-training. Without pre-training the results are consistent with the findings for RBMs  that $ddd^b$ performs better than $000$ on all datasets except for RCV1 where $000$ performs slightly better. 
The LL values for the DBMs are comparable but not necessarily better than the corresponding LL values for RBMs, which are shown in Table~\ref{tab:testLL_Various}. In the case of the web datasets for example the DBMs even perform worse than the RBM models.
When pre-training is used the performance of all models, centered or normal, is worse than the performance of the corresponding DBMs without pre-training. For completeness Table~\ref{tab:trainLL_Various_DBM} shows the maximum average LL for the training data leading to the same conclusion as the test data. 
To summarize, the experiments described in this section show that centering leads to higher LL values for DBMs. While pre-training leads to more selective filters in general it often is even harmful for the model quality.

\subsection{Experiments with Auto Encoders}

\begin{table}[htp]
\begin{center}
\begin{small}
\begin{sc}
\begin{tabular}{l l l l}
\hline
\abovespace
Dataset - Learning rate & \multicolumn{1}{c}{$dd_s^b$} & \multicolumn{1}{c}{\emph{00}} \\
\hline
\multicolumn{3}{l}{Test data}
\abovespace\belowspace \\
mnist - 0.1  &  \underline{50.21472} $\pm$0.0256  (5000) & 50.24859 $\pm$0.0200  (5000) \\
mnist - 0.5    &  \underline{50.01338} $\pm$0.0316  (5000) & 56.36068 $\pm$0.7578  (22) \\
caltech  - 0.01 &  \underline{44.38403} $\pm$0.2257  (2500) & 49.21274 $\pm$0.2119  (1968) \\
caltech - 0.1   &  \underline{44.45882} $\pm$0.1620  (246) & 48.59724 $\pm$0.3964  (206) \\
adult - 0.1   &  \underline{0.38837} $\pm$0.0229  (5000) & 0.47460 $\pm$0.0181  (4676) \\
adult - 0.5   &  \underline{0.36825} $\pm$0.0220  (1526) & 0.46086 $\pm$0.0155  (884) \\
connect4  - 0.1 &  \underline{0.03015} $\pm$0.0025  (5000) & 0.03467 $\pm$0.0040  (5000) \\
connect4 - 0.5 &  \underline{0.02357} $\pm$0.0019  (1431) & 0.02856 $\pm$0.0032  (1349) \\
dna - 0.01  &  \underline{34.34353} $\pm$0.1242  (2161) & 34.77299 $\pm$0.1948  (2105) \\
dna - 0.1  &  \underline{34.35117} $\pm$0.1245  (216) & 34.80547 $\pm$0.1823  (210) \\
mushrooms - 0.1 &  0.14355 $\pm$0.0117  (5000) & \underline{0.14226} $\pm$0.0081  (5000) \\
mushrooms - 0.5 &  \underline{0.08555} $\pm$0.0167  (3173) & 0.09960 $\pm$0.0169  (2936) \\
nips - 0.01  &  \underline{183.21045} $\pm$0.6355  (2261) & 188.65301 $\pm$0.7262  (2107) \\
nips - 0.1   &  \underline{183.31413} $\pm$0.6081  (226) & 189.10982 $\pm$0.6674  (212) \\
ocr letters - 0.1  &  \underline{5.41182} $\pm$0.1994  (5000) & 5.46969 $\pm$0.2138  (5000) \\
ocr letters - 0.5 &  \underline{4.91528} $\pm$0.1715  (3703) & 5.30343 $\pm$0.2737  (1945) \\
rcv1 - 0.1  &  12.93456 $\pm$0.1562  (5000) & \underline{12.55443} $\pm$0.3097  (5000) \\
rcv1 - 0.5 &  12.32545 $\pm$0.3296  (4953) & \underline{12.16340} $\pm$1.2188  (2946) \\
web - 0.1 &  \underline{1.07535} $\pm$0.0174  (1425) & 1.22756 $\pm$0.0121  (944) \\
\hline
\multicolumn{3}{l}{Training data}
\abovespace\belowspace \\
mnist - 0.1  &  \underline{50.03172} $\pm$0.0224  (5000) & 50.06083 $\pm$0.0167  (5000) \\
mnist - 0.5   &  \underline{49.84428} $\pm$0.0253  (5000) & 55.89110 $\pm$0.7516  (22) \\
caltech - 0.01 &  \underline{5.71437} $\pm$0.0235  (5000) & 6.42052 $\pm$0.0392  (5000) \\
caltech - 0.1   &  \underline{0.57507} $\pm$0.0030  (5000) & 0.62426 $\pm$0.0119  (5000) \\
adult - 0.1   &  0.04687 $\pm$0.0024  (5000) & \underline{0.03677} $\pm$0.0024  (5000) \\
adult - 0.5   &  \underline{0.03221} $\pm$0.0030  (1526) & 0.04053 $\pm$0.0051  (894) \\
connect4 - 0.1   &  0.01465 $\pm$0.0007  (5000) & \underline{0.01187} $\pm$0.0006  (5000) \\
connect4 - 0.5   &  0.01022 $\pm$0.0002  (1431) & \underline{0.00914} $\pm$0.0004  (1349) \\
dna - 0.01 &  \underline{17.80686} $\pm$0.0812  (5000) & 18.08306 $\pm$0.0665  (5000) \\
dna - 0.1   &  \underline{11.70726} $\pm$0.1018  (5000) & 12.08309 $\pm$0.0995  (5000) \\
mushrooms - 0.5   &  0.01995 $\pm$0.0007  (3173) & \underline{0.01768} $\pm$0.0013  (2936) \\
mushrooms - 0.1   &  0.06362 $\pm$0.0022  (5000) & \underline{0.05136} $\pm$0.0026  (5000) \\
nips - 0.01 &  \underline{21.45218} $\pm$0.0614  (5000) & 21.94275 $\pm$0.0509  (5000) \\
nips - 0.1   &  \underline{2.05141} $\pm$0.0067  (5000) & 2.06286 $\pm$0.0054  (5000) \\
ocr letters - 0.1   &  \underline{4.97384} $\pm$0.1865  (5000) & 5.00803 $\pm$0.2078  (5000) \\
ocr letters - 0.5   &  \underline{4.51704} $\pm$0.1642  (3703) & 4.88637 $\pm$0.2677  (1945) \\
rcv1 - 0.1   &  11.95633 $\pm$0.1226  (5000) & \underline{11.62695} $\pm$0.2792  (5000) \\
rcv1 - 0.5   &  11.36861 $\pm$0.2974  (4953) & \underline{11.27664} $\pm$1.1136  (2947) \\
web - 0.1   &  0.06872 $\pm$0.0007  (5000) & \underline{0.05554} $\pm$0.0003  (5000) \\
\hline
\end{tabular}
\end{sc}
\end{small}
\end{center}
\caption {Average maximal reached cost value with standard deviation on test and training data of various datasets for centered and normal three layer AEs with sigmoid units, cross entropy cost function and half the number of output dimensions than input dimensions. The average number of epochs till convergence is given in brackets.
}
\label{tab:AE_plain}
\end{table}

The benefit of centering in feed forward neural networks for supervised tasks has already been shown  by~\citet{Schraudolph98}.
In this section we want to analyze centering in a special kind of unsupervised feed forward neural networks, namely centered AEs as introduced in Section~\ref{sec:autoencoders}.
We therefore trained normal and centered three layer AEs on the ten big datasets described in Section~\ref{sec:Benchmark_problems}. To avoid trivial solutions we used half the number of output dimensions than input dimensions and tied weights. Since the datasets are binary we used the sigmoid non-linearity in encoder and decoder and the cross entropy cost function. Training was done using plain back propagation for a maximum number of $5000$ epochs without any further modification and early stopping with a look ahead of 5 epochs and we assumed convergence when the cost improvement on the validation set was less than $0.00001$. As for the RBM and DBM experiments, the weight matrices were initialized with random values sampled from a Gaussian with zero mean and a standard deviation of $0.01$ and the biases and offsets were initialized as described in Section~\ref{sec:initialization}. The batch size was set to 100  and the sliding factor to $0.01$.
We used a default learning rate of $0.1$ for all experiments. However, a second set of experiments was performed with a learning rate of $0.5$ when the AEs did not converge after $5000$ or with a learning rate of $0.01$ when the AEs converged rather quickly ($< 500$ epochs). 
Each experiment was repeated 10 times and we calculated the average  maximal reached cost value on test data, the corresponding standard deviation and the average number of epochs needed for convergence.

The results are given in Table~\ref{tab:AE_plain} showing that except for the RCV1 dataset centered AEs  perform clearly better in terms of the average reached cost value on the test data than normal AEs. On the training data normal AEs only perform slightly better on datasets where both models reached very small cost values anyway. We did no show the results for the validation sets since they are almost equivalent to the results for test data.

Interestingly, the result that centering only performs worse on the RCV1 dataset is fully consistent with the findings for RBMs and DBMs.
We inspected the RCV1 dataset and its first and second order statistics but did not find anything conspicuous compared to the other datasets that might explain why for this particular dataset centering is not beneficial.  However, learning is much slower for this dataset when centering is used, which can also be seen by comparing the results for learning rate $0.1$ an $0.5$ in Table~\ref{tab:AE_plain}. 

\section{Conclusion}\label{sec:conclusion}

This work discusses centering in RBMs and DBMs, where centering corresponds to subtracting offset parameters from visible and hidden variables.
Our theoretical analysis yields the following results 
\begin{itemize}
	\item[1.] Centered ANNs and normal ANNs are different parameterizations of the same model class, which justifies the use of centering in arbitrary ANNs (Section~\ref{sec:generalization}).
	\item[2.] The LL gradient of centered RBMs/DBMs is invariant under simultaneous flip of data and offsets, for any offset value in the range of zero to one. This leads to a desired invariance of the LL performance of the model under changes of data representation (Appendix~\ref{sec:ProofInv}).
	\item[3.] Training a centered RBM/DBM can be reformulated as training a normal RBM/DBM with a new parameter update, which we refer to as centered gradient (Section~\ref{sec:centeredGradient} and Appendix~\ref{sec:DerivationCentered}).
	\item[4.] From the new formulation follows that the enhanced gradient is a particular form of centering. That is, the centered gradient becomes equivalent to the enhanced gradient by setting the visible and hidden offsets to the average over model and data mean of the corresponding variable (Section~\ref{sec:centeredGradient} and Appendix~\ref{sec:CenteredToEnhanced}).  
\end{itemize}
Our numerical analysis yielded the following results 
\begin{itemize}
\item[1.]Optimal performance of centered DBMs/RBMs is achieved when both, visible and hidden variables are centered and the offsets are set to their expectations under data or model distribution. 
\item[2.] Centered RBMs/DBMs reach significantly higher LL values than normal binary RBMs/DBMs. As an example, centered RBMs with 500 hidden units achieved an average test LL of -76 on \emph{MNIST} compared to a reported value of -84 for carefully trained normal binary RBMs~\citep{Salakhutdinov-2008, SalakhutdinovMurray-2008, TangSutskever-2011, ChoRaikoEtAl-2013}.
\item[3.]Using the model expectation (as for the enhanced gradient for example) can lead to a severe divergence of the LL when PCD or PT$_c$ is used for sampling. This is caused by the correlation in offset and gradient approximation as discussed in Section~\ref{sec:divergence}. 
\item[4.] Initializing the bias parameters such that the RBM/DBM/AE is initially centered (i.e. $b_i = \sigma^{-1}(\langle x_i \rangle )$) can already improve the performance of a normal binary RBM. However, this initialization leads to a performance still worse than the performance of a centered RBM as shown in this work and is therefore no alternative to centering. 
\item[5.] The divergence can be prevented when an exponentially moving average for the approximations of the offset values is used, which also stabilized the training for other centering variants especially when the mini batch size is small.
\item[6.]Training centered RBMs/DBMs leads to smaller weight norms and larger bias norms compared to normal binary RBMs/DBMs. This supports the hypothesis that when using the standard gradient the mean value is modeled by both weights and biases, while when using the centered gradient the mean values are explicitly modeled by the bias parameters.
\item[7.]The direction of the centered gradient is closer to the natural gradient than that of the standard gradient and the natural gradient is extremely efficient for training RBMs if tractable.
\item[8.] Centered DBMs reach higher LL values than normal DBMs independent of whether pre-training is used or not. Thus pre-training cannot be considered as a replacement for centering.
\item[9.] While pre-training helped normal DBMs on \emph{MNIST} we did no observe an improvement through pre-training for centered DBMs. Furthermore, on all other datasets than \emph{MNIST} pre-training led to lower LL values and the results became worse as longer pre-training was performed for normal and centered DBMs.
\item[10.] The visual inspection of the learned filters, the average second hidden layer activities and reached LL values suggest that normal DBMs have difficulties in making use of the third and higher layers. 
\item[11.] Centering also improved the performance in terms of the optimized loss for AEs, which supports our assumption that centering is beneficial not only for probabilistic models like RBMs and DBMs.
\end{itemize}
Based on our results we recommend to center all units in the network using the data mean and to use an exponentially moving average if the mini-batch size is rather small ($<$~100 for stochastic models and $<$~10 for deterministic models). Furthermore, we do not recommend to use pre-training of DBMs since it often worsen the results.

All results clearly support the superiority of centered RBMs/DBMs and AEs, which we believe will also extend to other models.
Future work might focus on centering in other probabilistic models such as the neural auto-regressive distribution estimator ~\citep{LarochelleMurray-2011} or recurrent neural networks such as long short-term memory~\citep{HochreiterSchmidhuber-1997}.

\acks{We would like to thank Nan Wang for helpful discussions. We also would like to thank Tobias Glasmachers for his suggestions on the natural gradient part.
Asja Fischer was supported by the German Federal Ministry of Education and Research within the National 
Network Computational Neuroscience under grant number 01GQ0951 (Bernstein Fokus “Learning behavioral models: 
From human experiment to technical assistance”). }

\appendix

\section{Proof of Invariance for the Centered RBM Gradient}\label{sec:ProofInv}

In the following we show that the gradient of centered RBMs is invariant to flips of the variables if the corresponding offset parameters flip as well. 
Since training a centered RBM is equivalent to training a normal binary RBM using the centered gradient~(see Appendix~\ref{sec:DerivationCentered}), the proof also holds for the centered gradient.

We begin by formalizing the invariance property in the following definitions.

\begin{definition}
Let there be an RBM with visible variables $\vect X=(X_1,\dots, X_N)$ and hidden variables $\vect H=(H_1,\dots, H_M)$.
The variables $X_i$ and $H_j$ are called flipped if they take the values $\tilde x_i= 1- x_i$
	and $\tilde h_j= 1- h_j$ for any given states $x_i$ and $h_j$.
\end{definition}

\begin{definition}\label{def:invariance}
Let there be a binary RBM with parameters $\vect \theta$ and energy $E$ and another binary RBM with 
parameters $\tilde{\vect \theta}$ and energy $\tilde E$ where some of the variables are flipped, such that
\begin{equation} \label{eqn:sameE}
E(\vect x, \vect h)= \tilde E(\tilde{\vect x},\tilde{\vect h})\enspace,
\end{equation}
for all possible states $(\vect x,\vect h)$ and corresponding flipped states $(\tilde{\vect x},\tilde{\vect h})$, 
where $\tilde  x_i = 1-x_i$, $\tilde  h_j = 1-h_j$ if $X_i$ and $H_j$ are flipped and 
$\tilde  x_i =\ x_i$, $\tilde  h_j = h_j$ otherwise.
The gradient $\nabla \vect \theta $ is called \textbf{flip-invariant} or \textbf{invariant to the flips of the variables}
 if \eqref{eqn:sameE}
still holds after updating $\vect \theta$ and  $\tilde{\vect \theta}$
to $\vect \theta + \eta \nabla \vect \theta  $  and  $\tilde{\vect \theta} + \eta \nabla \tilde{\vect \theta}$, respectively, for an arbitrary learning rate $\eta$. 
\end{definition}

We can now state the following theorem.
\begin{theorem}\label{theorem1}
The gradient of centered RBMs  is invariant to flips of arbitrary variables 
$X_{i_1},\dots X_{i_r}$ and $H_{j_1}, \dots H_{j_s}$
with $\{ i_1, \dots, i_r\} \subset \{1,\dots, N \}$ and $\{ j_1, \dots, j_s\} \subset \{1,\dots, M \}$
 if the corresponding offset parameters $\mu_{i_1},\dots \mu_{i_r}$ and $\lambda_{j_1}, \dots \lambda_{j_s}$ flip as well that is if 
$\tilde{x}_i = 1 - x_i$ implies   $\tilde{\mu}_i = 1 - \mu_i$ and $  \tilde{h}_j =
1 - h_j$ implies $ \tilde{\lambda}_j = 1 - \lambda_j$ .\\
\end{theorem}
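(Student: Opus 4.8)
The plan is to exploit the single structural fact that makes centering special: flipping a variable together with its offset merely negates the corresponding centered quantity. Concretely, if $\tilde x_i = 1-x_i$ and $\tilde\mu_i = 1-\mu_i$ then $\tilde x_i - \tilde\mu_i = -(x_i-\mu_i)$, and likewise for the hidden units. I would therefore encode the chosen flips by two diagonal sign matrices $\vect{F}\in\{-1,+1\}^{N\times N}$ and $\vect{G}\in\{-1,+1\}^{M\times M}$, with $F_{ii}=-1$ exactly when $X_i$ is flipped (and $+1$ otherwise), and analogously for $G_{jj}$. Writing $\vect{x}^c=\vect{x}-\vect{\mu}$ and $\vect{h}^c=\vect{h}-\vect{\lambda}$, the flip then reads $\tilde{\vect{x}}^c=\vect{F}\vect{x}^c$ and $\tilde{\vect{h}}^c=\vect{G}\vect{h}^c$, and since $\vect{F}^2=\vect{I}$ and $\vect{G}^2=\vect{I}$ these relations are involutions.

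First I would determine the parameter correspondence forced by Equation~\eqref{eqn:sameE}. Substituting $\tilde{\vect{x}}^c=\vect{F}\vect{x}^c$ and $\tilde{\vect{h}}^c=\vect{G}\vect{h}^c$ into the centered energy~\eqref{eqn:energy} of the flipped RBM and demanding $E=\tilde E$ for all states yields $\tilde{\vect{b}}=\vect{F}\vect{b}$, $\tilde{\vect{c}}=\vect{G}\vect{c}$, and $\tilde{\vect{W}}=\vect{F}\vect{W}\vect{G}$. Note that because the centered coordinates only change sign, the equality holds exactly with no additive constant, a convenience of centering that is absent for the uncentered energy.

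Next I would show that the centered gradient obeys the very same transformation. For the weights, using $\tilde x_i^c\,\tilde h_j^c = F_{ii}G_{jj}\,x_i^c h_j^c$ together with the fact that the flipped data and model distributions are the push-forwards of the originals under the flip, I obtain $\nabla\tilde{\vect{W}}=\vect{F}(\nabla\vect{W})\vect{G}$ directly from~\eqref{eqn:grad_W}. For the biases I would argue component-wise from~\eqref{eqn:grad_b}: for a flipped index $i$, $\langle\tilde x_i\rangle_d-\langle\tilde x_i\rangle_m = (1-\langle x_i\rangle_d)-(1-\langle x_i\rangle_m)=-(\langle x_i\rangle_d-\langle x_i\rangle_m)$, so $\nabla\tilde{\vect{b}}=\vect{F}\nabla\vect{b}$, and symmetrically $\nabla\tilde{\vect{c}}=\vect{G}\nabla\vect{c}$. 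Since $\tilde{\vect{b}}+\eta\nabla\tilde{\vect{b}}=\vect{F}(\vect{b}+\eta\nabla\vect{b})$ and likewise for $\vect{c}$ and $\vect{W}$, all three correspondences are preserved by any gradient step of size $\eta$, so $E=\tilde E$ continues to hold after the update, which is precisely the flip-invariance required by Definition~\ref{def:invariance}.

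The main obstacle I anticipate is the bias gradient rather than the weight gradient. The weight update is genuinely a function of the centered coordinates, which flip sign cleanly, but~\eqref{eqn:grad_b} is written in the raw variable $\vect{x}$, where the flip produces the constant $1$ from $1-x_i$. The crux is to verify that this constant cancels between the data and the model expectation, which it does precisely because $\langle\cdot\rangle_d$ and $\langle\cdot\rangle_m$ enter with opposite signs, and to confirm that the offset terms dropped in passing from $\langle\vect{x}-\vect{\mu}\rangle$ to $\langle\vect{x}\rangle$ do not secretly reintroduce an inconsistency under the flip. Once this cancellation is handled carefully, the remainder is routine bookkeeping with the sign matrices $\vect{F}$ and $\vect{G}$.
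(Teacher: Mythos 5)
Your proposal is correct and follows essentially the same route as the paper's proof: you derive the parameter correspondence $\tilde{\vect{b}}=\vect{F}\vect{b}$, $\tilde{\vect{c}}=\vect{G}\vect{c}$, $\tilde{\vect{W}}=\vect{F}\vect{W}\vect{G}$ from the energy equality, show that each gradient component picks up exactly the same sign factor (including the cancellation of the constant $1$ between the data and model expectations in the bias gradients), and conclude that the correspondence is preserved by the update. The only difference is notational --- your diagonal sign matrices $\vect{F},\vect{G}$ package the paper's componentwise $(-1)^{f_i+g_j}$ bookkeeping into matrix form.
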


\begin{proof}
Let there be a centered RBM with parameters $\vect \theta $  and energy  $E$ and
another centered RBM where some of the variables are flipped with parameters $\tilde{\vect \theta}$ 
and energy $\tilde E$, such that $E(\vect x, \vect h) = \tilde E(\tilde{\vect x},\tilde{\vect h})$ 
for  any $(\vect x,\vect h )$ and corresponding $(\tilde{\vect x},\tilde{\vect h})$.
W.l.o.g. it is sufficient to show the invariance of the gradient when flipping 
only one visible variable $X_i$, one  hidden variable $H_j$, or both of them, since each
derivative with respect to a single parameter can only be affected by the flips of at most one 
hidden and one visible variable, which follows from the bipartite structure of the model.

We start by investigating how the energy changes when the variables are flipped. For this purpose we rewrite the energy in Equation~\eqref{eqn:energy} in summation notation given by
\begin{eqnarray}
  	E\left( \vect{x}, \vect{h} \right)
  	&\stackrel{ \eqref{eqn:energy}}{=}&
  	- \sum_i \left(x_i - \mu_i \right) b_i -  \sum_j \left(h_j - \lambda_j \right) c_j
  	- \sum_{ij} \left(x_i - \mu_i \right)  w_{ij}\left(h_j - \lambda_j \right)\enspace.
	\label{eqn:energyReform}
\end{eqnarray}

To indicate a variable flip we introduce the binary parameter $f_i$ that takes the value $1$ if the corresponding variable $X_i$ and the corresponding offset $\mu_i$ are flipped and $0$ otherwise.
Similarly, $g_j = 1$  if $H_j$ and $\lambda_j$ are flipped and $g_j = 0$ otherwise.
Now we use $\mathcal{E}^{f_i=1 \wedge g_j=1}$ to denote the terms of the energy \eqref{eqn:energyReform} that are affected by a flip of the variables $X_i$ and $H_j$.
Analogously, $\mathcal{E}^{f_i=1 \wedge g_j=0}$ and $\mathcal{E}^{f_i=0 \wedge g_j=1}$ denote the terms affected by a flip of either $X_i$ or $H_j$ respectively.
For flipped values $\tilde x_i$, $\tilde h_j$ these terms get
\begin{eqnarray}
\mathcal{E}^{f_i=1 \wedge g_j=1}   	&\stackrel{ \eqref{eqn:energyReform}}{=}& - (\tilde x_i - \tilde \mu_i ) b_i - (\tilde x_i - \tilde \mu_i ) \sum_{k \not = j} w_{ik}( h_k - \lambda_k ) \nonumber\\
  	&&- (\tilde h_j - \tilde \lambda_j ) c_j - (\tilde h_j - \tilde \lambda_j ) \sum_{u \not = i} w_{uj} (x_u - \mu_u ) \nonumber\\
  	&& - (\tilde x_i - \tilde \mu_i ) w_{ij}(\tilde h_j - \tilde \lambda_j )\nonumber\\
	&=&	- ((1-x_i) - (1-\mu_i) ) b_i - ((1-x_i) - (1-\mu_i) ) \sum_{k \not = j} w_{ik} (h_k - \lambda_k ) \nonumber\\
  	&&- ((1-h_j) - (1-\lambda_j) ) c_j - ((1-h_j) - (1-\lambda_j) ) \sum_{u \not = i} w_{uj} (x_u - \mu_u ) \nonumber\\
  	&& - ((1-x_i) - (1-\mu_i) ) w_{ij}((1-h_j) - (1-\lambda_j) )\nonumber\\
  	&=&  ( x_i -  \mu_i ) b_i + ( x_i -  \mu_i ) \sum_{k \not = j} w_{ik}( h_k - \lambda_k ) \nonumber\\
  	&& ( h_j -  \lambda_j ) c_j + ( h_j -  \lambda_j ) \sum_{u \not = i} w_{uj} (x_u - \mu_u ) \nonumber\\
  	&& - ( x_i -  \mu_i ) w_{ij}( h_j -  \lambda_j )\nonumber\enspace,
\end{eqnarray}
and analogously
\begin{eqnarray}
\mathcal{E}^{f_i=1 \wedge g_j=0}   	&\stackrel{ \eqref{eqn:energyReform}}{=}&
- \left(\tilde x_i - \tilde \mu_i \right) b_i - \left(\tilde x_i - \tilde \mu_i \right) \sum_{j} w_{ij}\left(h_j - \lambda_j \right) \nonumber \\
&=&
	\left(x_i -\mu_i \right) b_i + \left(x_i -\mu_i \right) \sum_{j} w_{ij}\left(h_j - \lambda_j \right)  \enspace,\nonumber
\end{eqnarray}
and
\begin{eqnarray}
\mathcal{E}^{f_i=0 \wedge g_j=1}   	&\stackrel{ \eqref{eqn:energyReform}}{=}&
	\left(h_j - \lambda_j \right) c_j + \left(h_j-\lambda_j \right) \sum_{i} w_{ij} \left(x_i - \mu_i \right) \enspace. \nonumber
\end{eqnarray}

From the fact that the terms differ from the corresponding terms in \eqref{eqn:energyReform}
only in the sign 
and that $E(\vect x, \vect h) = \tilde E(\tilde{\vect x},\tilde{\vect h})$ holds for any $(\vect x,\vect h )$ and corresponding $(\tilde{\vect x},\tilde{\vect h})$, it follows that the parameters $\tilde{\vect \theta}$ must be given by
\begin{eqnarray}
	\tilde{w}^{f_i \wedge g_j}_{ij} &=& (-1)^{f_i + g_j} w_{ij}\enspace,\label{eqn:change_W}\\
  \tilde{b}^{f_i \wedge g_j}_i &=& (-1)^{f_i } b_i \enspace,\label{eqn:change_b}\\
  \tilde{c}^{f_i \wedge g_j}_j &=& (-1)^{g_j} c_j \enspace,\label{eqn:change_c}\\
  \tilde{\mu}^{f_i \wedge g_j}_i &=& \mu_i\nonumber\enspace,\\
  \tilde{\lambda}^{f_i \wedge g_j}_j &=& \lambda_j\nonumber\enspace.
\end{eqnarray}
The LL gradient for the model without flips is given by Equations \eqref{eqn:grad_W} - \eqref{eqn:grad_c}. We now consider the LL gradients for the three possible flipped versions. 
If $X_i$ and $H_j$ are flipped the derivatives w.r.t. $w_{ij}, b_i,$, and $c_j$ are given by
\begin{eqnarray}
	\nabla \tilde{w}^{f_i=1 \wedge g_j=1}_{ij} &=& \langle (1-x_i-(1-\mu_i))(1-h_j-(1-\lambda_j)) \rangle_{d} \nonumber\\
	&& - \langle (1-x_i-(1-\mu_i))(1-h_j-(1-\lambda_j)) \rangle_{m}\nonumber\\
	&=& \langle (-x_i+\mu_i)(-h_j+\lambda_j) \rangle_{d}  - \langle (-x_i+\mu_i)(-h_j+\lambda_j) \rangle_{m}\nonumber\\
	&=& \langle (x_i-\mu_i)(h_j-\lambda_j) \rangle_{d}  - \langle (x_i-\mu_i)(h_j-\lambda_j) \rangle_{m}\nonumber\\
	&=& (-1)^{1 + 1} \nabla w_{ij}\nonumber\enspace,\\
	\nabla \tilde{b}^{f_i=1 \wedge g_j=1}_i &=& \langle 1-x_i - (1-\mu_i)\rangle_{d} - \langle 1-x_i - (1-\mu_i) \rangle_{m} \nonumber\\
	&=& -\langle x_i\rangle_{d} + \mu_i + \langle x_i \rangle_{m} -\mu_i \nonumber\\
    &=& (-1)^{1}\nabla b_i\nonumber \enspace,
	\label{eqn:gradbf}\\
	\nabla\tilde{c}^{f_i=1 \wedge g_j=1}_j &=& \langle 1 - h_j - (1-\lambda_j)\rangle_{d} - \langle 1 - h_j -(1-\lambda_j)\rangle_{m} \nonumber\\
	&=& -\langle h_j\rangle_{d} + \lambda_j + \langle h_j \rangle_{m} -\lambda_j \nonumber\\
    &=& (-1)^{1}\nabla c_j\nonumber \enspace.
	\label{eqn:gradcf}
\end{eqnarray}
If $X_i$ is flipped they are given by
\begin{eqnarray}
	\nabla \tilde{w}^{f_i=1 \wedge g_j=0}_{ij} &=& \langle (1-x_i-(1-\mu_i))(h_j-\lambda_j) \rangle_{d} \nonumber\\&&- \langle (1-x_i-(1-\mu_i))(h_j-\lambda_j) \rangle_{m}\nonumber\\
	&=& \langle (-x_i+\mu_i)(h_j-\lambda_j) \rangle_{d} - \langle (-x_i+\mu_i)(h_j-\lambda_j) \rangle_{m}\nonumber\\
	&=& -\left( \langle (x_i-\mu_i)(h_j-\lambda_j) \rangle_{d} - \langle (x_i-\mu_i)(h_j-\lambda_j) \rangle_{m}\right) \nonumber\\
	&=& (-1)^{1+0}\nabla w_{ij}\nonumber\enspace,\\
	\nabla \tilde{b}^{f_i=1 \wedge g_j=0}_i & = & \nabla \tilde{b}^{f_i=1 \wedge g_j=1}_i \nonumber\\ 
	&=& (-1)^{1}\nabla b_i\nonumber\enspace,\\
	\nabla \tilde{c}^{f_i=1 \wedge g_j=0}_j & = &\nabla \tilde{c}^{f_i=0 \wedge g_j=0}_i\nonumber\\ 
	&=& (-1)^{0}\nabla c_j \nonumber\enspace,
\end{eqnarray}
and due to the symmetry of the model the derivatives if $H_j$ is flipped are given by
\begin{eqnarray}
	\nabla \tilde{w}^{f_i=0 \wedge g_j=1}_{ij} 
	&=&  (-1)^{0+1}\nabla w_{ij}\nonumber\enspace,\\
	\nabla \tilde{b}^{f_i=0 \wedge g_j=1}_i 
	&=& (-1)^{0}\nabla b_i\nonumber\enspace,\\
	\nabla \tilde{c}^{f_i=0 \wedge g_j=1}_j 
	&=& (-1)^{1}\nabla c_j \nonumber\enspace.
\end{eqnarray}
Comparing the results with Equations~\eqref{eqn:change_W}~-~\eqref{eqn:change_c} shows that the gradient underlies the same sign changes under variable flips as the parameters. Thus, it holds for the updated parameters that
\begin{eqnarray}
  \tilde{w}^{f_i \wedge g_j}_{ij} + \eta \nabla \tilde{w}^{f_i \wedge g_j}_{ij}   
  &\stackrel{\eqref{eqn:change_W}}{=}& (-1)^{f_i + g_j} \left( w_{ij} + \eta \nabla  w_{ij} \right)\nonumber\enspace,\\
  \tilde{b}^{f_i \wedge g_j}_i + \eta \nabla  \tilde{b}^{f_i \wedge g_j}_i 
  &\stackrel{\eqref{eqn:change_b}}{=}& (-1)^{f_i + g_j} \left( b_{i} + \eta \nabla  b_{i} \right)\nonumber\enspace,\\
  \tilde{c}^{f_i \wedge g_j}_j + \eta \nabla  \tilde{c}^{f_i \wedge g_j}_j
  &\stackrel{\eqref{eqn:change_c}}{=}& (-1)^{f_i + g_j} \left( c_{j} + \eta \nabla  c_{j} \right)\nonumber\enspace,
\end{eqnarray}
showing that  $E(\vect x, \vect h) = \tilde E(\tilde{\vect x},\tilde{\vect h})$ is still guaranteed
and thus that the gradient of centered RBMs is flip-invariant according to Definition~\ref{def:invariance}. 
\end{proof}

Theorem~\ref{theorem1} holds for any value from zero to one for $\mu_i$ and $\lambda_j$, if it is guaranteed that the offsets flip simultaneously with the corresponding variables. 
In practice one wants the model to perform equivalently on any flipped version of the dataset without knowing which version is presented. 
This holds if we set the offsets to the expectation value of the corresponding variables under any distribution, since when $\mu_i = \sum_{x_i}
p\left(x_i\right)x_i$, flipping $X_i$ leads to $\tilde{\mu_i} = \sum_{x_i}
p\left(x_i\right)\left(1-x_i\right) = 1 - \sum_{x_i} p\left(x_i\right)x_i = 1 -
\mu_i$ and similarly for $\lambda_j$, $h_j$.

Due to the structural similarity this proof also holds for DBMs. By replacing $\vect{x}$ by $\vect{h^l}$ (which denotes the state of the variables in the $l$th hidden layer) and $\vect{h}$ by 
$\vect{h^{l+1}}$ (denoting the state of the variables in the $l+1$th hidden layer)
we can prove the invariance property for the derivatives of the parameters in layer $l$ and $l+1$.

\section{Derivation of the Centered Gradient}\label{sec:DerivationCentered}

In the following we show that the gradient of centered RBMs can be reformulated as an alternative update for the parameters of a normal binary RBM, which we name `centered gradient'.

A normal binary RBM with energy $E(\vect{x}, \vect{h}) = - \vect{x}^T\vect{b} - \vect{c}^T\vect{h} - \vect{x}^T\vect{W}\vect{h}$ can be transformed into a centered
RBM with energy $\tilde{E}(\vect{x}, \vect{h}) = 	- \left(\vect{x} - \vect{\mu} \right)^T \tilde{\vect{b}}
  	- \tilde{\vect{c}}^T\left(\vect{h} - \vect{\lambda} \right)
  	- \left(\vect{x} - \vect{\mu} \right)^T\tilde{\vect{W}}\left(\vect{h} - \vect{\lambda} \right)$ by
  	the following parameter transformation  
\begin{eqnarray}
	\tilde{\vect{W}} &\stackrel{\eqref{eqn:transform_W}}{=}& \vect{W} \label{eqn:transformN_W}\enspace,\\
	\tilde{\vect{b}} &\stackrel{\eqref{eqn:transform_b}}{=}&
	\vect{b} +  \vect{W} \vect{\lambda} \label{eqn:transformN_b}\enspace,\\
  \tilde{\vect{c}} &\stackrel{\eqref{eqn:transform_c}}{=}&
  \vect{c} + \vect{W}^T \vect{\mu}\label{eqn:transformN_c}\enspace,
  \end{eqnarray}
which guarantees that $E(\vect{x}, \vect{h}) =  \tilde{E}(\vect{x}, \vect{h}) + const$ 
for all $(\vect x, \vect h) \in \{0,1\}^{n+m}$ and thus that the modeled distribution stays the same.

Updating the parameters of the centered RBM according to  Eq.~\eqref{eqn:grad_W} -- \eqref{eqn:grad_c} with a learning rate $\eta$ leads to an updated set of parameters $\tilde{\vect W}_u, \tilde{\vect b}_u, \tilde{\vect c}_u$ given by
 \begin{eqnarray} 
	\tilde{\vect{W}_u} &\stackrel{\eqref{eqn:grad_W}}{=}&  \tilde{\vect{W}}  + \eta (  \langle ( \vect{x} - \vect{\mu} ) (\vect{h} - \vect{\lambda})^T \rangle_{d}  
	  - \langle ( \vect{x} - \vect{\mu} ) (\vect{h} - \vect{\lambda})^T \rangle_{m})  \label{eqn:update_W}\enspace,\\
	\tilde{\vect{b}}_u &\stackrel{\eqref{eqn:grad_b}}{=}&
	 \tilde{\vect{b}} + \eta (\langle  \vect{x} \rangle_{d} - \langle  \vect{x} \rangle_{m} )  \label{eqn:update_b} \enspace,\\
  \tilde{\vect{c}}_u &\stackrel{\eqref{eqn:grad_c}}{=}&
    \tilde{\vect{c}}+ \eta ( \langle \vect{h}  \rangle_{d} - \langle \vect{h} \rangle_{m}) \label{eqn:update_c} \enspace .
  \end{eqnarray}
One can now transform the updated centered RBM back to a normal RBM by applying the inverse transformation
 to the updated parameters, which finally leads to the centered gradient.
 \begin{eqnarray}
\vect{W}_u &\stackrel{\eqref{eqn:transformN_W}}{=}& \tilde{\vect{W}}_u \nonumber \\
&\stackrel{ \eqref{eqn:transformN_W},\eqref{eqn:update_W}}{=}& \vect{W}  + \eta \underbrace{(  \langle ( \vect{x} - \vect{\mu} ) (\vect{h} - \vect{\lambda})^T \rangle_{d})  
	  - \langle ( \vect{x} - \vect{\mu} ) (\vect{h} - \vect{\lambda})^T \rangle_{m})}_{\stackrel{\eqref{eqn:cgradW}}{=}\nabla_c{\vect{W}}}\label{eqn:new_W} \enspace,\\	   
\vect{b}_u &\stackrel{\eqref{eqn:transformN_b}}{=}& \tilde{\vect{b}}_u -  \vect{W}_u \vect{\lambda} \nonumber\\
	&\stackrel{\eqref{eqn:update_b},\eqref{eqn:new_W}}{=}& \tilde{\vect{b}} + \eta (\langle  \vect{x} \rangle_{d} - \langle  \vect{x} \rangle_{m} ) -  (\vect{W} + \eta \nabla_c{\vect{W}} ) 
	 \vect{\lambda} \nonumber\\
	 &\stackrel{\eqref{eqn:transformN_b}}{=}& \vect{b} +  \vect{W} \vect{\lambda} + \eta (\langle  \vect{x} \rangle_{d} - \langle  \vect{x} \rangle_{m} ) -  \vect{W}  \vect{\lambda}  - \eta \nabla_c{\vect{W}}  \vect{\lambda} 
	\nonumber\\
	 &=& \vect{b}  + \eta \underbrace{(\langle  \vect{x} \rangle_{d} - \langle  \vect{x} \rangle_{m}  - \nabla_c{\vect{W}}  \vect{\lambda )}}_{\stackrel{\eqref{eqn:cgradb}}{=}\nabla_c{\vect{b}}} \enspace,
	\label{eqn:new_b}\\
   \vect{c}_u &\stackrel{\eqref{eqn:transformN_c}}{=}&  \tilde{\vect{c}}_u - \vect{W}_u^T \vect \mu \nonumber \\
  &\stackrel{\eqref{eqn:update_c},\eqref{eqn:new_W}}{=}& \tilde{\vect{c}}+ \eta ( \langle \vect{h}  \rangle_{d} - \langle \vect{h} \rangle_{m}) - (\vect{W} + \eta \nabla_c{\vect{W}} ) 
	 \vect{\mu} \nonumber \\
	 &\stackrel{\eqref{eqn:transformN_c}}{=}& \vect{c} +  \vect{W} \vect{\mu} + \eta (\langle  \vect{h} \rangle_{d} - \langle  \vect{h} \rangle_{m} ) -  \vect{W}  \vect{\mu}  - \eta \nabla_c{\vect{W}}  \vect{\mu} 
	\nonumber\\
	 &=& \vect{c}  + \eta \underbrace{(\langle  \vect{h} \rangle_{d} - \langle  \vect{h} \rangle_{m}  - \nabla_c{\vect{W}}  \vect{\mu )}}_{\stackrel{\eqref{eqn:cgradc}}{=}\nabla_c{\vect{c}}} \enspace.\label{eqn:new_c}
  \end{eqnarray}
The braces in Equation~\eqref{eqn:new_W}~-~\eqref{eqn:new_c} mark the centered gradient given by Equations~\eqref{eqn:cgradW}~-~\eqref{eqn:cgradc}.

\section{Enhanced Gradient as Special Case of the Centered Gradient} \label{sec:CenteredToEnhanced}
In the following we show that the enhanced gradient can be derived as a special case of the centered gradient. By 
setting $\vect{\mu} = \frac{1}{2} \left(\langle  \vect{x} \rangle_{d} + \langle  \vect{x} \rangle_{m} \right)$ and $\vect{\lambda} = \frac{1}{2} \left(\langle  \vect{h} \rangle_{d} + \langle  \vect{h} \rangle_{m} \right)$ we get
\begin{eqnarray}
\nabla_c \vect{W} &\stackrel{\eqref{eqn:cgradW}}{=}&  \langle ( \vect{x} - \vect{\mu} ) (\vect{h} - \vect{\lambda})^T \rangle_{d} - \langle ( \vect{x} - \vect{\mu} ) (\vect{h} - \vect{\lambda})^T \rangle_{m} \nonumber \\
&=& \langle \vect x \vect h^T  \rangle_d  
- \langle  \vect x  \rangle_d  \vect \lambda^T 
- \vect \mu \langle  \vect h^T  \rangle_d  
+ \vect{\mu} \vect \lambda^T  
- \langle \vect x \vect h^T  \rangle_m 
+\langle  \vect x  \rangle_m \vect \lambda^T 
+ \vect \mu \langle  \vect h^T  \rangle_m 
- \vect{\mu} \vect \lambda^T \nonumber\\
&=&  \langle \vect x \vect h^T  \rangle_d  
- \frac{1}{2}\langle  \vect x  \rangle_d   \left(\langle  \vect{h} \rangle_{d} + \langle  \vect{h} \rangle_{m} \right)^T 
- \frac{1}{2} \left(\langle  \vect{x} \rangle_{d} + \langle  \vect{x} \rangle_{m} \right) \langle  \vect h^T  \rangle_d  \nonumber\\
&&
- \langle \vect x \vect h^T  \rangle_m 
+ \frac{1}{2}\langle  \vect x  \rangle_m \left(\langle  \vect{h} \rangle_{d} + \langle  \vect{h} \rangle_{m} \right)^T 
+ \frac{1}{2} \left(\langle  \vect{x} \rangle_{d} + \langle  \vect{x} \rangle_{m} \right) \langle  \vect h^T  \rangle_m  \nonumber\\
\nonumber\enspace
&=&  \langle \vect x \vect h^T  \rangle_d  
- \frac{1}{2}\langle  \vect x  \rangle_d \langle  \vect{h}^T  \rangle_{d} - \frac{1}{2} \langle  \vect x  \rangle_d \langle  \vect{h}^T  \rangle_{m}
- \frac{1}{2} \langle  \vect{x} \rangle_{d} \langle  \vect h^T  \rangle_d - \frac{1}{2} \langle  \vect{x} \rangle_{m} \langle  \vect h^T  \rangle_d  \nonumber\\
&&
- \langle \vect x \vect h^T  \rangle_m 
+ \frac{1}{2}\langle  \vect x  \rangle_m \langle  \vect{h}^T  \rangle_{d} + \frac{1}{2}\langle  \vect x  \rangle_m \langle  \vect{h}^T  \rangle_{m}
+ \frac{1}{2} \langle  \vect{x} \rangle_{d} \langle  \vect h^T  \rangle_m + \frac{1}{2} \langle  \vect{x} \rangle_{m}  \langle  \vect h^T  \rangle_m  \nonumber\\
\nonumber\enspace
&=&  \langle \vect x \vect h^T  \rangle_d  
- \langle  \vect x  \rangle_d \langle  \vect{h}^T  \rangle_{d} 
- \langle \vect x \vect h^T  \rangle_m 
+ \langle  \vect x  \rangle_m \langle  \vect{h}^T  \rangle_{m}   \nonumber\\
\nonumber\enspace
&=&  \langle ( \vect{x} - \langle  \vect x  \rangle_d ) (\vect{h} - \langle  \vect h \rangle_d)^T \rangle_{d} - \langle ( \vect{x} - \langle  \vect x  \rangle_m ) (\vect{h} - \langle  \vect h  \rangle_m)^T \rangle_{m} \nonumber \\
&\stackrel{\eqref{enhanced_W}}{=}&\nabla_e \vect{W} \nonumber\enspace ,
\end{eqnarray}
and for the derivatives with respect to the bias parameters follows directly that
\begin{eqnarray}
\nabla_c \vect{b} &\stackrel{\eqref{eqn:cgradb}}{=}&
\langle  \vect{x} \rangle_{d} - \langle  \vect{x} \rangle_{m} - \nabla_e \vect{W} \vect \lambda \nonumber\\
&=& \langle  \vect{x} \rangle_{d} - \langle  \vect{x} \rangle_{m} - \nabla_e \vect{W} \frac{1}{2} \left(\langle  \vect{h} \rangle_{d} + \langle  \vect{h} \rangle_{m} \right) \nonumber\\
&\stackrel{\eqref{enhanced_b}}{=}&\nabla_e \vect{b} \nonumber\enspace,\\
\nabla_c \vect{c} &\stackrel{\eqref{eqn:cgradc}}{=}&
 \langle \vect{h}  \rangle_{d} - \langle \vect{h} \rangle_{m} - {\nabla_e \vect{W}}^T \vect \mu \nonumber\\
&=&\langle \vect{h}  \rangle_{d} - \langle \vect{h} \rangle_{m} - {\nabla_e \vect{W}}^T \frac{1}{2} \left(\langle  \vect{x} \rangle_{d} + \langle  \vect{x} \rangle_{m} \right)\nonumber\\
&\stackrel{\eqref{enhanced_c}}{=}&\nabla_e \vect{c} \nonumber\enspace .
\end{eqnarray} 

\bibliography{centering_RBM}

\end{document}